\newcommand{\norm}[1]{\left\| #1 \right\|}
\newcommand{\RR}{\mathds{R}}
\newcommand{\xv}{\mathbf{x}}
\newcommand{\wv}{\mathbf{w}}
\newcommand{\zv}{\mathbf{z}}
\newcommand{\Xc}{\mathcal{X}}
\newcommand{\Fs}{\mathsf{F}}
\newcommand{\fb}{\mathbf{f}}
\newcommand{\rsf}{\mathsf{r}}
\newcommand{\dsf}{\mathsf{d}}
\newcommand{\cl}{\mathop{\mathrm{cl}}}
\newcommand{\intr}{\mathop{\mathrm{int}}}
\newcommand{\bd}{\mathop{\mathrm{bd}}}
\newcommand{\eg}{e.g.\xspace}
\newcommand{\ie}{i.e.\xspace}
\newcommand{\cf}{c.f.\xspace}
\newcommand{\vs}{vs.\xspace}
\newcommand{\wrt}{w.r.t.\xspace}
\newcommand{\PP}{\mathbb{P}}
\newcommand{\EE}{\mathbb{E}}
\newcommand{\Xb}{\mathbf{X}}
\newcommand{\Rc}{\mathcal{R}}
\newcommand{\Rs}{\mathsf{R}}
\newcommand{\Lc}{\mathcal{L}}
\newcommand{\where}{~\mathrm{where}~}
\newcommand{\yhat}{\hat{y}}
\newcommand{\one}{\mathbf{1}}
\newcommand{\msf}{\mathsf{m}}
\newcommand{\sign}{\mathop{\mathrm{sign}}}
\newcommand{\argmax}{\arg\max}
\newcommand{\var}{\mathrm{VaR}}
\newcommand{\cvar}{\mathrm{CVaR}}
\newcommand{\srm}{\mathrm{SRM}}
\newcommand{\rmd}{\mathrm{d}}
\newcommand{\parencite}{\citep}
\newcommand{\textcite}{\citet}
\newtheorem{theorem}{Theorem}
\newtheorem{definition}{Definition}
\begin{document}
\title[Trade-off between Minimum and Average Margin]{Understanding Adversarial Robustness: The Trade-off between Minimum and Average Margin}

\author{Kaiwen Wu}
\address{University of Waterloo}
\email{kaiwen.wu@uwaterloo.ca}

\author{Yaoliang Yu}
\address{University of Waterloo}
\email{yaoliang.yu@uwaterloo.ca}

\begin{abstract}
Deep models, while being extremely versatile and accurate, are vulnerable to adversarial attacks: slight perturbations that are imperceptible to humans can completely flip the prediction of deep models. Many attack and defense mechanisms have been proposed, although a satisfying solution still largely remains elusive. In this work, we give strong evidence that during training, deep models maximize the minimum margin in order to achieve high accuracy, but at the same time decrease the \emph{average} margin hence hurting robustness. Our empirical results highlight an intrinsic trade-off between accuracy and robustness for current deep model training. To further address this issue, we propose a new regularizer to explicitly promote average margin, and we verify through extensive experiments that it does lead to better robustness. Our regularized objective remains Fisher-consistent, hence asymptotically can still recover the Bayes optimal classifier. 
\end{abstract}

\maketitle


\section{Introduction}
\label{sec:intro}

Traditionally, training more accurate classifiers has been much of the focus of machine learning research. More recently, as machine learning models start to penetrate into safety-critical applications, their robustness against random or even adversarial manipulations has drawn a lot of attention. The work of \textcite{SzegedyZSBEGF14} demonstrated surprisingly that it is possible to craft very minimal changes to an input image that (a) is clearly not perceivable by humans but (b) can completely flip the predictions of state-of-the-art deep models. Such detrimental perturbations are called adversarial examples and have raised serious concerns on the safety of deep models.

Since the work of \citet{SzegedyZSBEGF14}, a sequence of works have proposed new attack algorithms to generate adversarial examples \citep[\eg][]{GoodfellowSS15, moosavi2016deepfool, CarliniWagner17a,ChenZSYH17,HashemiCK18,DuFYCT18,PapernotMGJCS17,ZantedeschiNR17}, as well as defensive techniques \citep[\eg][]{PapernotMWJS16, cisse2017parseval, MadryMSTV18,GoodfellowMP18,GuoZZC18} to train more robust models. 
There even appears to be an arm race between attack and defense techniques: new defense techniques are shown nonrobust under stronger attacks shortly after being proposed \parencite{AthalyeCW18}. Thus, a line of research focusing on provable certification of (non)robustness has emerged. 
For example, one can use the Lipschitz constant of the network to provide a lower bound on the amount of needed adversarial perturbations \parencite[\eg][]{HeinAndriushchenko17, WengZCYSGHD18, zhang2018efficient, WengZCSHBDD18}. More recently, certification methods base on mixed integer programming (MIP) \citep{tjeng2018evaluating, singh2018robustness} have been proposed to provide more accurate lower bounds. However, due to the NP-hardness of MIP, convex relaxations \parencite{wong2018scaling, WongKolter18, raghunathan2018semidefinite, raghunathan2018certified, SinghGMPV18,GowalDSBQUAMK18} have been popular for scaling these provable certification methods to larger models. 

In this paper, we study adversarial robustness from a margin perspective. The notion of margin distribution  dates back to \citep{Breiman99,SchapireFBL98,garg2002generalization, garg2003distribution}, which provide generalization bounds based on the margin distribution. Later on, algorithms that control margin distribution for better generalization performance have been proposed \citep{zhang2017multi}. In this work, however, we connect the notion of margin with adversarial robustness, and we reveal a surprising trade-off between minimum and average margin in standard deep model training. In particular, the recent result in \cite{soudry2018the} implies that on a linearly separable dataset a linear classifier trained with (stochastic) gradient descent converges to the SVM solution, under mild conditions on the loss function. Since SVM explicitly maximizes the minimum margin, any linear classifier would eventually achieve the same. However, our experiments reveal that the average margin is greatly reduced during training. In other words, models maximize the minimum margin at the expense of reducing the average margin hence becoming more susceptible to adversarial attacks. This surprising phenomenon was also confirmed on linearly nonseparable datasets and a variety of nonlinear classifiers. Interestingly, a few defense methods based on margin maximization have been proposed \citep{lee2018towards,croce2019provable,yan2018deep}, which through our work, should be understood as maximizing the average margin, as opposed to the minimum margin.

To overcome the trade-off between minimum margin and average margin in standard training,
we propose the standard training objective with an average-margin promoting regularizer. We prove that the regularized objective remains Fisher consistent, if the original loss is so. This means as sample size grows, the classifier obtained through optimizing our regularized objective still approaches the Bayes optimal classifier, while potentially is much more robust than standard training. Our regularizer can be easily extended to multiclass and nonlinear classifiers. We confirm the latter point through extensive experiments on two standard datasets and a variety of different models and settings. 

To summarize, we make the following contributions in this work:
\begin{itemize}[leftmargin=*, noitemsep, nolistsep]
	\item We reveal the intrinsic trade-off between minimum margin and average margin;
	\item We propose a new regularizer that explicitly promotes average margin and retains Fisher consistency;
	\item We perform extensive experiments to confirm the effectiveness of our new regularizer.
\end{itemize}


\section{Background}
\label{sec:bg}

We consider the multi-category classification problem. Given a set of $n$ training instances $\{(\xv_i, y_i) \in \Xc \times [c]: i=1, \ldots, n\}$, where the feature domain $\Xc\subseteq \RR^{d}$ and $[c] := \{1, \ldots, c\}$ with $c \geq 2$ the number of categories, we are interested in finding a classifier $h: \Xc \to [c]$, or equivalently, a set  partition $\Fs_1, \ldots, \Fs_c$ of the domain $\Xc$ such that $\cup_k \Fs_k = \Xc$ and for all $k\ne l$, $\Fs_k \cap \Fs_l = \emptyset$. Often in practice, the classifier $h$ is learned through a vector-valued function $\fb: \Xc \to \RR^c$, with the argmax prediction rule employed to predict the label $\hat y$ of a test sample $\xv$ as follows:
	\begin{align}
	\label{eq:pred}
	\hat y = \hat y(\xv) := \argmax_{k=1,\ldots, c} f_k(\xv), \qquad
	f_k: \Xc \to \RR, \qquad \fb = (f_1, \ldots, f_c).
	\end{align}
	As is well-known, the two representations of a classifier, \ie either through a set partition $\{\Fs_k\}$ or through a vector-valued function $\fb$, are equivalent. 
	We will use both representations interchangeably.
	
	In their seminal work, \textcite{SzegedyZSBEGF14} defined the robustness of a classifier $\fb$ on a test sample $\xv$ as the minimum perturbation needed to flip its prediction (\cf \eqref{eq:pred}):
	\begin{align}
	\label{eq:rob_org}
	\rsf(\xv) = \inf \{ \|\zv\|: \xv+\zv\in\Xc, ~
	\hat y(\xv+\zv) \ne \hat y(\xv)\},
	\end{align}
	where $\|\cdot\|$ is an abstract norm (\eg the familiar $\ell_p$ norm) for measuring the amount of perturbation. This notion of robustness turns out to have a very intuitive geometric meaning, as we detail below.
We remind that we also represent a classifier $\fb$ as a set partition $\{\Fs_1, \ldots, \Fs_c\}$.

For any $\xv \in \Xc$, define $\yhat(\xv) \in [c]$ so that $\xv \in \Fs_{\yhat(\xv)}$, \ie $\yhat(\xv)$ is the predicted label of our classifer. 
We define the distance from a point $\xv\in\Xc$ to a set $\Fs\subseteq\Xc$ as:
\begin{align}
\label{eq:distance}
\dsf(\xv, \Fs) := \inf\{ \|\xv - \zv\|: \zv\in \Fs\}.
\end{align} 
Recall that the notations $\cl \Fs$, $\intr\Fs$, $\bar \Fs$ and $\bd\Fs$ denote the closure, interior, complement and the boundary\footnote{Of course, $\bd\Fs := \cl \Fs \setminus \intr \Fs$.} of a point set $\Fs$, respectively. We remark that for any set $\Fs$:
\begin{align}
\label{eq:dist}
\dsf(\xv, \Fs) = \begin{cases}
0,& ~ \mbox{ if } \xv \in \cl \Fs \\
\dsf(\xv, \bd \Fs),& ~ \mbox{ otherwise }
\end{cases}
,
\end{align}
which will be important when we aim to \emph{optimize} some function of the distance.

Note that $\dsf(\xv, \Fs_{\hat{y}})$ is the distance to the decision boundary, which captures the definition  \eqref{eq:rob_org} exactly:
\begin{restatable}{proposition}{geometric}
\label{thm:geometric}
Let $\fb: \Xc \to \RR^c$ be a vector-valued function that, if augmented with the argmax prediction rule \eqref{eq:pred}, leads to the following sets (classifier): for all $k\in[c]$, 
 \begin{align}
 \label{eq:class_set}
\Fs_k := \{\xv\in \Xc: f_k(\xv) = \max_{l=1,\ldots,c} f_l(\xv) \},
\end{align}
where we break ties arbitrarily. Then the robustness definition in \Cref{eq:rob_org} equals to $\dsf(\xv, \Fs_{\hat{y}})$.
\end{restatable}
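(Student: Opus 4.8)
The plan is to reduce the robustness radius $\rsf(\xv)$ of \eqref{eq:rob_org} to a set-distance and then identify that set with the complement of $\Fs_{\hat y}$. First I would substitute $\wv = \xv + \zv$ into \eqref{eq:rob_org}: the objective $\|\zv\|$ becomes $\|\wv - \xv\|$ and the constraint $\hat y(\xv+\zv) \ne \hat y(\xv)$ becomes $\hat y(\wv) \ne \hat y(\xv)$. Writing $k_0 := \hat y(\xv)$ and $\mathcal{G} := \{\wv \in \Xc : \hat y(\wv) \ne k_0\}$, this is exactly $\rsf(\xv) = \inf\{\|\wv - \xv\| : \wv \in \mathcal{G}\} = \dsf(\xv, \mathcal{G})$ by \eqref{eq:distance}. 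Since $\xv \in \Fs_{k_0}$ by construction, the literal $\dsf(\xv,\Fs_{\hat y})$ vanishes, so I read the claimed quantity as the distance to the decision boundary, i.e. $\dsf(\xv, \bar\Fs_{k_0})$; the goal is then to prove $\dsf(\xv, \mathcal{G}) = \dsf(\xv, \bar\Fs_{k_0})$.

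I would establish this by a two-sided sandwich. Note $\bar\Fs_{k_0} = \{\wv \in \Xc : f_{k_0}(\wv) < \max_l f_l(\wv)\}$: at such $\wv$ the score $f_{k_0}$ is strictly below the maximum, so no tie-breaking rule can output $k_0$ and hence $\bar\Fs_{k_0} \subseteq \mathcal{G}$; since the distance is antitone in the set, $\dsf(\xv, \mathcal{G}) \le \dsf(\xv, \bar\Fs_{k_0})$. For the reverse inequality I would argue $\mathcal{G} \subseteq \cl\bar\Fs_{k_0}$, which, using that the distance to a set equals the distance to its closure, gives $\dsf(\xv, \mathcal{G}) \ge \dsf(\xv, \cl\bar\Fs_{k_0}) = \dsf(\xv, \bar\Fs_{k_0})$. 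Indeed, if $\wv \in \mathcal{G}$ has predicted label $l$ with $f_l(\wv) > f_{k_0}(\wv)$, then already $\wv \in \bar\Fs_{k_0}$, leaving only the tie case treated below. Granting $\mathcal{G} \subseteq \cl\bar\Fs_{k_0}$, combining the two bounds yields $\rsf(\xv) = \dsf(\xv, \bar\Fs_{k_0})$, and I would then invoke \eqref{eq:dist} with $\Fs = \bar\Fs_{k_0}$: as $\xv$ lies in $\Fs_{k_0}$ and hence typically outside $\cl\bar\Fs_{k_0}$, \eqref{eq:dist} rewrites $\dsf(\xv, \bar\Fs_{k_0})$ as $\dsf(\xv, \bd\bar\Fs_{k_0}) = \dsf(\xv, \bd\Fs_{k_0})$, the distance to the decision boundary, matching the intended statement.

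The main obstacle is the inclusion $\mathcal{G} \subseteq \cl\bar\Fs_{k_0}$ at \emph{tie points}, where $f_{k_0}(\wv) = f_l(\wv) = \max_m f_m(\wv)$ for some $l \ne k_0$ and the arbitrary tie-break is resolved against $k_0$. Such a $\wv$ belongs to $\mathcal{G}$, yet a priori it could sit in the interior of $\Fs_{k_0}$ (if $f_{k_0}$ ties for the maximum throughout a whole neighborhood), in which case $\wv \notin \cl\bar\Fs_{k_0}$ and the reverse inequality would fail. I expect to close this gap by assuming the component functions $f_l$ are continuous, so that $\{\wv : f_{k_0}(\wv) < \max_{l\ne k_0} f_l(\wv)\}$ is open and accumulates onto each genuine tie point on the boundary; then every $\wv \in \mathcal{G}$ indeed lies in $\cl\bar\Fs_{k_0}$ and the sandwich closes. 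I would flag the residual degenerate case, namely a tie persisting on a set with nonempty interior, as exactly the place where a continuity or nondegeneracy hypothesis on $\fb$ is indispensable.
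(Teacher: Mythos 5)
Your proof follows essentially the same route as the paper's: rewrite the perturbation constraint as membership in the complement of $\Fs_{\hat y}$, identify $\rsf(\xv)$ with $\dsf(\xv,\bar\Fs_{\hat y})$, and convert to the boundary distance via \eqref{eq:dist}. The only difference is that you make explicit---via the two-sided sandwich and the discussion of tie points---a step the paper's three-line proof silently assumes, namely that the set $\{\wv : \hat y(\wv)\ne\hat y(\xv)\}$ and $\bar\Fs_{\hat y(\xv)}$ are at the same distance from $\xv$; your observation that this identity can fail when a tie persists on a set with nonempty interior is correct and pinpoints exactly the degenerate case the paper's argument glosses over.
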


We are now ready to define the \emph{individual} margin of a sample pair $(\xv, y)$ \wrt the classifier $\{\Fs_k\}$ as:
\begin{equation}
\label{eq:rob}
\msf(\xv, y) := \msf(\xv, y; \{\Fs_k\}) := \sign(\hat y(\xv), y) \cdot \dsf(\xv, \bd \Fs_{\hat{y}}), 
\end{equation}
where we define $\sign(\hat y, y) = 1$ if $\hat y = y$ and $\sign(\hat y, y) = -1$ otherwise. Namely, when predicting correctly, $\msf(\xv, y)$ is the distance to the decision boundary, \ie the minimum perturbation needed to change the prediction; whereas when predicting wrongly, it is the \emph{negation} of the distance to the decision boundary, \ie the minimum perturbation needed to correct the prediction. We then define
\begin{itemize}[leftmargin=*, noitemsep, nolistsep]
\item \textbf{minimum margin:} $\min_{1 \leq i \leq n} \msf(\xv_i, y_i)$
\item \textbf{average margin:} $\frac{1}{n}\sum_{i = 1}^{n} | \msf(\xv_i, y_i) |$, simply the average distance to the decision boundary.
\end{itemize}
Although the definition \eqref{eq:distance} applies for any norm, we will focus on $l_2$ margin later, as some current results of implicit minimum margin maximization \citep{soudry2018the, gunasekar2018implicit, ji2018gradient} only holds for $l_2$ norm, which will be discussed in \S\ref{sec:study}. From now on, we use $\| \cdot \|$ to denote the Euclidean norm.

\if01

\section{An axiomatic definition of robustness}
\label{sec:def}
Having defined the margin of a classifier $\{\Fs_k\}$ \wrt any individual sample $\xv$, we now  axiomatically formalize the notion of (adversarial) robustness of a classifier \wrt a population. Our definition unifies and extends the existing ones, and provides a common framework for future comparisons in this field.

Let $(\Xb, Y)$ be a pair of random variables following some distribution $\PP$, then $\msf_l(\Xb)$ is a random quantity as well. While it is common to simply average the individual margins over the distribution $\PP$, we propose the following axiomatic definition that is more flexible. 

First, we introduce an individual robustness measure $\Rs: \RR^c \times [c] \to \RR$ such that $\Rs(\msf(\xv), y)$ aggregates the individual margins. A simple choice that has been used (implicitly) in previous works is 
\begin{align}
\Rs(\msf(\xv), y) = \max\{ \msf_y(\xv), 0 \},
\end{align}
\ie, when predicting correctly we gain a reward  proportional to the minimum perturbation needed to change the correct prediction while we gain 0 reward when predicting incorrectly.
However, in many applications misclassifying an instance $\xv$ with true label $y$ into different classes $l$ may result in totally different consequences. For instance, classifying a yellow light as red is much less severe than classifying a red light as green. This asymmetry in misclassifications can be reflected in an appropriately chosen individual robustness measure $\Rs$. 

Second, we introduce a population robustness measure $\varrho$ that aggregates the individual robustness over the population. 
Let $\Lc$ denote the space of  real-valued random variables and consider the functional $\varrho : \Lc \to \RR\cup\{\infty\}$ that yields the population robustness measure:
\begin{align}
\Rc(\{\Fs_k\}; \PP):= \varrho\big(\Rs(\msf(\Xb), Y)\big), \where (\Xb, Y)\sim\PP,
\end{align}
It is natural to require the following axioms on the robustness measure $\varrho$:
\begin{itemize}[leftmargin=*]
\item (L)aw invariant: If $U$ and $V$ follow the same distribution, then $\varrho(U) = \varrho(V)$.
\item (M)onotonic: If $U \geq V$, then $\varrho(U) \geq \varrho(V)$.
\item (N)ormalized: $\varrho(0) = 0$.
\end{itemize}
These properties are self-evident; any reasonable robustness measure should satisfy them.
Additionally, in some applications we might also be interested in forcing some extra ones, such as
\begin{itemize}[leftmargin=*]
\item (C)onvex: For all $0\leq \lambda \leq 1$, $\varrho(\lambda U + (1-\lambda)V) \leq \lambda \varrho(U) + (1-\lambda)\varrho(V)$,
\item (T)ranslation equivariant: For any real value $r \geq 0$ and random variable $U$, $\varrho(U + r) = \varrho(U) + r$.
\end{itemize}
The convex property basically means the population robustness measure $\Rc$ is smaller when $\Xb$ follows a mixture distribution than when $\Xb$ follows each of the component distributions. Translation equivariance  means if we (uniformly) increase each individual reward $\Rs(\msf(\xv), y)$ then we increase the population robustness measure by the same amount. 

With all of the above properties we call $\varrho$ a convex robustness measure, which corresponds to essentially the convex risk measure in finance \parencite[Chapter 4]{FollmerSchied16}, see also the upper expectation in \textcite[Chapter 10]{HuberRonchetti09}.
%
We mention a few examples of population robustness measures: 
\begin{align}
\var_\alpha &:= \inf\{ t: \PP( \Rs(\msf(\Xb), Y) \leq t ) \geq \alpha\} \\
\cvar_\alpha &:= \EE( \Rs(\msf(\Xb), Y) | \Rs(\msf(\Xb), Y) \leq \var_\alpha )\\
&= \frac{1}{\alpha} \int_{0}^\alpha \var_\gamma \rmd \gamma\\
\label{eq:srm}
\srm &:= \int_{0}^1 \cvar_\alpha \rmd \mu(\alpha) = \int_{0}^1 \var_\gamma \cdot s(\gamma)\rmd \gamma,
\end{align}
where $\mu$ is a probability distribution and $s$ is a density function. Clearly, $\var_\alpha$ and $\cvar_\alpha$ are special cases of the Spectral Robustness Measure $\srm$, with $s(\gamma) = \delta_{\alpha}(\gamma)$ and $s(\gamma) = \tfrac{1}{\alpha}\one_{[0,\alpha]}(\gamma)$, respectively. Note that $\var_\alpha$ is the $\alpha$-quantile, \ie, the robustness value $t$ such that 100$\alpha$\% of the data distribution have robustness measure smaller than $t$. $\srm$ is a convex robustness measure iff the weighting function $s$ is decreasing, \ie, we put more weights on smaller robustness values (since these are easier to be adversarially perturbed). In particular, $\cvar$ is a convex robustness measure while $\var$ is not. Another popular choice is to set $s \equiv 1$, in which case we recover the average robustness measure $\EE[ \Rs(\msf(\Xb), Y) ]$, which has been predominantly used so far in the field of adversarial machine learning. We argue that in certain applications it may be crucial to use a tailored weighting function $s$, based on domain knowledge. We note that variants of $\var_\alpha$ and $\cvar_\alpha$ (where $\var_\alpha$  instead of $\alpha$ is fixed) have been used in \parencite{BastaniILVNC16}  to measure robustness. Notice that another measure, the robust accuracy $\EE\left[ \min_{\norm{\delta} \leq \epsilon} \one\left[\hat Y(\Xb + \delta) = Y\right]\right]$ used in \cite{MadryMSTV18}, is also included in our framework as $\PP[ \msf(\Xb) \geq \epsilon ]$, although it is not convex.
 
 \fi


\section{Minimum \vs Average Margin: A Case Study}
\label{sec:study}

In this section, we study a simple example to demonstrate the trade-off between minimum and average margin. 
In statistical learning theory, it is well-known that we can bound the generalization error of a classifier using empirical margins \cite{KoltchinskiiPanchenko02}. In particular, the celebrated support vector machines (SVM) explicitly maximize the minimum margin (on any linearly separable dataset). In adversarial learning, however, we argue that average margin is more indicative of the robustness of a classifier. 

Our main observations in this section are: (a) Current machine learning models, especially when they become deep and powerful, implicitly maximize the minimum margin to achieve high accuracy; (b) there appears to be an inherent trade-off between minimum margin and average margin. In particular, by maximizing the minimum margin, the model also (unconsciously) minimizes the average margin, hence becomes susceptible to adversarial attacks. Note that we do not claim minimum margin always contradicts average margin. It does not, as can be easily shown through carefully constructed toy datasets (\eg a hightly symmetric dataset). Our empirical observation is that there does appear to be some tension between the two notions of margin on real datasets.

To begin with, we first demonstrate that minimum margin is maximized during standard training. In fact, this can be formally established through the  implicit bias of optimization algorithms, such as (stochastic) gradient descent which is ubiquitously used in training deep models. 
Recall that on a (linearly) separable dataset, (hard-margin) linear SVM explicitly maximizes the minimum margin. The following result, due to \citet{soudry2018the}, confirms the same for most models currently used in machine learning, including logistic regression.


\begin{theorem}[\citep{soudry2018the}]
\label{thm:implicit_bias}
For almost all linearly separable binary datasets and any smooth decreasing loss with an exponential tail, gradient descent with small constant step size and any starting point $\wv_0$ converges to  the (unique) solution $\widehat{\wv}$ of hard-margin SVM, \ie
$
\lim_{t \to \infty} \frac{\wv_t}{\norm{\wv_t}} = \frac{\widehat{\wv}}{\norm{\widehat{\wv}}}.
$
\end{theorem}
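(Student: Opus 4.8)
The plan is to analyze the gradient descent trajectory on the unregularized empirical risk $L(\wv) = \sum_{i=1}^n \ell(y_i\langle\wv,\xv_i\rangle)$, where the exponential tail means $-\ell'(u) \sim e^{-u}$ as $u\to\infty$. The first step is to establish that the iterates diverge in norm while the loss tends to its infimum of zero. Because the dataset is linearly separable, no finite $\wv$ attains zero loss; using the descent lemma with a sufficiently small constant step size (smoothness of $\ell$ bounds the Hessian on the relevant region) shows $L(\wv_t)$ decreases monotonically, hence $L(\wv_t)\to 0$, which forces $\norm{\wv_t}\to\infty$. So all the relevant information lives in the \emph{direction} of $\wv_t$.

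The second step is to guess and verify the correct asymptotic scaling. I would posit the ansatz $\wv_t = \widehat\wv\,\log t + \rho_t$, where $\widehat\wv$ is the hard-margin SVM solution (so support vectors satisfy $y_i\langle\widehat\wv,\xv_i\rangle = 1$) and $\rho_t$ is a residual growing slower than $\log t$. Substituting into the negative gradient
\[
-\nabla L(\wv) = \sum_{i=1}^n \big(-\ell'(y_i\langle\wv,\xv_i\rangle)\big)\, y_i\xv_i \approx \sum_{i=1}^n e^{-y_i\langle\wv,\xv_i\rangle}\, y_i\xv_i,
\]
one sees that, along this direction, the minimum-margin points all have weight decaying at the common rate $e^{-\log t} = 1/t$, exactly matching the derivative $\tfrac{\rmd}{\rmd t}(\widehat\wv\log t) = \widehat\wv/t$ that the dynamics must reproduce.

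The third step is the variational bridge to SVM. The KKT conditions for hard-margin SVM state $\widehat\wv = \sum_{i\in S}\alpha_i\,y_i\xv_i$ with $\alpha_i>0$ exactly on the support set $S$ and $\alpha_i=0$ off it. The aim is to show the limiting gradient direction is a \emph{positive} combination of the minimum-margin $y_i\xv_i$, hence proportional to $\widehat\wv$; combined with $\norm{\wv_t}\to\infty$, this yields $\wv_t/\norm{\wv_t}\to\widehat\wv/\norm{\widehat\wv}$.

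The hard part will be controlling the residual $\rho_t$ and rigorously discarding the non-support vectors. One must show that (i) the margin gap makes the off-support exponential terms $o(1/t)$ and thus asymptotically irrelevant, and (ii) the induced dynamics for $\rho_t$ keep it bounded, so that it is swept out after normalization. This residual analysis is precisely where the ``almost all datasets'' hypothesis enters: it guarantees the support vectors are in general position, so that the dual coefficients $\alpha_i$ are unique and the Gram matrix restricted to $S$ is nondegenerate, which is what makes the fixed-point equation governing $\rho_t$ well-conditioned. I expect this boundedness argument --- not the scaling heuristic --- to be the genuine technical obstacle.
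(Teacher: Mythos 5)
A preliminary point: the paper does not prove this statement at all --- Theorem~\ref{thm:implicit_bias} is imported as a black box from \citet{soudry2018the}, so there is no in-paper proof to compare against. Measured instead against the source being cited, your outline reproduces the architecture of the actual argument quite faithfully: the loss tends to its unattained infimum, forcing $\norm{\wv_t}\to\infty$ so that only the direction matters; the ansatz $\wv_t=\widehat{\wv}\log t+\rho_t$; the matching of the $1/t$ decay of the support-vector gradient contributions against $\tfrac{\rmd}{\rmd t}\bigl(\widehat{\wv}\log t\bigr)=\widehat{\wv}/t$; the KKT identification of the limiting gradient direction with a positive combination of the minimum-margin points; and the correct reading of the ``almost all datasets'' hypothesis as a nondegeneracy condition on the support set that makes the residual's fixed-point equation well posed.

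As a proof, however, the submission stops exactly where the theorem begins, and two smaller steps are also too quick. First, monotone decrease of $L(\wv_t)$ gives convergence of the loss to \emph{some} limit, not to the infimum $0$; one needs the standard smoothness argument that $\sum_t\norm{\nabla L(\wv_t)}^2<\infty$ combined with a lower bound on $\norm{\nabla L(\wv)}$ in terms of $L(\wv)$ for separable data (project the negative gradient onto $\widehat{\wv}$). Second, your rate-matching computation is continuous-time, whereas the statement concerns discrete gradient descent with a constant step size; the discretization error must be absorbed into the residual, which is where the step-size smallness and the smoothness of the loss actually get used. Third, and decisively, the boundedness of $\rho_t$ --- which in the cited work is handled by splitting $\rho_t$ into a constant part $\tilde{\wv}$ solving $\eta\sum_{i\in S}\alpha_i e^{-y_i\langle\tilde{\wv},\xv_i\rangle}y_i\xv_i=\widehat{\wv}$ plus a remainder controlled by induction --- carries essentially all of the content of the theorem, and your proposal explicitly defers it. In short: the road map is the right one and you have correctly located the obstacle, but the estimate that constitutes the theorem is not carried out.
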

Note that the margin of a linear classifier $f(\xv) = \wv^\top \xv$, with decision boundary $\bd \Fs := \{ \xv: \wv^\top \xv = 0 \}$, can be computed in closed-form:
$
\dsf(\xv, \bd\Fs) = 
\tfrac{|\wv^\top \xv|}{\|\wv\|}$,
which only depends on the direction of the weight vector $\wv$. Thus, \Cref{thm:implicit_bias} implies in particular that if we optimize logistic regression by gradient descent on a linearly separable dataset, then we implicitly maximize the minimum margin, just as in SVM. Besides linear classifiers, the same implicit bias towards maximizing the minimum margin has also been discovered  for deep networks \cite{gunasekar2018implicit, ji2018gradient}.

\begin{figure}[t]
\centering
\begin{subfigure}{0.28\textwidth}
\centering
\includegraphics[width = \textwidth, height = 0.28\textheight]{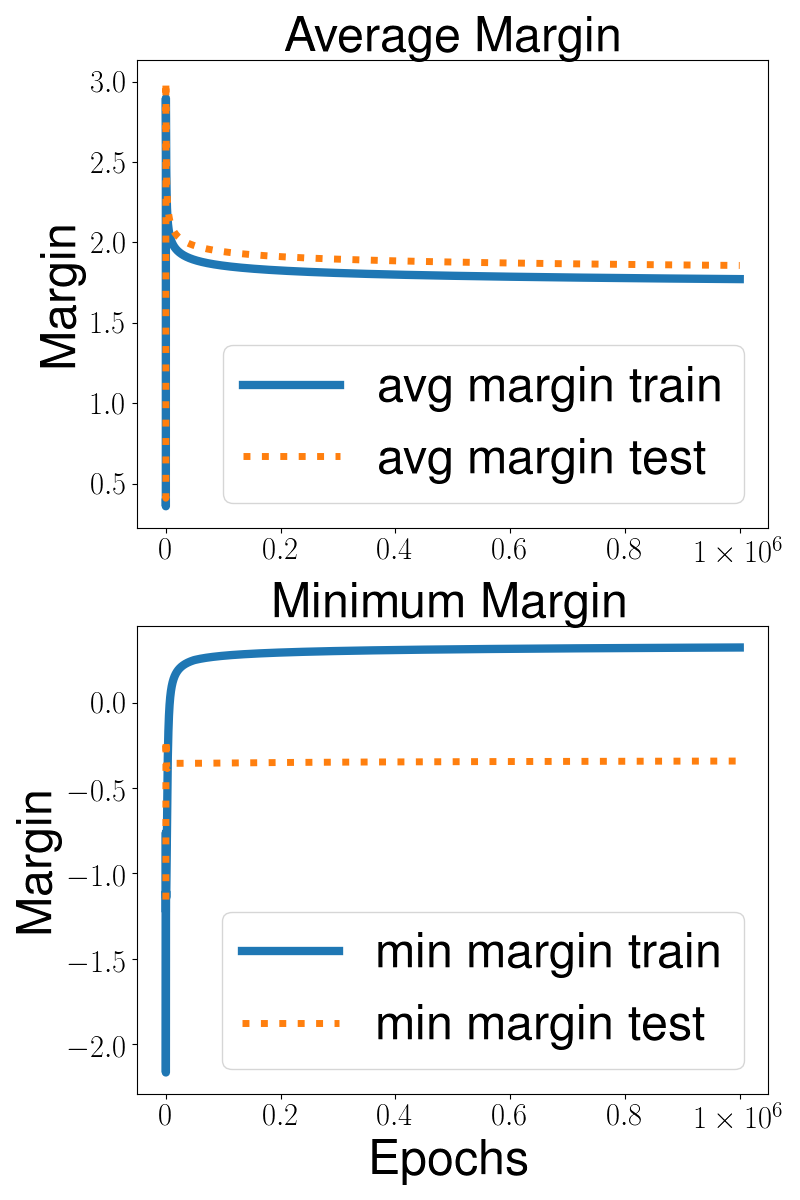}
\caption{Margins during training of LR. \\ \textbf{Upper:} Average margin on training and test set. \\ \textbf{Lower:} Minimum margin on training and test set.}
\label{fig:lr_dist}
\end{subfigure}
~
\begin{subfigure}{0.68\textwidth}
\begin{subfigure}{\textwidth}
\centering
\includegraphics[width = \textwidth, height = 0.09\textheight]{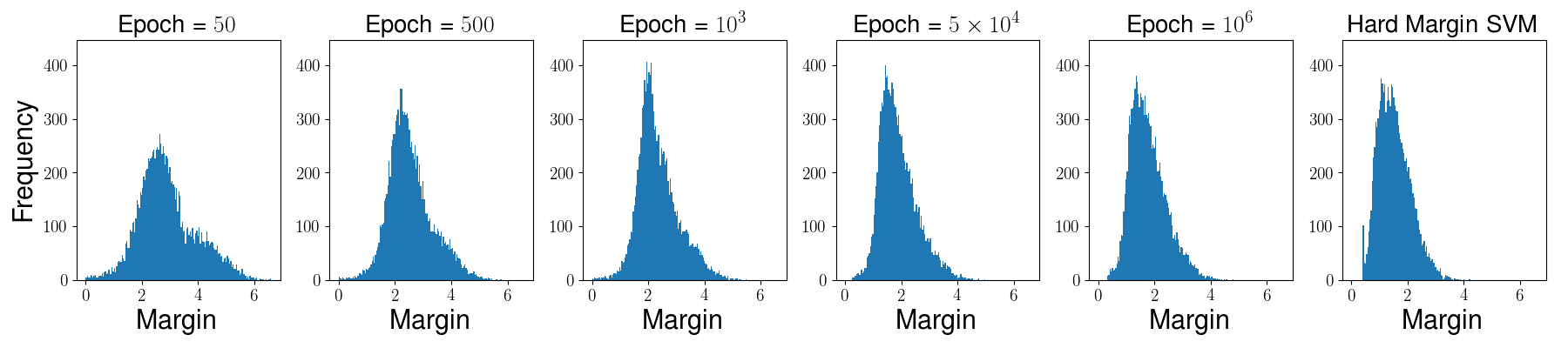}
\caption{Margin histograms on the training set. \textbf{First 5 plots:} Margin histograms of LR during training. \textbf{Last plot:} Margin histogram of SVM, which LR margins converge to. During training, the histogram continues shifting towards left.}
\label{fig:lr_hist}
\end{subfigure}

\begin{subfigure}{\textwidth}
\centering
\includegraphics[width = \textwidth, height = 0.12\textheight]{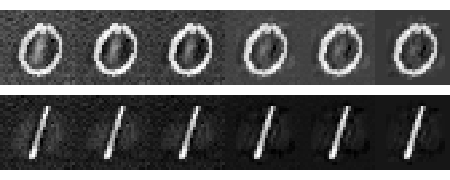}
\caption{Visualization of adversarial examples at different epochs during training. \textbf{First row:} adversarial examples for $0$. \textbf{Second row:} adversarial examples for $1$. Adversarial examples in the same column are generated in the same epoch. 
Adversarial examples gradually become imperceptible. Zoom the figure for better visualization.}
\label{fig:lr_adv}
\end{subfigure}
\end{subfigure}

\end{figure}

It is then natural to ask the next question:
\textbf{How does average margin change during training?}
To answer this question, we train a binary logistic regression (LR) using gradient descent on MNIST to discriminate $0$'s from $1$'s. Note that the subset of MNIST consisting of only $0$'s and $1$'s is indeed linearly separable, as LR achieves zero training error (see \Cref{fig:lr_loss} in appendix). All conditions of \Cref{thm:implicit_bias} are thus satisfied, and we expect LR to maximize minimum margin. Indeed, \Cref{fig:lr_dist} confirms that the minimum margin continues to increase during training until it approaches that of hard-margin SVM, as predicted by \Cref{thm:implicit_bias}. Meanwhile, the average margin decreases drastically after a few epochs at the very beginning, and then keeps decreasing. Interestingly, the minimum margin continues to increase while the average margin continues to decrease even after the training error reaches zero, which means the decision boundary   still changes even after training error diminishes.

To gain further insight, in \Cref{fig:lr_hist} we plot the histogram of margins at different training epochs. We observe that the margin distribution shifts towards the left during training and eventually approaches that of hard-margin SVM. In other words, during training the majority of data is pushed towards the decision boundary, leading LR to become more and more vulnerable to adversarial attacks. Indeed, \Cref{fig:lr_adv} confirms this by visualizing the adversarial examples constructed at different training epochs. Note that for a linear classifier the adversarial example with minimum perturbation can be explicitly determined as
\begin{equation}
\label{eq:linear_adv}
\textstyle
\xv^{adv} = \xv - \frac{\wv^\top\xv}{\|\wv\|^2} \wv ,
\end{equation}
which is the point on the decision boundary that is closest to the training example $\xv$.
As shown in \Cref{fig:lr_adv}, the adversarial examples gradually become more and more imperceptible, indicating that LR becomes more and more vulnerable to adversarial attacks.

A few remarks are in order. First, we have carefully designed our experiment so that (a) the theoretical results in \Cref{thm:implicit_bias} apply; (b) the margins and adversarial examples can be explicitly computed. However, similar phenomenon is also observed for deep models on different datasets where the conditions of \Cref{thm:implicit_bias} may be violated or the margins can only be approximately computed; see \S\ref{sec:exp} for more of these experiments. Second, we remark that the decrease of average margin cannot be caused by overfitting. This is because the test accuracy continues to decrease during training (see training curves in  \Cref{fig:lr_loss} of the appendix). Moreover, we observe that the average margin decreases on \emph{both} training and test sets.
To summarize, we conclude that practically, deep models try to maximize the minimum margin during training at the expense of sacrificing the average margin, hence become susceptible to adversarial attacks. To address this issue, in the next section we propose to explicitly optimize the average margin through appropriate regularization.




\section{An Average Margin Regularizer}
\label{sec:avg}

In this section, we propose a regularization function to explicitly promote average margin. We first handle linear classifiers in \S\ref{sec:blc} through maximizing the average margin in the input space directly. For nonlinear classifiers, maximizing the margin in the input space directly is intractable. Instead, we maximize a lower bound of input space margins in \S\ref{sec:multiclass}, through simultaneously maximizing feature space margin and controlling the Lipschitz constant of the network.

The most straightforward way to improve robustness of a classifier is through explicit regularization. In particular, we consider the following regularized problem:
\begin{equation}
\label{eq:RERM}
\min_{\fb:\Xc \to \RR^c} ~ \frac{1}{n} \sum_{i=1}^n \phi(y_i, \fb(\xv_i)) - \lambda \cdot \frac{1}{n} \sum_{i = 1}^{n} \one_{y_i = \hat{y}_i} \cdot \dsf_\tau(\xv_i, \bd\Fs_{y_i})
\end{equation}
where $\phi$ is the loss function we use to measure the accuracy of our classifier $\fb$, $\{\Fs_k\}$ are the sets induced by $f$ (using the argmax rule)
, $\lambda \geq 0$ is the regularization constant that balances the two objectives, and $\dsf_\tau = \min\{ \tau, \dsf \}$ is the truncated distance. In particular, we only maximize the margin when the classifier makes a correct prediction and the margin is capped at $\tau$ to avoid being dominated by some outliers. We show next how to solve \eqref{eq:RERM} when $\fb$ is a binary linear classifier.

\subsection{Binary Linear Classifier}
\label{sec:blc}
In this section we assume there are two classes, \ie $c=2$ and we consider the linear classifier $f(\xv) = \wv^\top \xv$ (w.l.o.g. we omit the bias term). Correspondingly, let $\Fs_+ = \{ \xv\in \Xc: \wv^\top \xv \geq 0 \}$ and 
the distance 
$
\dsf(\xv, \bd \Fs_+) = \tfrac{|\wv^\top \xv|}{\|\wv\|}.
$
The regularized problem \eqref{eq:RERM} reduces to
\begin{align}
\label{eq:regobj}
\min_{\norm{\wv} = 1} ~~ \sum_{i = 1}^{n}\phi(y_i \wv^\top\xv_i) - \lambda \cdot \sum_{i = 1}^{n}[y_i\wv^\top\xv_i]_0^\tau,
\end{align}
where $[t]_0^\tau = \min(\max(t, 0), \tau)$.
The second regularization term can be written as a difference of two convex functions. Indeed, define $H_{s}(x) = \max(0, s - x)$, then 
\begin{align}
\label{eq:dc}
-[t]_{0}^\tau = H_{\tau}(t) - H_{0}(t) + \tau,
\end{align}
and the objective function simplifies to 
\begin{align}
\min_{\norm{\wv} = 1} ~~ \sum_{i = 1}^{n}\phi(y_i \wv^\top\xv_i) + \lambda \cdot \sum_{i = 1}^{n} [H_\tau-H_0](y_i\wv^\top\xv_i),
\end{align}
where $\lambda \geq 0$ and $\tau \geq 0$ are tuning hyperparameters.

Quite pleasantly, our average margin regularizer not only promotes robustness, it also retains Fisher consistency (aka classification-calibrated), namely that the classifier we obtain by minimizing the \emph{regularized} objective in \eqref{eq:regobj} still approaches the Bayes optimal classifier, as sample size $n$ increases to infinity.
\begin{definition}[Fisher consistency \parencite{bartlett2006convexity}]
Suppose $\phi: \RR \to \RR$ is a loss function and $\eta \in \left[0, 1\right]$. For any $\alpha \in \mathbb{R}$, define the conditional $\phi$-risk as
$
C_{\eta}^{\phi}(\alpha) = \eta \phi(\alpha) + (1 - \eta) \phi(1 - \alpha).
$
We say a loss $\phi$ is Fisher consistent if for any $\eta \neq \frac12$,
\begin{align}
\label{eq:cc}
\inf_{\alpha : \alpha(2\eta - 1) \leq 0}C_{\eta}^{\phi}(\alpha) > \inf_{\alpha \in \RR} C_{\eta}^{\phi}(\alpha) .
\end{align}
\end{definition}
For a Fischer consistent loss functions $\phi$, \eqref{eq:cc} implies that to minimize the $\phi$-risk, $\alpha$ should satisfy $\alpha (2\eta -1) > 0$, \ie the decision $\alpha$ of our classifier should match the decision $\sign(2\eta - 1)$ of the Bayes classifier (which is optimal under the 0-1 loss). Fisher-consistency is a necessary condition for any reasonable loss, if our goal is to approximate the Bayes optimal classifier. The following result confirms the Fisher-consistency of our \emph{regularized} objective \eqref{eq:regobj} (see the proof in appendix).
\begin{theorem}
\label{lma:regularizer_calibrated}
Suppose the loss function $\phi$ is decreasing, continuous, bounded below and $\phi^{\prime}(0) < 0$. Let $\psi = H_{\tau} - H_{0}$ be our average margin regularizer. Then the regularized loss $\ell = \phi + \lambda \psi$ is Fisher consistent for any $\lambda, \tau \geq 0$.
\end{theorem}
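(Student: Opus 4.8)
The plan is to verify the calibration inequality \eqref{eq:cc} directly for the conditional risk of $\ell=\phi+\lambda\psi$. Writing the conditional $\ell$-risk as $C_\eta^\ell(\alpha)=\eta\,\ell(\alpha)+(1-\eta)\,\ell(-\alpha)$ (so that the wrong-decision region $\alpha(2\eta-1)\le 0$ is the relevant constraint), linearity gives the decomposition $C_\eta^\ell=C_\eta^\phi+\lambda\,C_\eta^\psi$. Two structural facts will drive everything: (i) $\ell$ is non-increasing, since $\phi$ is decreasing and $\psi=H_\tau-H_0$ is the bounded non-increasing ramp equal to $\tau$ on $(-\infty,0]$, linear on $[0,\tau]$, and $0$ on $[\tau,\infty)$; and (ii) $\phi'(0)<0$, so $\phi$ is \emph{strictly} decreasing at the origin. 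Because $C_\eta^\ell(\alpha)=C_{1-\eta}^\ell(-\alpha)$, it suffices to treat $\eta>\tfrac12$, for which the wrong-decision region is $\{\alpha\le 0\}$; the case $\eta<\tfrac12$ follows from this reflection symmetry.

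The backbone is a reflection identity: for every $\alpha$, $C_\eta^\ell(\alpha)-C_\eta^\ell(-\alpha)=(2\eta-1)\big(\ell(\alpha)-\ell(-\alpha)\big)$, which is $\le 0$ for $\alpha\ge 0$ since $2\eta-1>0$ and $\ell$ is non-increasing. The constrained infimum in \eqref{eq:cc} is $B:=\inf_{\alpha\le 0}C_\eta^\ell(\alpha)$, and trivially $B\ge A:=\inf_{\alpha}C_\eta^\ell(\alpha)$, so the entire content of \eqref{eq:cc} is the \emph{strict} inequality $B>A$. Alongside this I record the strict bound $A<C_\eta^\ell(0)$: for small $\alpha>0$ the regularizer part satisfies $C_\eta^\psi(\alpha)-C_\eta^\psi(0)=-\eta\alpha\le 0$, while $\big(C_\eta^\phi(\alpha)-C_\eta^\phi(0)\big)/\alpha\to(2\eta-1)\phi'(0)<0$; hence $C_\eta^\ell(\alpha)<C_\eta^\ell(0)$ for some $\alpha>0$, and therefore $A<C_\eta^\ell(0)$.

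It remains to upgrade $B\ge A$ to $B>A$, which I expect to be the main obstacle: the infima need not be attained, and $\psi$ is \emph{flat} on $(-\infty,0]$, so the regularizer does not by itself penalize the wrong side and one cannot simply lower-bound $B$ by $C_\eta^\ell(0)$. I would argue by contradiction. Reflecting, $B=\inf_{\alpha\ge 0}\big[C_\eta^\ell(\alpha)+(2\eta-1)\big(\ell(-\alpha)-\ell(\alpha)\big)\big]$, where the first summand is $\ge A$ and the second is $\ge 0$; so $B=A$ forces a sequence $\alpha_n\ge 0$ with $C_\eta^\ell(\alpha_n)\to A$ and $\ell(-\alpha_n)-\ell(\alpha_n)\to 0$. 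Since the $\phi$- and $\psi$-gaps are separately non-negative, the latter yields $\phi(-\alpha_n)-\phi(\alpha_n)\to 0$; combined with $\phi(-\alpha_n)\ge\phi(0)\ge\phi(\alpha_n)$ this gives $\phi(\alpha_n)\to\phi(0)$, and then $\phi'(0)<0$ forces $\alpha_n\to 0$ (for $\alpha\ge\rho>0$ one has $\phi(\alpha)\le\phi(\rho)<\phi(0)$). Continuity of $C_\eta^\ell$ then gives $A=\lim_n C_\eta^\ell(\alpha_n)=C_\eta^\ell(0)$, contradicting $A<C_\eta^\ell(0)$. Hence $B>A$, which is exactly \eqref{eq:cc}, so $\ell$ is Fisher consistent. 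The delicate step is the implication $\phi(\alpha_n)\to\phi(0)\Rightarrow\alpha_n\to 0$: this is precisely where the hypothesis $\phi'(0)<0$ is indispensable, as it prevents a near-optimal reflected minimizing sequence from escaping to large $|\alpha|$ where the reflection gap could otherwise vanish.
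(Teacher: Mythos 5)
Your proof is correct, and it rests on the same two pillars as the paper's own argument: the reflection identity $C_\eta^\ell(\alpha)-C_\eta^\ell(-\alpha)=(2\eta-1)\bigl(\ell(\alpha)-\ell(-\alpha)\bigr)$ combined with monotonicity of $\ell$, and the strict decrease of $C_\eta^\ell$ just to the right of the origin, obtained from $\phi'(0)<0$ together with the flatness of $\psi$ on $(-\infty,0]$ (your $A<C_\eta^\ell(0)$ is the paper's choice of $\alpha_0$ with $C_\eta^\ell(\alpha_0)<C_\eta^\ell(0)$). Where you genuinely diverge is the final upgrade from $B\ge A$ to $B>A$: the paper splits the wrong-decision region into $(-\infty,-\alpha_0]$, where reflection plus monotonicity yields the uniform positive gap $(2\eta-1)\bigl(\ell(0)-\ell(\alpha_0)\bigr)$, and the compact interval $[-\alpha_0,0]$, where an attained minimizer is handled by cases; you instead run a single contradiction with a reflected minimizing sequence, using the separate nonnegativity of the $\phi$- and $\psi$-gaps to force $\phi(\alpha_n)\to\phi(0)$, then $\phi'(0)<0$ to force $\alpha_n\to 0$, then continuity to contradict $A<C_\eta^\ell(0)$. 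Your version avoids the compactness and case analysis (and in doing so sidesteps the slightly underjustified ``flip the sign to get a strictly smaller value'' step on $[-\alpha_0,0)$ in the paper), while the paper's version is more quantitative, exhibiting an explicit lower bound on the excess risk away from the origin. One minor point to tighten: in the parenthetical ``for $\alpha\ge\rho>0$ one has $\phi(\rho)<\phi(0)$'', if ``decreasing'' is read as non-increasing you should first choose $\rho$ small enough that $\phi(\rho)<\phi(0)$, which $\phi'(0)<0$ guarantees, and then propagate to all $\alpha\ge\rho$ by monotonicity; also note that finiteness of $A$, used implicitly when you split the reflected infimum, comes from the hypothesis that $\phi$ (hence $\ell$) is bounded below.
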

The above condition on the loss $\phi$ is reasonable: it basically guarantees $\phi$ itself to be Fisher consistent. Common loss functions, such as the logistic loss, exponential loss and hinge loss, all satisfy this condition. Thus, our average margin regularizer, when combined with these typical, Fisher consistent loss functions, remains Fisher consistent.


\subsection{Extension to Multiclass Deep Models}
\label{sec:multiclass}
For deep neural networks, even computing the margin in the input space is already NP-hard \cite{KatzBDJK17,WengZCSHBDD18}, let alone optimizing it. However, the margin in the feature space provides a lower bound of the margin in the input space. For example, let $\fb$ be a deep neural network with $L$ layers, $\fb(\xv) = W_L \cdot \sigma(W_{L - 1} \cdot \sigma( \cdots \sigma(W_1 \cdot \xv)))$, where $\sigma$ is the activation function. Let $\Phi(\xv)$ be the output of the second last fully connected layer, i.e. $\Phi(\xv) = \sigma(W_{L - 1} \cdot \sigma( \cdots \sigma(W_1 \cdot \xv)))$, then
\begin{equation}
\label{eq:Lip}
\norm{\Phi(\xv_1) - \Phi(\xv_2)}\leq Lip(\Phi) \norm{\xv_1 - \xv_2},
\end{equation}
where $Lip(\Phi)$ is the Lipschitz constant of the feature map $\Phi$. The tightness of this bound is determined by the Lipschitz constant of the model. For example, if the model is $1$-Lipschitz with $Lip(\phi) = 1$, then the margin in the input space $\Xc$ is exactly lower bounded by the margin in the feature space $\left\{\Phi(\xv) : \xv \in \Xc \right\}$. The bound \eqref{eq:Lip} motivates a natural way to optimize the margin in input space: we simultaneously fix the Lipschitz constant and maximize the margin in feature space. Note that this perspective allows us to treat any deep network as first performing a nonlinear feature transform through $\Phi$, and then applying a linear classifier on top.



The Lipschitz constant can be bounded by the product of norms of the weight matrices $\prod_i \norm{W_i}_2$, where $W_i$ is the weight matrix in layer $i$, assuming the activation function is 1-Lipschitz, for instance ReLU \parencite{SzegedyZSBEGF14}. To control the overall  Lipschitz constant $Lip(\Phi)$, ideally, we want to enforce the Lipschitz constant of each layer to be exactly $1$, namely constraining the spectral norm of each weight matrix $W_i$ to be approximately 1. 
Inspired by \cite{cisse2017parseval,lin2018defensive}, we add an orthogonal penalty $\beta \norm{W_iW_i^\top - I}_{\mathrm{F}}^2,
$ into our training objective, 
which encourages the weight matrices to be orthogonal hence having spectral norm close to $1$. For convolutional layers, we first flatten the convolutional filter and then apply the above penalty. 

Once the Lipschitz constant is bounded, we directly maximize the average margin in the feature space, as a computationally efficient lower bound for promoting the average margin in the input space. Since our classifier is linear in the feature space, the distance to the decision boundary can be computed as in \S\ref{sec:blc}:
\begin{equation}
\dsf(\xv, \bd \Fs_{\hat{y}}) = \min_{k \neq \hat{y}} (\wv_{\hat{y}} - \wv_k)^\top \Phi(\xv),
\end{equation}
where $\wv_k$ is the (normalized) weight vector in the final layer for the $k$-th class. Similar to the binary case, we only maximize the margin for correctly classified examples and truncate the margin if it exceeds the threshold $\tau$.
In the end, our regularized objective is,
\begin{equation}
    \sum_{i = 1}^{n} \phi\left(y_i, \fb(\xv_i)\right) - \lambda \left[ \min_{k \neq y_i} (\wv_{y_i} - \wv_k)^\top \Phi(\xv_i) \right]_{0}^{\tau} + \beta \sum_{1 \leq l \leq L} \| W_lW_l ^ \top - I \|_{\mathrm{F}}^2,
\end{equation}
where the first term is the standard training loss (such as cross-entropy), the third term is the orthogonal constraint for controlling the Lipschitz constant of the network, and the second term is the average margin penalty in the feature space, which also maximizes the input space average margin, provided that the Lipschitz constant of the network is indeed close to unity.

\if01
Distance to decision boundary plays a key role in adversarial robustness. Currently, most definitions of adversarial robustness measure more or less rely on distance from a single data point to decision boundary.

In the previous experiments, we are able to observe that average distance can decrease drastically during training. Thus, it is natural to add a average distance regularizer into objective function.

Previously, we got the following formulation:
\begin{align}
\max_{\|w\|_2 = 1} & ~ \sum_{i = 1}^{n} (y_i \hat{y}_i)_+\\
\text{s.t.} & ~ y_i \hat{y}_i > 0
\end{align}
Change the hard constraint into soft constraint, we get
\begin{equation}
\min_{\|w\|_2 = 1} ~ - \mu \sum_{i = 1}^{n} (y_i \hat{y}_i)_+ + \sum_{i = 1}^{n}(- y_i \hat{y}_i)
\end{equation}
We can view the objective function as a sum of the following loss function:
\begin{equation}
l(\alpha) = 
\begin{cases*}
-\alpha & if $\alpha \leq 0$ \\
-\mu \alpha & if $\alpha > 0$
\end{cases*}
\end{equation}
A more general form, considering truncation is shown below:
\begin{equation}
l(\alpha) = 
\begin{cases*}
\xi + \tau & if $\alpha \leq -\tau$ \\
-\xi - \alpha & if $-\tau < \alpha \leq 0$ \\
\xi - (1 + \mu) \alpha & if $0 < \alpha < \xi$ \\
-\mu \alpha & if $\xi < \alpha\leq \tau$ \\
-\mu \tau & if $\alpha > \tau$ \\
\end{cases*}
\end{equation}
Note that the loss function can be written in a fairly simple form. If we define 
\begin{equation}
H_s(x) = \max(0, s - x)
\end{equation}
Then the loss is equal to
\begin{equation}
H_{\xi}(x) - H_{-\tau}(x) + \mu (H_{\tau}(x) - H_{0}(x)) - \mu \tau
\end{equation}
Ignoring the constant, the loss is
\begin{equation}
H_{\xi}(x) - H_{-\tau}(x)
\end{equation}
while the regularization term is
\begin{equation}
H_{\tau}(x) - H_{0}(x)
\end{equation}
Note that both two terms has close relation to robust truncated hinge loss, which is $H_1(x) - H_0(x)$. Writing the loss in this way makes the decomposition trivial. Moreover, it makes the analysis of calibration concise. By the following lemma, the preceding loss function is calibrated for any $\xi > 0$. The method is to extend the lemma for proving truncated hinge loss (Wu and Liu, 2007), such that it can be applied in our case. (Note that Wu and Liu's proof is incorrect. One counterexample can be $\min(0, -x)$. But it can be fixed by additional assumptions.)

We introduce the definition of Fischer-consistency for loss functions, a.k.a. classification calibrated.
\begin{definition}[Classification calibrated]
Suppose $\phi: R \to R$ is a loss function and $\eta \in \left[0, 1\right]$. For any $\alpha \in R$, define conditional $\phi$-risk as
\begin{equation}
C_{\eta}^{\phi}(\alpha) = \eta \phi(\alpha) + (1 - \eta) \phi(1 - \alpha).
\end{equation}
We say loss $\phi$ is calibrated if for any $\eta \neq \frac12$,
\begin{equation}
\inf_{\alpha : \alpha(2\eta - 1) \leq 0}C_{\eta}^{\phi}(\alpha) > \inf_{\alpha \in R} C_{\eta}^{\phi}(\alpha) .
\end{equation}
\end{definition}
Classification calibrated is a basic condition on loss functions. If a loss function is classification calibrated, then minimizing the surrogate loss over the true distribution will lead to Bayes classifier.

\begin{lemma}[Wu and Liu, 2007]
Suppose a nonincreasing loss function $\phi$ is differentiable at zero and $\phi^\prime (0) < 0$. Then the truncated loss $\psi(x)=\min(\phi(x), \phi(s))$ is calibrated for any $s \leq 0$.
\end{lemma}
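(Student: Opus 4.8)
The plan is to verify the calibration inequality directly. First I would record the shape of $\psi$: since $\phi$ is nonincreasing and $s\le 0$, $\psi(x)=\phi(x)$ on $[s,\infty)$ and $\psi(x)=\phi(s)$ (constant) on $(-\infty,s]$, so $\psi$ is itself nonincreasing and coincides with $\phi$ on $[0,\infty)$. Fix $\eta\neq\tfrac12$; by the symmetry $\eta\leftrightarrow 1-\eta$, $\alpha\leftrightarrow-\alpha$ it suffices to treat $\eta>\tfrac12$, for which the constraint $\alpha(2\eta-1)\le 0$ is just $\alpha\le 0$, and I write $C(\alpha):=\eta\psi(\alpha)+(1-\eta)\psi(-\alpha)$ for the conditional risk (centred at $0$, matching the calibration condition). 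Before proving anything I would flag the gap the text itself notes: as literally stated the lemma fails --- $\phi(x)=-x$ with $s=0$ gives $\psi=\min(0,-x)$, for which $\inf_{\alpha\le0}C=\inf_\alpha C=-\infty$ --- so I add the hypothesis that $\phi$ is bounded below, exactly the extra assumption carried by the main Theorem, and set $\phi_\infty:=\lim_{x\to\infty}\phi(x)=\inf_x\phi(x)$.

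The key reduction is a reflection inequality. Because $\psi$ is nonincreasing and $2\eta-1>0$, for every $\alpha\le 0$ (so $-\alpha\ge 0\ge\alpha$) we have
\[
C(\alpha)-C(-\alpha)=(2\eta-1)\bigl(\psi(\alpha)-\psi(-\alpha)\bigr)\ge 0 .
\]
Thus mirroring a negative point to its positive reflection never increases $C$, giving $\inf_{\alpha\le 0}C\ge\inf_{\alpha\ge 0}C=\inf_\alpha C=:m$. The whole lemma therefore reduces to the \emph{strict} inequality $\inf_{\alpha\le 0}C>m$, which I would establish by splitting $\{\alpha\le 0\}$ into the truncation plateau $\{\alpha\le s\}$ and the active interval $[s,0]$.

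On the plateau $\psi(\alpha)=\phi(s)$, so $C(\alpha)=\eta\phi(s)+(1-\eta)\phi(-\alpha)\ge \eta\phi(s)+(1-\eta)\phi_\infty=:V_-$, whereas $m\le\lim_{\alpha\to\infty}C=\eta\phi_\infty+(1-\eta)\phi(s)=:V_+$. Since $\phi'(0)<0$ forces $\phi(0)>\phi_\infty$ and $\phi(s)\ge\phi(0)$ (as $s\le 0$), we get $\phi(s)>\phi_\infty$, hence $V_-=V_+ + (2\eta-1)\bigl(\phi(s)-\phi_\infty\bigr)>V_+\ge m$, so $C>m$ on the plateau. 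On $[s,0]$ we have $C(\alpha)=\eta\phi(\alpha)+(1-\eta)\phi(-\alpha)$ and, comparing with the mirror point, $C(\alpha)\ge C(-\alpha)\ge m$. Using continuity to attain the minimum of $C$ on this compact interval at some $\alpha_0$, I would argue that $\inf_{[s,0]}C=m$ forces $\phi(\alpha_0)=\phi(-\alpha_0)$, \ie $\phi$ constant on $[\alpha_0,-\alpha_0]$; if $\alpha_0<0$ this makes $0$ an interior point of a flat piece, contradicting $\phi'(0)<0$, and if $\alpha_0=0$ then $m=C(0)$, contradicted because $C$ is differentiable at $0$ with $C'(0)=(2\eta-1)\phi'(0)<0$, so $C$ strictly decreases just right of $0$ and $m<C(0)$. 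Hence $\inf_{[s,0]}C>m$, and together with the plateau bound this yields $\inf_{\alpha\le 0}C>m$, which is calibration. (The degenerate case $s=0$ collapses $[s,0]$ to a point and is handled entirely by the plateau estimate.)

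The hard part is not the algebra but pinning down the two places where strictness can silently collapse, which is precisely where the original proof errs. The truncation plateau means $\inf_{\alpha\le0}C$ can coincide with $\inf_\alpha C$ unless $\phi$ is bounded below (the $\min(0,-x)$ counterexample), so isolating this hypothesis and securing the strict comparison $\phi(s)>\phi_\infty$ is the crux; and the single assumption that rescues that comparison is $\phi'(0)<0$, the only condition forbidding $\phi$ from being flat at the decision point $0$. I would also be careful that attaining the minimum on $[s,0]$ uses continuity of $\phi$ (assumed in the main Theorem); lacking it, I would replace attainment by a minimizing sequence together with the local strict-descent estimate $C'(0)<0$.
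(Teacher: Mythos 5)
Your proof is correct, and it is worth noting that the paper never actually proves this lemma: it is stated with a citation inside a draft block, accompanied by the remark that Wu and Liu's original proof is flawed, with $\min(0,-x)$ given as the counterexample --- exactly the defect and the example you identified independently, and your fix (adding boundedness below) is precisely the extra hypothesis the paper carries into its own \Cref{lma:regularizer_calibrated}. Your argument does, however, take a genuinely different route from the paper's proof of that theorem. The paper splits the wrong-sign region $(-\infty,0]$ at an auxiliary point $-\alpha_0$, where $\alpha_0>0$ is chosen small enough (via $(2\eta-1)\phi'(0)<0$) that $C_\eta^{\ell}(\alpha_0)<C_\eta^{\ell}(0)$, and then obtains strictness on $(-\infty,-\alpha_0]$ from the quantitative gap $(2\eta-1)\bigl(\ell(0)-\ell(\alpha_0)\bigr)>0$; you instead split at the truncation point $s$ and obtain strictness on the plateau $\{\alpha\le s\}$ from the tail comparison $V_- - V_+=(2\eta-1)\bigl(\phi(s)-\phi_\infty\bigr)>0$, using the limit of $C$ at $+\infty$ as an upper bound for $m$. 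On the compact middle interval the two arguments essentially coincide (attained minimizer, reflection inequality, derivative at zero), though your derivation that $C(\alpha_0)=m$ forces $\phi$ to be flat on $[\alpha_0,-\alpha_0]$, contradicting $\phi'(0)<0$, is a clean variant of the paper's ``flip the sign'' step. The trade-off: your plateau estimate exploits the specific truncated form $\psi=\min(\phi,\phi(s))$ and needs no auxiliary $\alpha_0$ on the far region, whereas the paper's split is structure-agnostic and therefore transfers verbatim to the regularized loss $\phi+\lambda(H_\tau-H_0)$ of \Cref{lma:regularizer_calibrated}, which is not of truncated form; your closing remark on replacing attainment by a minimizing sequence (forcing $\phi(\alpha_k)\to\phi(0)$ and $\phi(-\alpha_k)\to\phi(0)$ by monotonicity, then contradicting $C'(0)<0$) is sound and even relaxes the continuity assumption the paper imposes.
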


By the following lemma, for a wide range of loss functions, including logistic loss, hinge loss and exponential loss, the objective function remains calibrated after adding our regularizer.
\begin{lemma}
Suppose a nonincreasing continuous function $\phi$ is bounded below and $\phi^{\prime}(0) < 0$. $\psi$ is the regularizer $H_{\tau} - H_{0}$. Then $\ell = \phi + \mu \psi$ is calibrated.
\end{lemma}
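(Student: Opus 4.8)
The plan is to verify the calibration inequality $\inf_{\alpha:\,\alpha(2\eta-1)\le 0} C^\ell_\eta(\alpha) > \inf_{\alpha\in\RR} C^\ell_\eta(\alpha)$ directly, where $C^\ell_\eta(\alpha)=\eta\ell(\alpha)+(1-\eta)\ell(-\alpha)$ is the conditional $\ell$-risk and $\ell=\phi+\lambda\psi$. First I would record the elementary shape of the regularizer: a direct computation gives $\psi(x)=\tau$ for $x\le 0$, $\psi(x)=\tau-x$ for $0\le x\le\tau$, and $\psi(x)=0$ for $x\ge\tau$, so $\psi$ is continuous and nonincreasing. Hence $\ell$ is continuous, nonincreasing (a sum of nonincreasing functions), and bounded below, with one-sided derivatives at the origin $\ell'(0^-)=\phi'(0)<0$ and $\ell'(0^+)=\phi'(0)-\lambda<0$; in particular $\ell$ is strictly decreasing on a neighbourhood of $0$. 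Using the reflection $C^\ell_{\eta}(\alpha)=C^\ell_{1-\eta}(-\alpha)$ together with the fact that the constraint region has boundary at $\alpha=0$, it suffices to treat $\eta>\tfrac12$, for which the constraint set is $\{\alpha\le 0\}$.

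Write $g:=C^\ell_\eta$. The engine of the argument is the reflection identity $g(\alpha)-g(-\alpha)=(2\eta-1)\bigl(\ell(\alpha)-\ell(-\alpha)\bigr)$. For $\eta>\tfrac12$ and $\alpha<0$, monotonicity of $\ell$ gives $\ell(\alpha)\ge\ell(-\alpha)$, so $g(\alpha)\ge g(-\alpha)$; this shows the global infimum is attained on $\{\alpha\ge 0\}$, i.e. $\inf_{\alpha}g=\inf_{\beta\ge 0}g=:H(\eta)$, and simultaneously $\inf_{\alpha\le0}g=\inf_{\beta\ge0}\bigl(g(\beta)+r(\beta)\bigr)$ with $r(\beta):=(2\eta-1)\bigl(\ell(-\beta)-\ell(\beta)\bigr)\ge 0$. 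Two properties of $r$ are the crux: strict decrease of $\ell$ near $0$ forces $r(\beta)>0$ for every $\beta>0$ (while $r(0)=0$), and since $\ell(-\beta)-\ell(\beta)=\phi(-\beta)+\lambda\tau-\phi(\beta)$ once $\beta\ge\tau$, the limit $\lim_{\beta\to\infty}r(\beta)=(2\eta-1)\bigl(\phi(-\infty)-\phi(\infty)+\lambda\tau\bigr)$ is strictly positive. Separately, differentiating $g$ from the right at $0$ yields $g'(0^+)=(2\eta-1)\phi'(0)-\eta\lambda<0$, so that $g(0)>H(\eta)$.

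It then remains to upgrade the pointwise bound $g(\beta)+r(\beta)\ge H(\eta)$ into a strict separation of the two infima. I would argue by contradiction: if $\inf_{\beta\ge0}(g+r)=H(\eta)$, pick $\beta_n\ge0$ with $g(\beta_n)+r(\beta_n)\to H(\eta)$; since $g\ge H(\eta)$ and $r\ge0$, this forces $g(\beta_n)\to H(\eta)$ and $r(\beta_n)\to0$. Because $r$ is continuous, strictly positive on $(0,\infty)$, and has a strictly positive limit at $+\infty$, it is bounded below by a positive constant on every $[\delta,\infty)$; hence $r(\beta_n)\to0$ forces $\beta_n\to0$, and continuity gives $g(0)=H(\eta)$, contradicting $g(0)>H(\eta)$. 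Therefore $\inf_{\alpha\le0}g>H(\eta)$, which is exactly the calibration inequality. I expect the main obstacle to be precisely this last step: the gap $r$ degenerates to $0$ as $\beta\to0$ and could a priori vanish as $\beta\to\infty$, so the proof must combine the purely local information at the origin ($g'(0^+)<0$) with a genuinely quantitative statement that $r$ stays bounded away from $0$ off the origin. This is where strict monotonicity of $\phi$ (equivalently $\phi(-\infty)>\phi(\infty)$) and boundedness below are used, and it is the reason a naive ``the minimizer lies on the correct side'' argument is insufficient for the nonconvex difference-of-convex loss $\ell$.
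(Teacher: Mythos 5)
Your proof is correct. At its core it rests on the same two facts as the paper's own argument: the reflection identity $C^{\ell}_{\eta}(\alpha)-C^{\ell}_{\eta}(-\alpha)=(2\eta-1)\bigl(\ell(\alpha)-\ell(-\alpha)\bigr)$, and the strict local decrease of $\ell$ at the origin coming from $\phi'(0)<0$ together with the one-sided kink of $\psi$. Where you diverge is the endgame. The paper fixes a single $\alpha_0>0$ with $C^{\ell}_{\eta}(\alpha_0)<C^{\ell}_{\eta}(0)$, splits the constraint region into $(-\infty,-\alpha_0]$ and $[-\alpha_0,0]$, obtains a uniform additive gap $(2\eta-1)\bigl(\ell(0)-\ell(\alpha_0)\bigr)$ on the far piece by monotonicity, and handles the near piece by extracting a minimizer on the compact interval and arguing by cases. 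You instead package the gap as the function $r(\beta)=(2\eta-1)\bigl(\ell(-\beta)-\ell(\beta)\bigr)$, show it is bounded away from zero on every $[\delta,\infty)$, and close with a minimizing-sequence contradiction anchored by $g'(0^{+})<0$. The two routes are logically equivalent, and yours makes explicit a point the paper glosses over: that flipping the sign of a nonzero near-minimizer \emph{strictly} decreases the risk requires $\ell(-\beta)>\ell(\beta)$ for all $\beta>0$, which you correctly derive from $\ell(t)<\ell(0)$ for small $t>0$ plus monotonicity. Conversely, your analysis of $r$ at infinity is not really needed: for $\beta\ge\delta$ one already has $r(\beta)\ge(2\eta-1)\bigl(\ell(0)-\ell(\delta')\bigr)>0$ for any fixed $\delta'\in(0,\delta]$ with $\ell(\delta')<\ell(0)$, which is exactly the paper's far-piece bound. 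Two small touch-ups: the phrase ``$\ell$ is strictly decreasing on a neighbourhood of $0$'' is slightly stronger than what $\phi'(0)<0$ yields (you only get $\ell(s)>\ell(0)>\ell(t)$ for $s<0<t$ small, which is all you actually use); and the paper's displayed definition of $C^{\phi}_{\eta}$ contains a typo ($\phi(1-\alpha)$ should read $\phi(-\alpha)$) --- both your proof and the paper's implicitly work with the corrected version.
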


\begin{proof}
Since both $\phi$ and $\psi$ are bounded below, $\ell$ is also bounded below. Denote $m = \inf C_{\eta}^{\ell}(\alpha)$. We need to show $\inf_{\alpha(2\eta - 1) \leq 0}C_{\eta}^{\ell}(\alpha)$ is strictly greater than $m$ for all $\eta \neq \frac12$. We first consider the case $\eta > \frac12$. For the case $\eta < \frac12$, the proof is similar.

The derivative of $C_{\eta}^{\phi}(\alpha)$ at zero is $(2\eta - 1)\phi^{\prime}(0)$, which is negative. Thus there exists $\delta_1 > 0$ such that any $\alpha \in (0, \delta_1)$ satisfies $C_{\eta}^{\phi}(\alpha) < C_{\eta}^{\phi}(0)$.

The right hand derivative of $\psi$ at zero is negative, thus there exist $\delta_2 > 0$ such that any $\alpha \in (0, \delta_2)$ satisfies. $\psi(\alpha) < \psi(0)$. Notice that $\psi(\alpha)$ is constant when $\alpha \leq 0$. Thus any $\alpha \in (0, \delta_2)$ satisfies $C_{\eta}^{\psi}(\alpha) < C_{\eta}^{\psi}(0)$.

Combining above arguments, there exists a sufficiently small $\alpha_0 > 0$, such that $C_{\eta}^{\ell}(\alpha_0) < C_{\eta}^{\ell}(0)$. Splitting the interval $(-\infty, 0]$ into $(-\infty, -\alpha_0]$ and $[-\alpha_0, 0]$, it is sufficient to show that both
\begin{equation}
\inf_{\alpha \leq - \alpha_0} C_{\eta}^{\ell}(\alpha) > m
\end{equation}
and
\begin{equation}
\inf_{- \alpha_0 \leq \alpha \leq 0} C_{\eta}^{\ell}(\alpha) > m
\end{equation}
hold.

For $\alpha \leq - \alpha_0$,
\begin{align}
C_{\eta}^{\ell}(\alpha) - C_{\eta}^{\ell}(- \alpha) & = (2\eta - 1)(\ell(\alpha) - \ell(-\alpha)) \\
& \geq (2\eta - 1)(\ell(0) - \ell(- \alpha)) \\
& \geq (2\eta - 1)(\ell(0) - \ell(\alpha_0))
\end{align}
Namely, flipping the sign of $\alpha$ can strictly decrease the value of $C_{\eta}^{\ell}(\alpha)$. Thus,
\begin{align}
\inf_{\alpha \leq -\alpha_0} C_{\eta}^{\ell}(\alpha) \geq & (2\eta - 1)(\ell(0) - \ell(\alpha_0)) + m \\
> & m
\end{align}
For $-\alpha_0 \leq \alpha \leq 0$, by continuity there exists a minimizer $\alpha^\star$ for $C_{\eta}^{\ell}(\alpha)$ on this compact set. If $\alpha^\star \neq 0$, then we can again flip the sign of $\alpha$ to get a strictly smaller value. Thus $C_{\eta}^{\ell}(\alpha^\star) > C_{\eta}^{\ell}(- \alpha^\star) \geq m$.

If $\alpha^\star = 0$,
\begin{equation}
C_{\eta}^{\ell}(0) > C_{\eta}^{\ell}(\alpha_0) \geq m
\end{equation}

\end{proof}

By above lemma, we can combine it with most loss functions without changing the calibrated property.

A few points to be highlighted:
\begin{itemize}
\item When $\tau = 1$, the regularizer is \textbf{exactly} the truncated hinge.
\item The regularizer includes accuracy into it. We won't be surprise that mininizing the regularizer alone can lead to a reasonable classifier. If we want to decouple accuracy, we should use 
\item It is the regularizer, \textbf{combining} with the norm constraint $\|w\|_2 = 1$, that leads to a classifier with large average distance. Minimizing the regularizer alone can still lead to hard margin SVM. A concrete counterexample is truncate hinge SVM with sufficiently small $\lambda$:
\begin{equation}
\min ~ \lambda \|w\|_2 + \sum_{i = 1}^{n} H_1(y_i\hat{y}_i) - H_0(y_i\hat{y}_i)
\end{equation}
\end{itemize}

\subsection{Multiclass}
Suppose $x$ is correctly classified as label $c$, then the distance the the decision boundary is
\begin{equation}
\min_{j \neq c}\frac{(w_c - w_j)^T x + b_c - b_j}{\|w_c - w_j\|}
\end{equation}
Here we do a relaxation. We only maximize the numerator, while forcing the denominator bounded. If denote
\begin{equation}
\alpha = \min_{j \neq c}\frac{(w_c - w_j)^T x + b_c - b_j}{\|w_c - w_j\|}
\end{equation}
then again we add a regularization term $H_{\tau}(\alpha) - H_{0}(\alpha)$ to the objective function.

\subsection{Optimization}
Note that our regularizer is nonconvex and nondifferentiable. In addition, the gradient of the regularizer is zero in $(-\infty, 0)$ and $(\tau, +\infty)$, meaning gradient descent cannot perform any update in these intervals. Both increase the difficulty of optimization, even for linear classifiers. However, the regularizer can be written as difference of two convex functions. Thus we use DCA to optimize the objective function. For the inner loop of DCA, we use (stochastic) projected subgradient descent to minimize the subproblem. In practice, it converges very fast, comparable to normal training.

\fi


\section{Experiments}
\label{sec:exp}

In this section, we conduct experiments to (a) verify the minimum-average margin trade-off on a variety of (nonlinear) deep models; (b) demonstrate the effectiveness of our average margin regularizer. First, we train six models with various architectures on MNIST and CIFAR10 to verify the trade-off again. Then, we retrain these models with our average margin regularizer, and confirm that our regularizer indeed can effectively promote the average margin. Finally, we compare our average margin regularizer with other state-of-the-art defense techniques in the literature and find that our regularizer achieves comparable performance in terms of robustness and accuracy.


\subsection{Approximating Distance}
In general, exactly computing the distance to the decision boundary of deep models is intractable \cite{KatzBDJK17,WengZCSHBDD18}. Instead, we use the following Lipschitz lower bound as a reasonable approximation:
\begin{theorem}[\citep{HeinAndriushchenko17}]
\label{thm:approx_margin}
Suppose $\fb: \RR^d \to \RR^c$ is a multiclass classifier (with argmax prediction rule). Then, for any $r > 0$:
\begin{align}
\label{eq:lb}
\dsf(\xv, \bd \Fs_{\hat{y}}) \geq \min\left\{\min_{k \neq \hat{y}} \tfrac{f_{\hat{y}}(\xv) - f_k(\xv)}{L^k_{\xv}}, r\right\},
\end{align}
where $L^k_{\xv}$ is the local Lipschitz constant of $f_{\hat{y}}(\xv) - f_k(\xv)$ over the ball $B(\xv, r)$
\end{theorem}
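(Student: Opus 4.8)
The plan is to show that an entire open ball around $\xv$ is classified as $\hat y$ with a strictly positive margin, so the decision boundary $\bd\Fs_{\hat y}$ cannot intrude into it. For each $k \neq \hat y$ write $g_k(\xv') := f_{\hat{y}}(\xv') - f_k(\xv')$; since $\hat y$ is the argmax label at the base point, $g_k(\xv) \geq 0$ for all $k$. Set $\rho := \min\{\min_{k\neq\hat y} g_k(\xv)/L^k_{\xv},\, r\}$, which is exactly the claimed lower bound. The crucial feature to exploit is that $\rho \leq r$, so every perturbation of length below $\rho$ keeps us inside the ball $B(\xv,r)$ on which the local Lipschitz estimates for the $g_k$ are valid.

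First I would fix an arbitrary $\zv$ with $\|\zv\| < \rho$ and bound each $g_k(\xv+\zv)$ from below. Because $\xv+\zv \in B(\xv,r)$, the local Lipschitz property of $g_k = f_{\hat y} - f_k$ yields $g_k(\xv+\zv) \geq g_k(\xv) - L^k_{\xv}\|\zv\|$. Since $\|\zv\| < \rho \leq g_k(\xv)/L^k_{\xv}$, the right-hand side is strictly positive, hence $g_k(\xv+\zv) > 0$, i.e. $f_{\hat y}(\xv+\zv) > f_k(\xv+\zv)$. As this holds simultaneously for every $k \neq \hat y$, the argmax at $\xv+\zv$ is uniquely $\hat y$.

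Next I would convert this into the distance bound. Having established the strict inequalities $f_{\hat y} > f_k$ for all $k \neq \hat y$ at every point of the open ball $B(\xv,\rho)$, the ball is an open set entirely contained in $\Fs_{\hat y}$, hence contained in $\intr\Fs_{\hat y}$ and therefore disjoint from $\bd\Fs_{\hat y}$. By the definition \eqref{eq:distance} of the distance to a set, no boundary point lies within distance $\rho$ of $\xv$, so $\dsf(\xv,\bd\Fs_{\hat y}) \geq \rho$, which is precisely \eqref{eq:lb}.

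The routine ingredients here—the Lipschitz inequality and the elementary fact that an open subset of a set lies in its interior—are standard; the one place that genuinely requires care is the truncation at $r$. The local Lipschitz constant $L^k_{\xv}$ is only guaranteed to control $g_k$ on $B(\xv,r)$, so the lower bound on $g_k(\xv+\zv)$ is legitimate only for perturbations of length below $r$; capping $\rho$ at $r$ is exactly what keeps every perturbation under consideration inside this ball, and is the reason the bound cannot be stated without the outer minimum. I would also record the degenerate cases for completeness: if $L^k_{\xv}=0$ then $g_k$ is constant on the ball and the ratio is $+\infty$ (class $k$ imposes no constraint), while if some $g_k(\xv)=0$ (a tie already at $\xv$) then $\rho=0$ and the inequality holds vacuously.
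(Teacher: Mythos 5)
The paper does not prove this theorem; it is imported verbatim from Hein and Andriushchenko (2017), so there is no in-paper argument to compare against. Your proof is correct and is essentially the standard argument from that reference: the local Lipschitz bound gives $f_{\hat y}(\xv+\zv)-f_k(\xv+\zv)\geq f_{\hat y}(\xv)-f_k(\xv)-L^k_{\xv}\|\zv\|>0$ for every $\|\zv\|<\rho$, the resulting open ball therefore lies in $\intr\Fs_{\hat y}$ and misses $\bd\Fs_{\hat y}$, and the cap at $r$ is precisely what keeps the perturbed points inside the ball on which $L^k_{\xv}$ is defined. Your handling of the degenerate cases ($L^k_{\xv}=0$ and ties at $\xv$) is a sensible addition that the informal statement glosses over.
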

The lower bound \eqref{eq:lb} is in fact exact when $\fb$ is linear, and for general nonlinear $\fb$, it is reasonably close to the true distance, as shown empirically in \cite{WengZCYSGHD18}.
Here, our estimate for the Lipschitz constant $L^k_{\xv}$ is simply the maximum norm of the gradient (difference) of many random samples in the ball $B(\xv, r)$, as we found this simple strategy is reasonably efficient and accurate. For space limits, we defer the experimental details on approximating this distance to the appendix.


\subsection{Verifying the Margin Trade-off again on Deep Nonlinear Models}
Due to space limits, here we only present the experimental results on the CIFAR10 dataset, while results on MNIST can be found in the appendix. For each model, we plot the average margin and minimum margin \wrt the number of training epochs
in \Cref{fig:cifar_avg_min_dist}. It is clear that similar phenomenon as those in \S\ref{sec:study} can be observed: the minimum margin continues increasing while at the same time the average margin keeps decreasing. This again highlights an intrinsic trade-off between minimum margin and average margin. In addition, we consistently observed that the average margin continues to decrease even after the training/test error has saturated (see training curves \Cref{fig:training_curves} in the appendix).


To investigate how the margin distribution changes during training, we plot the histograms of margins in \Cref{fig:cifar_hist} (appendix). As training proceeds, we observe again the margin distribution shifts towards left, meaning the majority of data points is pushed closer to the boundary. To some extent, this provides compelling explanation of the non-robustness of deep models: although achieving high accuracy by maximizing the minimum margin, the average margin, which is a better indicator of robustness, is at serious jeopardy. Note that early stopping, while helps preventing the average margin to decrease unnecessarily, is not sufficient by itself to promote average margin. Instead, an explicit average margin regularizer is more effective, as we show next.

\begin{figure}
\begin{subfigure}{0.32\textwidth}
\includegraphics[width = \textwidth, height = 0.10\textheight]{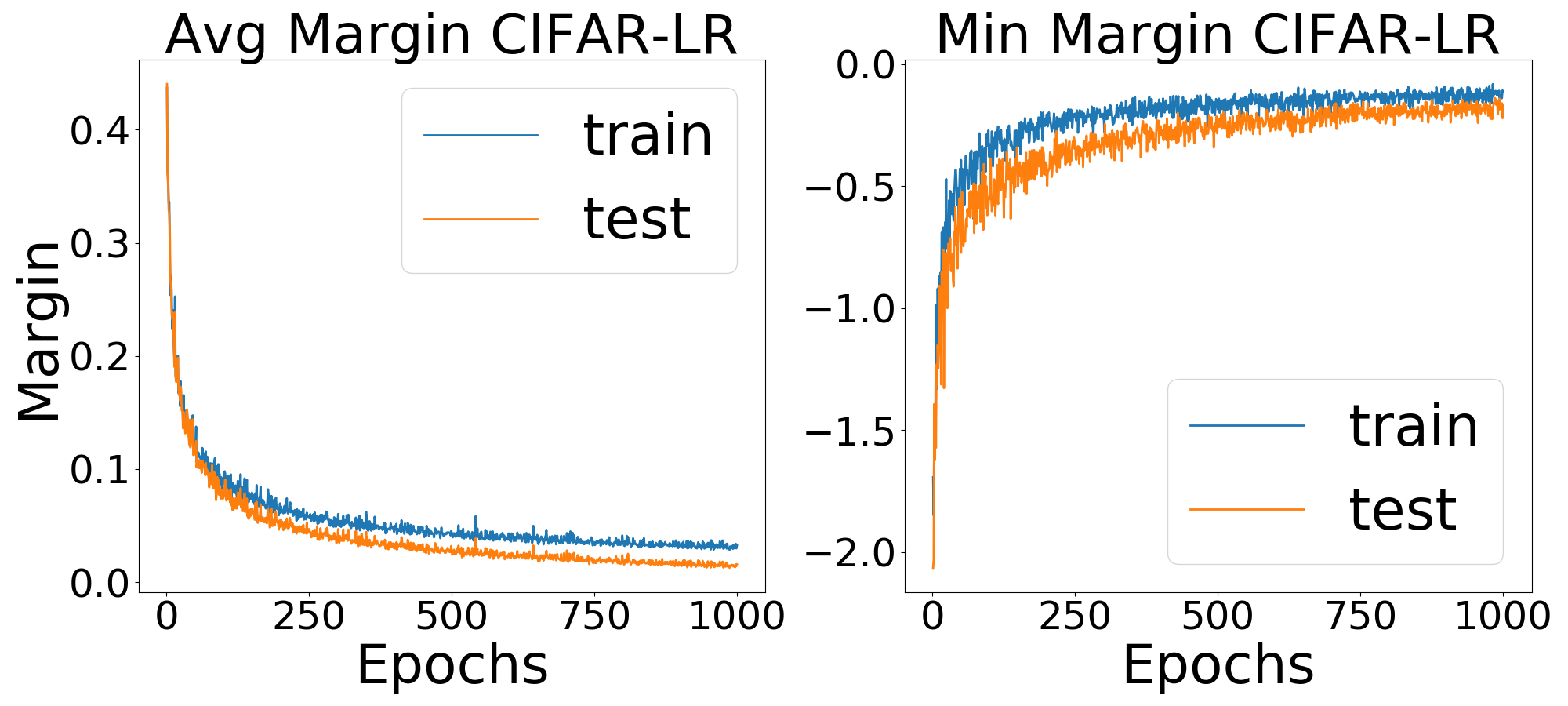}
\caption*{Avg vs Min CIFAR-LR}
\end{subfigure}
\begin{subfigure}{0.32\textwidth}
\includegraphics[width = \textwidth, height = 0.10\textheight]{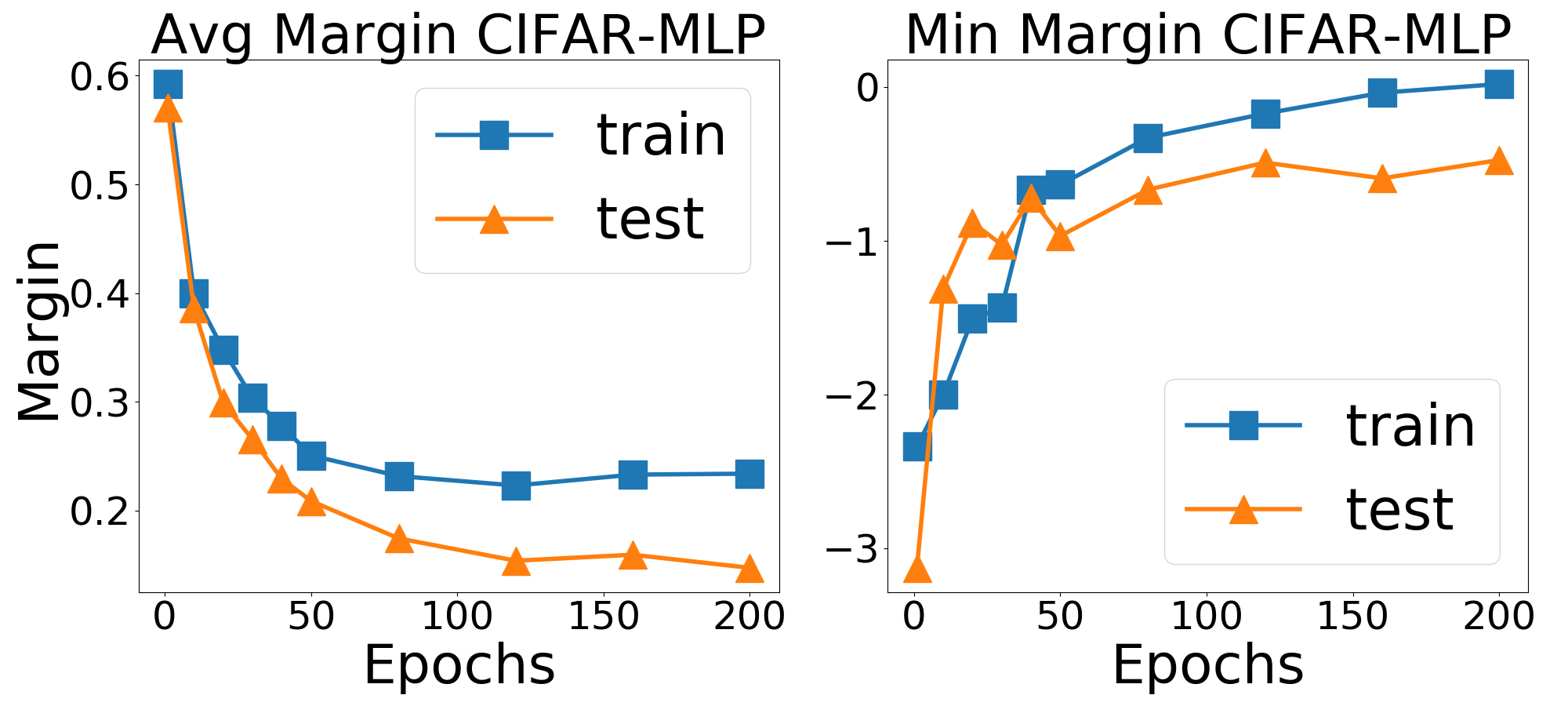}
\caption*{Avg vs Min CIFAR-MLP}
\end{subfigure}
\begin{subfigure}{0.32\textwidth}
\includegraphics[width = \textwidth, height = 0.10\textheight]{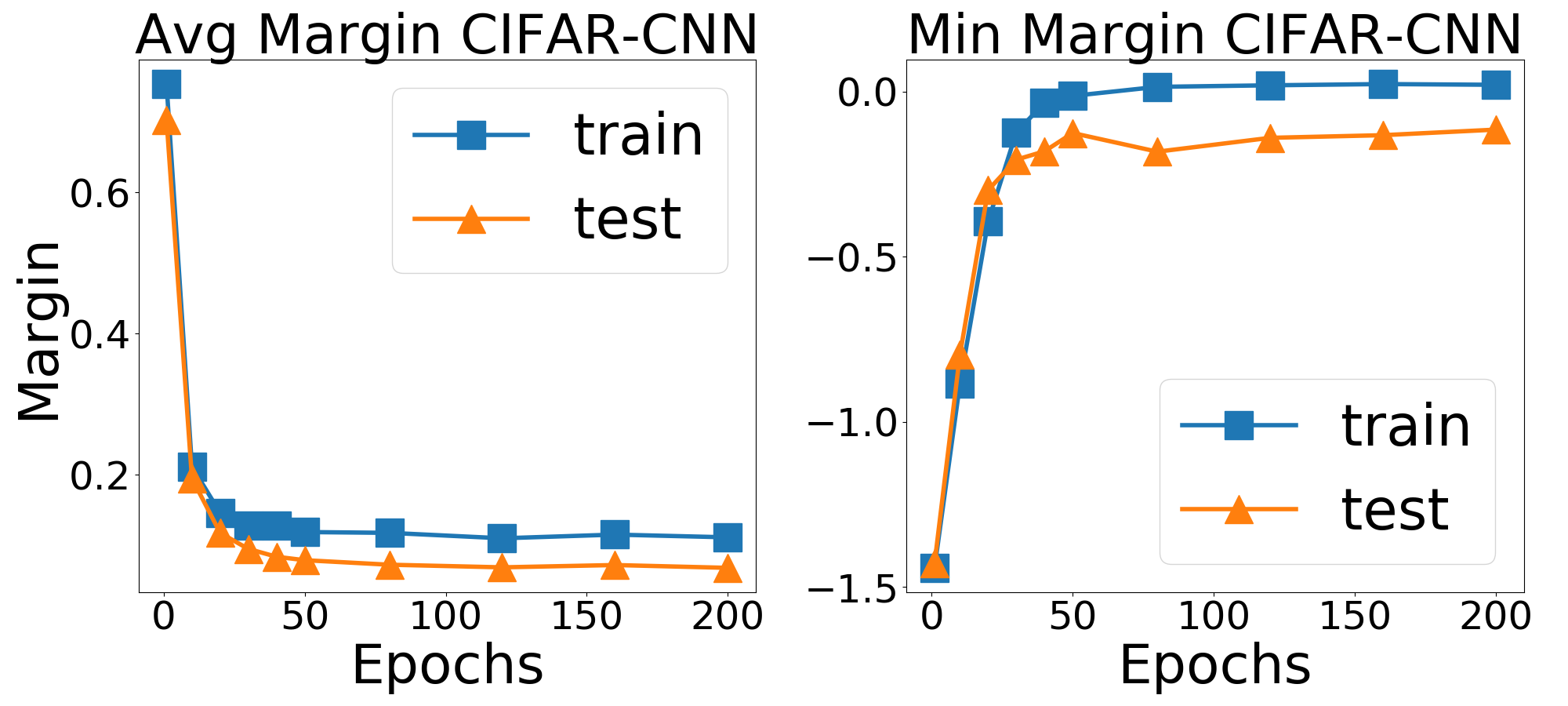}
\caption*{Avg vs Min CIFAR-CNN}
\end{subfigure}
\caption{Average and minimum margin during training of different models on CIFAR10.}
\label{fig:cifar_avg_min_dist}
\end{figure}

\begin{figure}
\begin{subfigure}{0.32\textwidth}
\includegraphics[width = \textwidth, height = 0.10\textheight]{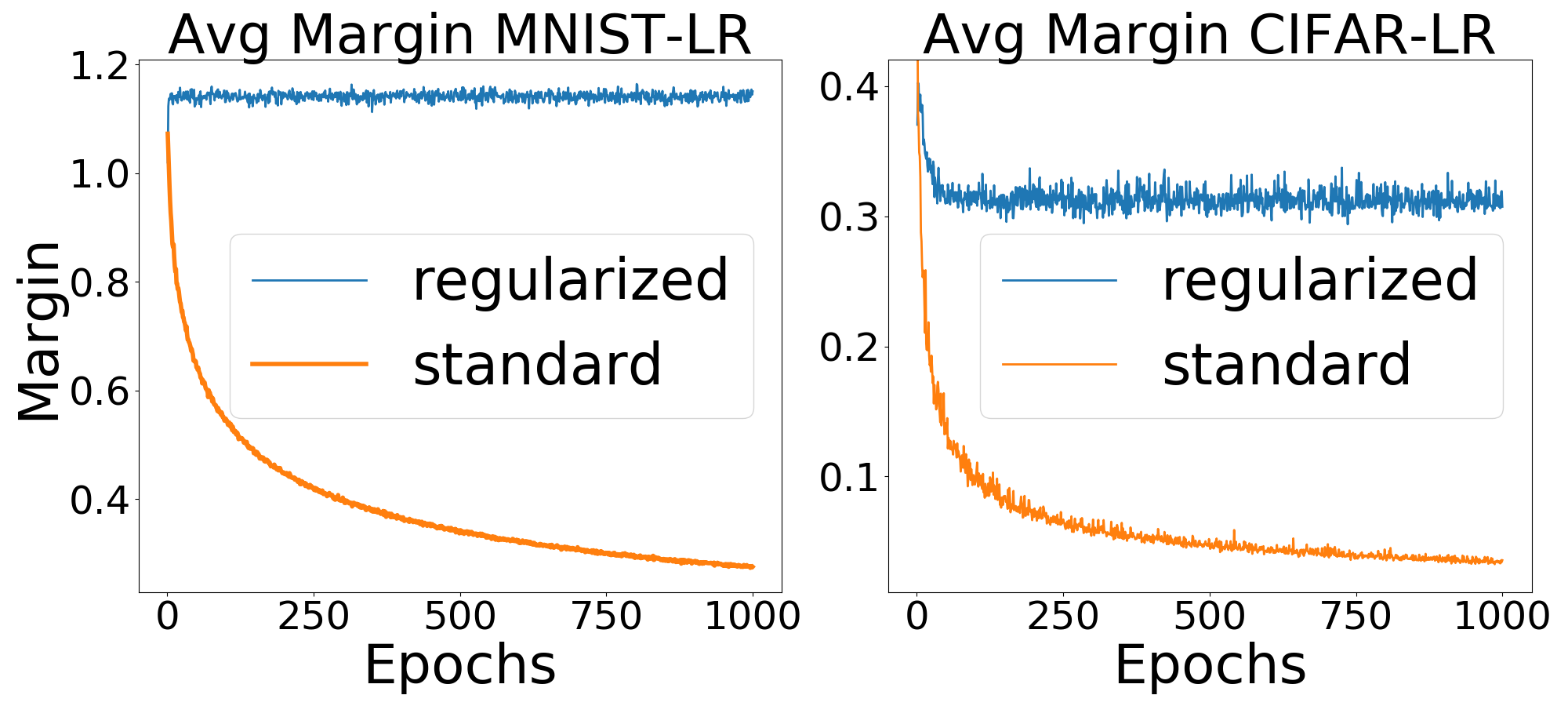}
\caption*{Avg Margin of LR}
\end{subfigure}
\begin{subfigure}{0.32\textwidth}
\includegraphics[width = \textwidth, height = 0.10\textheight]{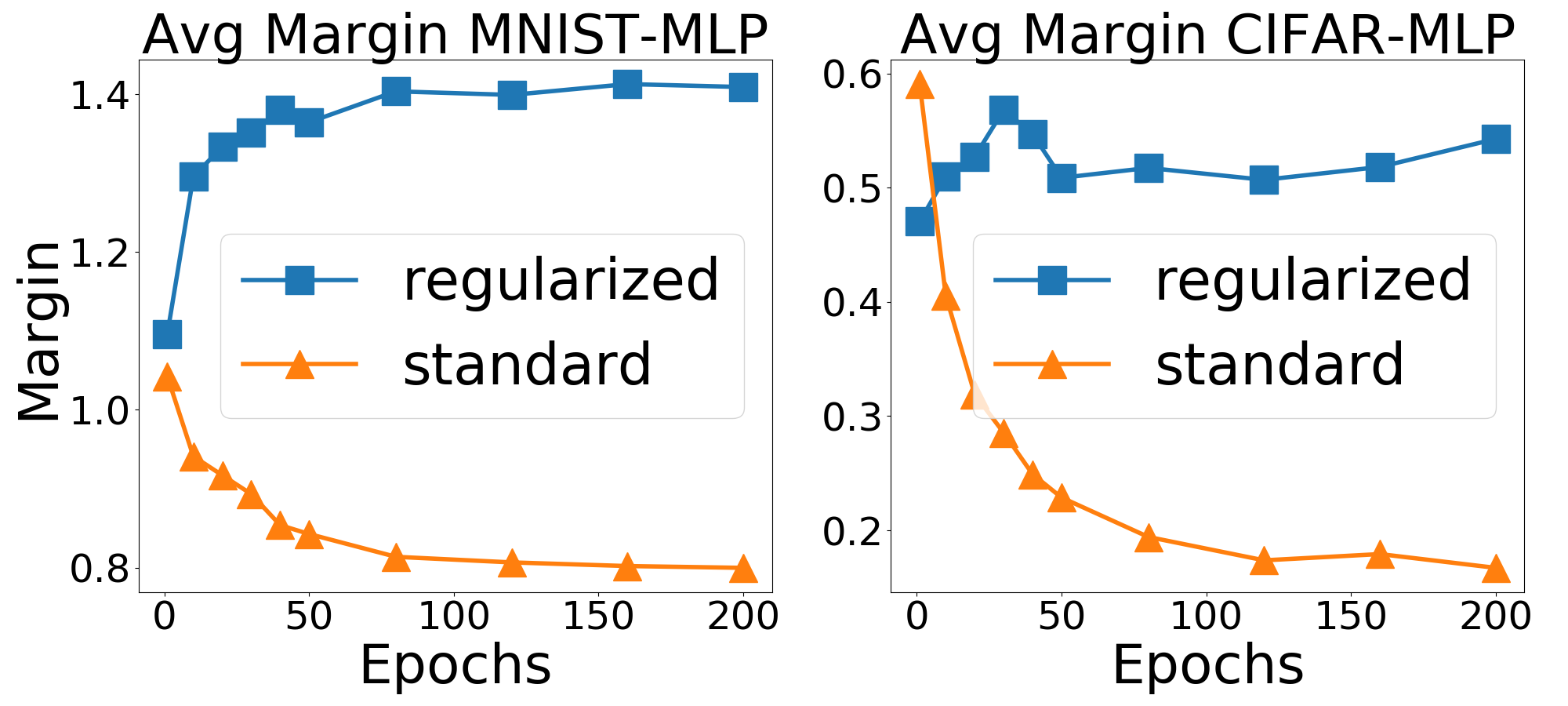}
\caption*{Avg Margin of MLP}
\end{subfigure}
\begin{subfigure}{0.32\textwidth}
\includegraphics[width = \textwidth, height = 0.10\textheight]{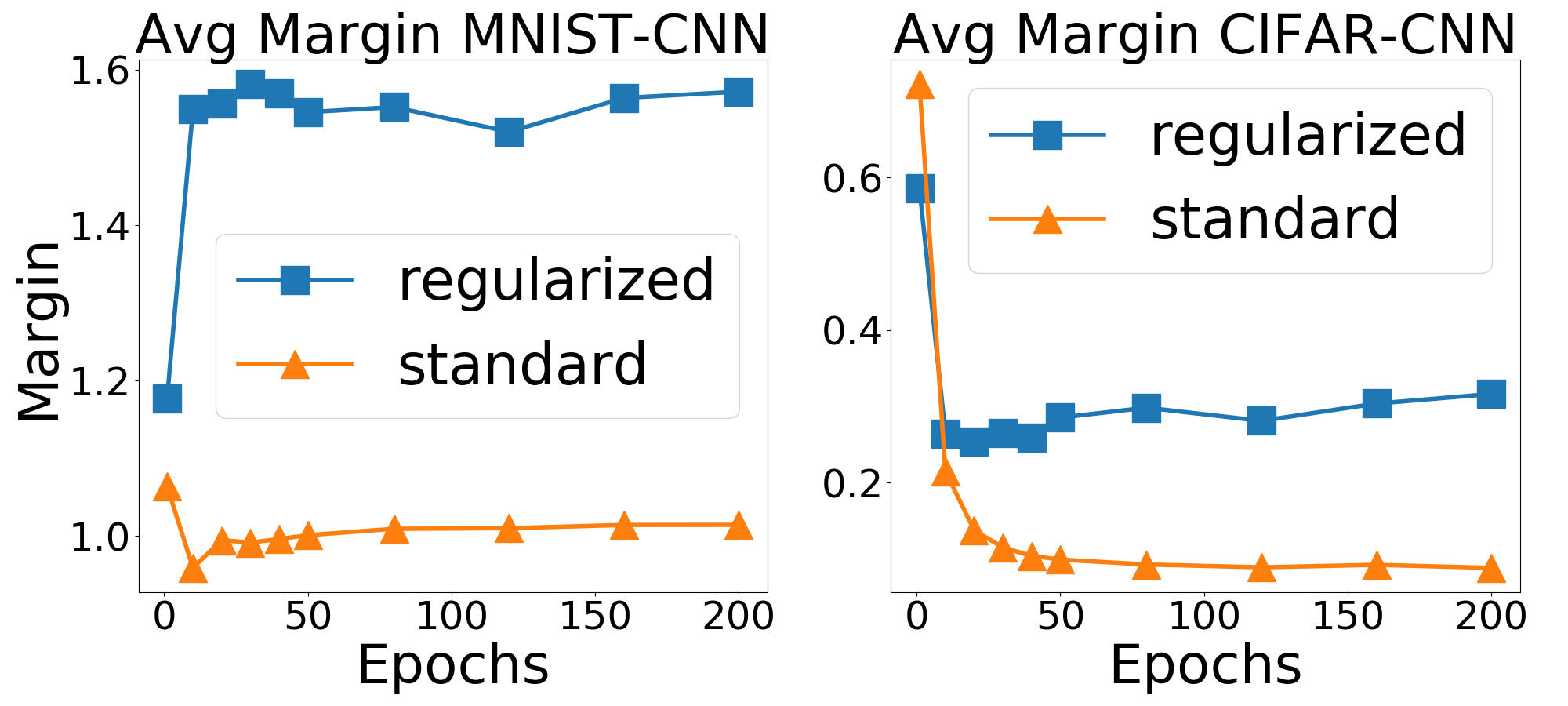}
\caption*{Avg Margin of CNN}
\end{subfigure}
\caption{Average margin of regularized and standard training.}
\label{fig:regularizer_vs_normal}
\end{figure}

\subsection{Average Margin Regularizer}
\label{subsec:amr}

We retrain six models augmented with our average margin regularizer, and compare the average margin with standard training in \Cref{fig:regularizer_vs_normal}. It is clear that with our average margin regularizer, the regularized models no longer sacrifice the average margin during training. For all six models, adding the average margin regularizer significantly improves adversarial robustness. For example, the average margin of LR and MLP become $2 \sim 3$ times more than those of models with standard training. For CNN, the average margin is increased by a factor of $1.5 \sim 2$.




Next, we compare our regularizer with Lipschitz constant regularization. In our regularizer, we use the orthogonal constraint $\| W W ^ \top - I\|_{\mathrm{F}}^2$ to control the Lipschitz constant. The idea of using orthogonal constraint as Lipschitz constant regularizer has been proposed before (\eg Paserval networks \citep{cisse2017parseval} and defensive quantization \citep{lin2018defensive}). Here, we prvide a comparison between Lipschitz constant regularization and our average margin regularizer. We evaluate models using clean accuracy, robust accuracy under $l_2$ norm projected gradient descent (PGD) attack \cite{GoodfellowSS15}, and the average margin. We run $1000$ iterations of PGD to generate adversarial examples, whose success serves as a good approximation of robust accuracy. The results can be found in \Cref{tb:lcr_amr}. As we can see, when $\epsilon$, the amount of allowed perturbations, is small, the accuracy of our average margin regularizer is comparable to that of the Lipschitz constant regularizer (LCR); but as $\epsilon$ becomes larger, the average margin regularizer consistently outperforms the latter. Interestingly, LCR alone can also improve adversarial robustness marginally \parencite{cisse2017parseval, lin2018defensive}, when $\epsilon$ is small. This may be caused by \emph{implicit} max-margin effect of the loss function used in training. However, as $\epsilon$ becomes larger, the robust accuracy of LCR is inferior to our method, sometimes even worse than standard training,
which is also confirmed in the experimental results in \citep{cisse2017parseval,yan2018deep}. This implies that controlling Lipschitz constant alone may not be enough to train a robust model. Instead, one needs to maximize the average margin explicitly.


We also compare our method with deep defense (DD) \citep{yan2018deep}, PGD adversarial training (Adv) \citep{MadryMSTV18} and the maximum margin regularizer (MMR) \citep{croce2019provable}, which maximizes the linear region of ReLU netowrks. The results are shown in \Cref{tb:comparison}. Compared with standard training, our method can significantly improve  model robustness. On MNIST, our method achieves overall better robust accuracy. On CIFAR10, our method achieves comparable results as adversarial training and deep defense. In addition, our method consistently outperforms MMR on both datasets. Moreover, our method is more efficient than adversarial training and deep defense. In fact, adversarial training would increase the training time by the number of PGD iterations (typically tens or hundreds), and deep defense is even more time consuming, to a point where it can only be applied as fine-tuning \citep{yan2018deep}, while the training time of our method is roughly the same as the standard training.

\begin{table}[t]
\scriptsize
\centering
\caption{Comparision between AMR (our average margin regularizer) and LCR (Lipschitz constant regularization). \textbf{Column 4-9:} clean accuracy. \textbf{Column 5-8:} robust accuracy under $l_2$ PGD attack. \textbf{The last column:} average margin.
}
\begin{tabular}{c c c c c c c c c c}
\toprule
& Models & Method & Clean & $\epsilon = 0.5$ & $\epsilon = 1.0$ & $\epsilon = 1.5$ & $\epsilon = 2.0$ & Avg Margin \\
\toprule
\multirow{6}{*}{MNIST} &
\multirow{3}{*}{MLP} & Std & \bf{98.24} & 89.26 & 49.23 & 15.78 & 5.06 & 0.80 \\
                     & & LCR & 95.99 &  91.12 &  80.62 & 60.56 & 34.07 & 1.36 \\
                     & & AMR & 96.01 & \bf{91.18} &  \bf{81.01} & \bf{62.93} & \bf{38.44} & \bf{1.41} \\
\cmidrule{2-9}
& \multirow{3}{*}{CNN} & Std & 99.14 & 95.95 & 90.48 & 88.72 & 87.52 & 1.01 \\ 
                     & & LCR & \bf{99.29} & 97.21 & 87.83 & 58.30 & 26.96 & 1.34 \\
                     & & AMR & 99.25 & \bf{97.90} & \bf{97.60} & \bf{97.51} & \bf{97.33} & \bf{1.40} \\
\toprule
& Models & Method & Clean & $\epsilon = 0.1$ & $\epsilon = 0.2$ & $\epsilon = 0.3$ & $\epsilon = 0.4$ & Avg Margin \\
\toprule
\multirow{6}{*}{CIFAR} &
\multirow{3}{*}{MLP} & Std & \bf{54.04} & 39.77 & 26.67 & 16.77 & 9.95 &  0.17 \\
                     & & LCR & 50.09 & 46.23 & 41.65 & 37.09 & 32.62 & 0.45 \\
                     & & AMR & 50.36 & \bf{46.28} & \bf{42.81} & \bf{39.06} & \bf{35.67} & \bf{0.54}\\
\cmidrule{2-9}
& \multirow{3}{*}{CNN} & Std & 78.77 & 54.32 & 39.81 & 37.14 & 36.27 & 0.09 \\
                     & & LCR & \bf{80.54} & \bf{70.84} &  58.72 & 44.82 & 32.54 & 0.26 \\
                     & & AMR & 78.12 & 69.59 & \bf{59.84} & \bf{49.02} & \bf{38.65} & \bf{0.31} \\
\bottomrule
\end{tabular}
\label{tb:lcr_amr}
\end{table}

\begin{table}[t]
\scriptsize
\centering
\caption{Comparison between 5 different training methods: Std (standard training), DD (deep defense \citep{yan2018deep}), Adv (PGD adversarial training \citep{MadryMSTV18}), MMR (maximum margin regularizer \citep{croce2019provable}), and AMR (ours). \textbf{Column 4:} clean accuracy. \textbf{Column 5-8:} robust accuracy under $l_2$ PGD attack.
}
\begin{tabular}{c c c c c c c c}
\toprule
& & Method & Clean & $\epsilon = 0.5$ & $\epsilon = 1.0$ & $\epsilon = 1.5$ & $\epsilon = 2.0$ \\
\toprule
\multirow{6}{*}{\begin{sideways}MNIST\end{sideways}} & 
\multirow{4}{*}{LeNet} & Std & 98.98 & 95.75 & 80.59 & 44.36 & 22.64 \\
                                                    & & DD & 99.34 & \bf{97.64} & 92.09 & 83.53 & 79.35 \\
                                                    & & Adv & \bf{99.48} & 97.45 & 91.99 & 88.88 & 87.51 \\
                                                    & & AMR & 99.01 & 96.80 & \bf{94.03} & \bf{93.61} & \bf{93.12} \\
\cmidrule{2-8}
& \multirow{2}{*}{LeNetSmall} & MMR & 97.43 & 89.90 & 58.86 & 23.95 & 6.80 \\
                                                    & & AMR & \bf{97.83} & \bf{91.94} & \bf{73.70} & \bf{32.56} & \bf{8.79} \\
\toprule
& & Method & Clean & $\epsilon = 0.1$ & $\epsilon = 0.2$ & $\epsilon = 0.3$ & $\epsilon = 0.4$ \\
\toprule
\multirow{6}{*}{\begin{sideways}CIFAR\end{sideways}} &
\multirow{4}{*}{ConvNet} & Std & 76.61 & 56.47 & 37.37 & 30.17 & 28.89 \\
                                                    & & DD & \bf{79.10} & \bf{68.95} & 59.51 & \bf{56.84} & \bf{56.38} \\
                                                    & & Adv & 75.27 & 68.79 & \bf{61.79} & 54.35 & 47.00 \\
                                                    & & AMR & 76.77 & 68.00 & 57.87 & 52.38 & 50.03 \\
\cmidrule{2-8}
& \multirow{2}{*}{LeNetSmall} & MMR & 59.07 & 49.01 & 39.42 & 30.41 & 22.53 \\
                                                    & & AMR & \bf{68.40} & \bf{61.33} & \bf{53.84} & \bf{46.03} & \bf{39.05} \\
\bottomrule
\end{tabular}
\label{tb:comparison}
\end{table}

\section{Conclusion}

In this work, we studied adversarial robustness from a margin perspective. We discovered the intrinsic trade-off between minimum and average margin, which appears across different models and datasets. We gave strong empirical evidence that deep models maximize the minimum margin during training to achieve high accuracy, but at the expense of decreasing the average margin significantly hence becoming more susceptible to adversarial attacks. To address the issue, we designed a new regularizer to explicitly maximize average margin and to retain Fisher consistency. Our extensive experiments confirmed the effectiveness of our regularizer. In the future, we will theoretically analyze the trade-off to provide further insights to the phenomenon. Moreover, we plan to go beyond just maximizing average margin, by controlling other margin distribution statistics such as variance.





\newpage


\bibliographystyle{unsrtnat}

\bibliography{main}

\newpage

\appendix


\newpage

\appendix

\section{Proofs}
\label{sec:proof}

\begin{proof}[Proof of \Cref{thm:geometric}]
Indeed, according to the definitions in \eqref{eq:class_set} and \eqref{eq:rob_org} we have
\begin{align}
\rsf(\xv) 
&= \inf\{ \|\zv\|: \xv + \zv \not\in \Fs_{\yhat(\xv)}, ~ \xv+\zv\in\Xc \} 
\\
&= \inf\{ \| (\xv+\zv) - \xv\|: \xv + \zv \not\in \Fs_{\yhat(\xv)}, ~ \xv+\zv\in\Xc \} 
\\
&= \dsf(\xv, \bar \Fs_{\yhat(\xv)}) = \dsf(\xv, \bd \Fs_{\yhat(\xv)})
\end{align}
\end{proof}


\begin{proof}[Proof of \Cref{lma:regularizer_calibrated}]
Since both $\phi$ and $\psi$ are bounded below, $\ell$ is also bounded below. Denote $m = \inf_{\alpha \in \RR} C_{\eta}^{\ell}(\alpha)$. We need to show $\inf_{\alpha(2\eta - 1) \leq 0}C_{\eta}^{\ell}(\alpha)$ is strictly greater than $m$ for all $\eta \neq \frac12$. We first consider the case $\eta > \frac12$. For the case $\eta < \frac12$, the proof is similar.

The derivative of $C_{\eta}^{\phi}(\alpha)$ at zero is $(2\eta - 1)\phi^{\prime}(0) < 0$. Thus $\exists \delta_1 > 0$ such that $\forall \alpha \in (0, \delta_1)$ satisfies $C_{\eta}^{\phi}(\alpha) < C_{\eta}^{\phi}(0)$.

The right hand derivative of $\psi$ at zero is negative, thus there $\exists \delta_2 > 0$ such that $\forall \alpha \in (0, \delta_2)$ satisfies $\psi(\alpha) < \psi(0)$. Notice that $\psi(\alpha)$ is constant when $\alpha \leq 0$. Thus $\forall \alpha \in (0, \delta_2)$ satisfies $C_{\eta}^{\psi}(\alpha) < C_{\eta}^{\psi}(0)$.

Notice that $C_{\eta}^{l}(\alpha) = C_{\eta}^{\phi}(\alpha) + \lambda C_{\eta}^{\psi}(\alpha)$. Combining above arguments, there exists a sufficiently small $\alpha_0$, such that $0 < \alpha_0 < \min(\delta_1, \delta_2)$, and $C_{\eta}^{\ell}(\alpha_0) < C_{\eta}^{\ell}(0)$. Splitting the interval $(-\infty, 0]$ into $(-\infty, -\alpha_0]$ and $[-\alpha_0, 0]$, it is sufficient to show that both
\begin{equation}
\inf_{\alpha \leq - \alpha_0} C_{\eta}^{\ell}(\alpha) > m
\end{equation}
and
\begin{equation}
\inf_{- \alpha_0 \leq \alpha \leq 0} C_{\eta}^{\ell}(\alpha) > m
\end{equation}
hold.

For $\alpha \leq - \alpha_0$,
\begin{align}
C_{\eta}^{\ell}(\alpha) - C_{\eta}^{\ell}(- \alpha) & = (2\eta - 1)(\ell(\alpha) - \ell(-\alpha)) \\
\label{eq:nonincreasing1}
& \geq (2\eta - 1)(\ell(0) - \ell(- \alpha)) \\
\label{eq:nonincreasing2}
& \geq (2\eta - 1)(\ell(0) - \ell(\alpha_0)) \\
\label{eq:alpha_0}
& > 0.
\end{align}
The inequalities \eqref{eq:nonincreasing1} and \eqref{eq:nonincreasing2} are because $l$ is a decreasing function. The inequality \eqref{eq:alpha_0} is because the specific choice of $\alpha_0$ strictly decreases the value of $l$ at zero. Namely, flipping the sign of $\alpha$ can strictly decrease the value of $C_{\eta}^{\ell}(\alpha)$. Thus,
\begin{align}
\inf_{\alpha \leq -\alpha_0} C_{\eta}^{\ell}(\alpha) & \geq (2\eta - 1)(\ell(0) - \ell(\alpha_0)) + C_{\eta}^{\ell}(- \alpha) \\
& \geq (2\eta - 1)(\ell(0) - \ell(\alpha_0)) + m \\
& > m.
\end{align}
For $-\alpha_0 \leq \alpha \leq 0$, by continuity there exists a minimizer $\alpha^\star$ for $C_{\eta}^{\ell}(\alpha)$ on this compact set. If $\alpha^\star \neq 0$, then we can again flip the sign of $\alpha$ to get a strictly smaller value. Thus $C_{\eta}^{\ell}(\alpha^\star) > C_{\eta}^{\ell}(- \alpha^\star) \geq m$. If $\alpha^\star = 0$, then
\begin{equation}
C_{\eta}^{\ell}(0) > C_{\eta}^{\ell}(\alpha_0) \geq m.
\end{equation}
Hence the theorem holds.
\end{proof}

\clearpage

\section{Training Curves}
\Cref{fig:lr_loss} shows the training loss, training error and test error of logistic regression in \Cref{sec:study}. As training goes, the error rate on both training set and test set never increase, thus the logistic regression is not overfitting, although trained with excessive number of epochs. This again hightlights that the trade-off between minimum and average margin cannot be caused by overfitting.

\begin{figure}[h]
\centering
\includegraphics[width = \textwidth, height = 0.20\textheight]{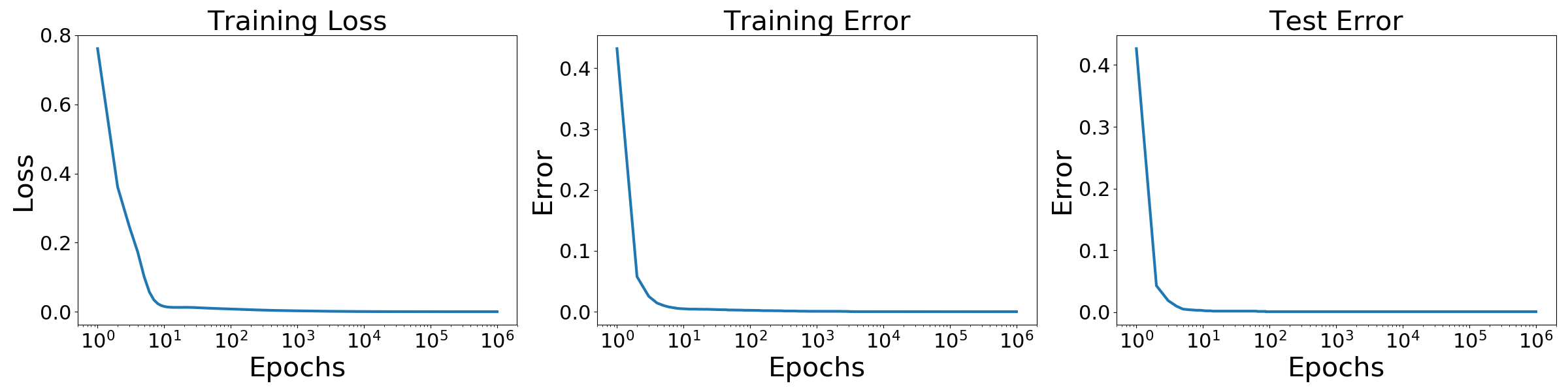}
\caption{Training curves of logistic regression in \Cref{sec:study} classifying $0$ and $1$ on MNIST. For the ease of illustration, x-axis is in log scale. \textbf{Left:} Training loss w.r.t. epochs. \textbf{Middle:} Training error w.r.t. epochs. \textbf{Right:} Test error w.r.t. epochs. The training error reaches zero, thus the subset of MNIST consisting of $0$'s and $1$'s is linear separable. Hence, this dataset strictly satisfies the contition in \Cref{thm:implicit_bias}.}
\label{fig:lr_loss}
\end{figure}

\Cref{fig:training_curves} shows the training curves of six models in \Cref{sec:exp} by standarded training. Although there exists a generalization gap between training errors and test errors, the test errors never increase, thus the trade-off between minimum and average margin cannot be caused by overfitting.

\begin{figure}[h]
\centering
\begin{subfigure}{\textwidth}
\centering
\includegraphics[width = 0.31\textwidth, height = 0.18\textheight]{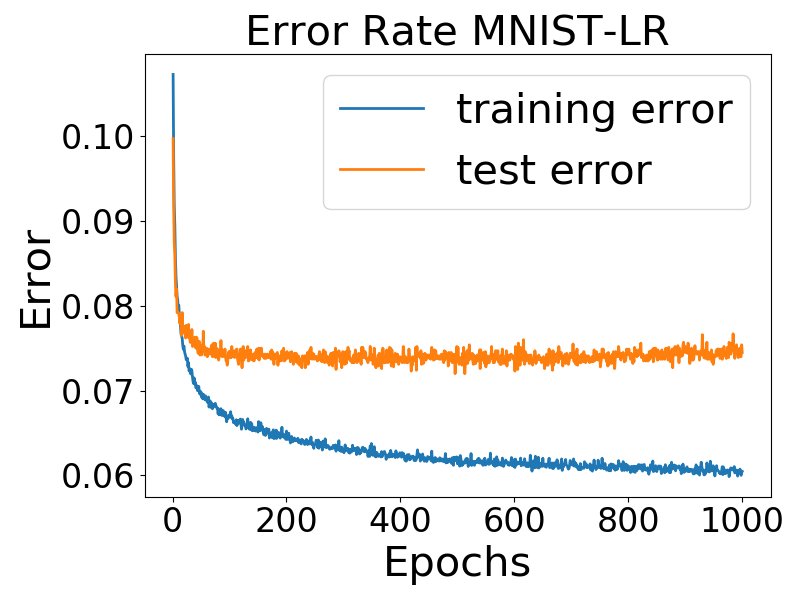}
\includegraphics[width = 0.31\textwidth, height = 0.18\textheight]{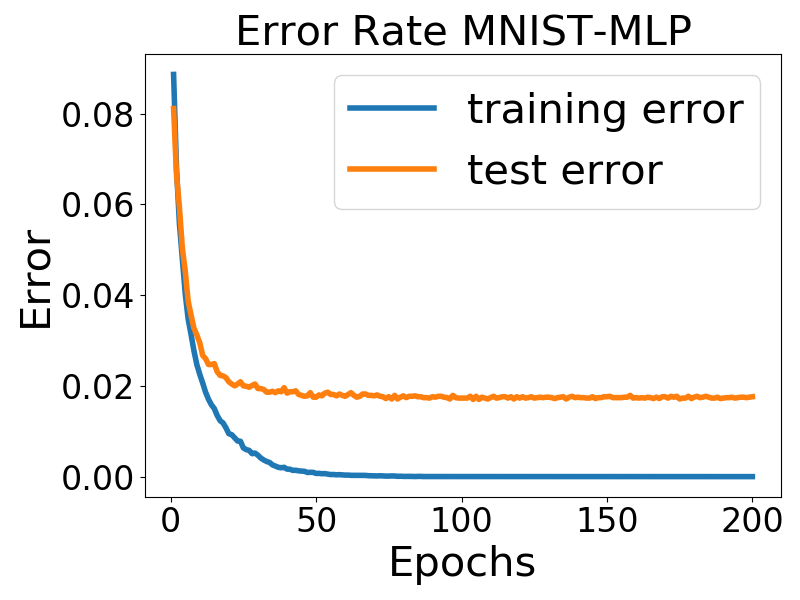}
\includegraphics[width = 0.31\textwidth, height = 0.18\textheight]{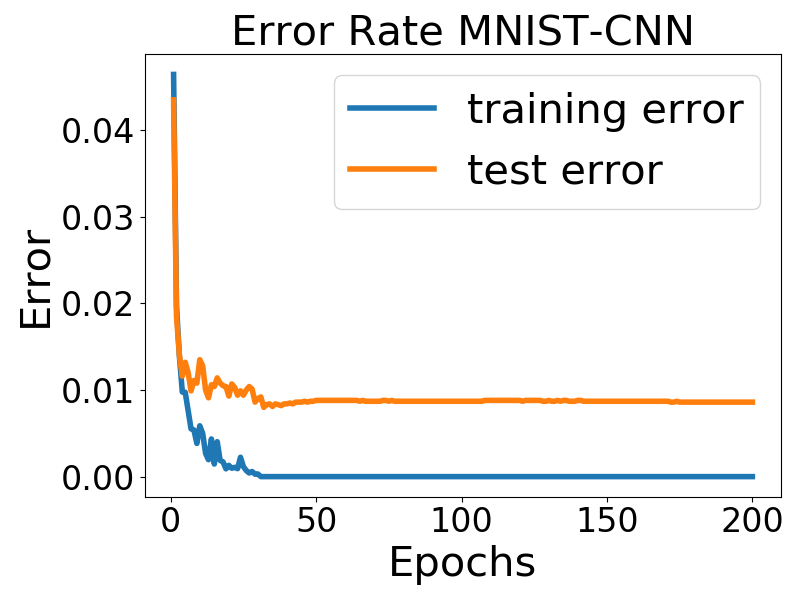}
\end{subfigure}
\begin{subfigure}{\textwidth}
\centering
\includegraphics[width = 0.31\textwidth, height = 0.18\textheight]{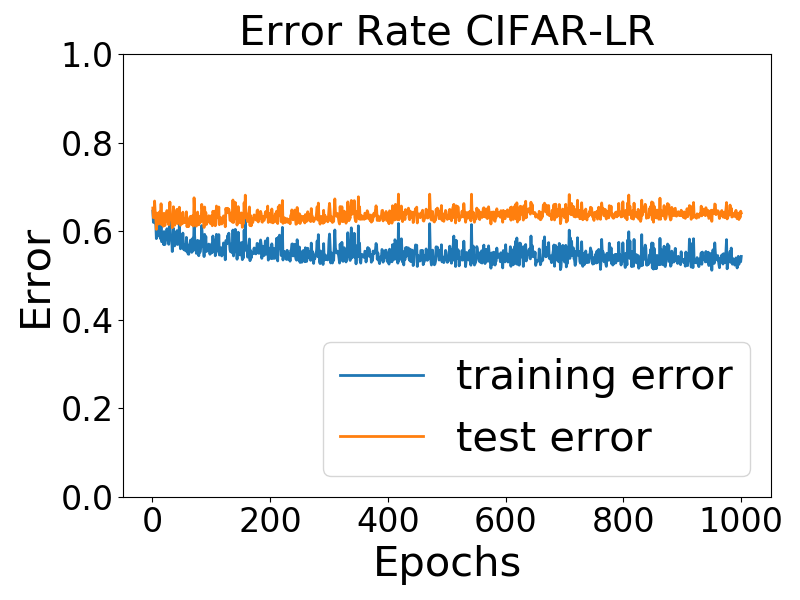}
\includegraphics[width = 0.31\textwidth, height = 0.18\textheight]{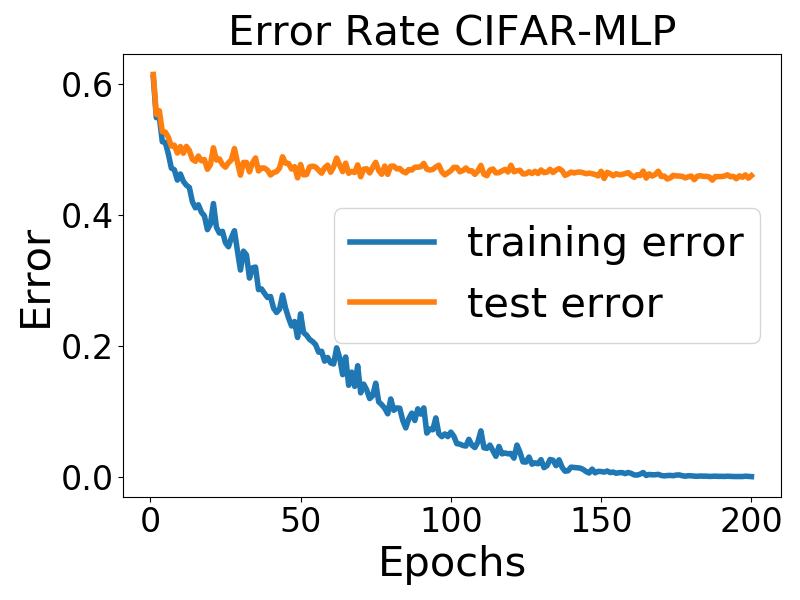}
\includegraphics[width = 0.31\textwidth, height = 0.18\textheight]{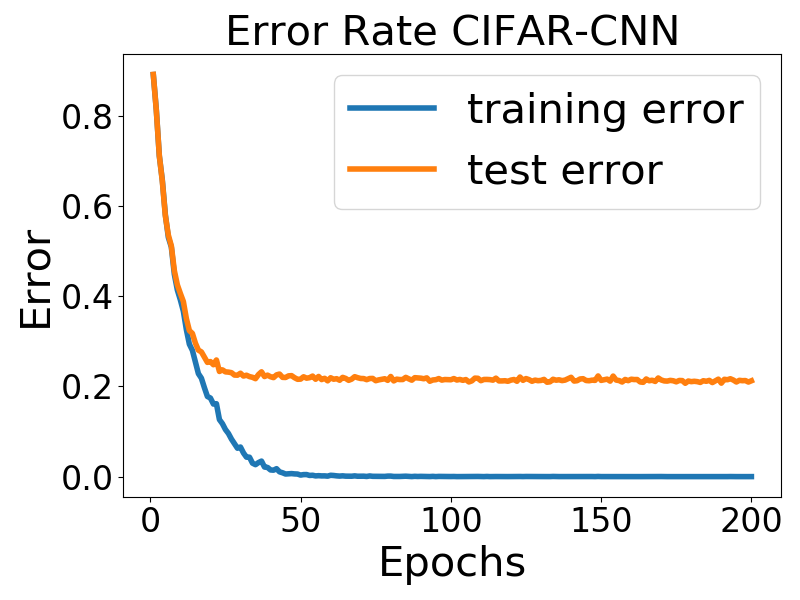}
\end{subfigure}
\caption{Training curves of 3 models on MNIST and 3 models on CIFAR10 by standard training. \textbf{First Row :} MNIST models. \textbf{Second Row :} CIFAR10 models.}
\label{fig:training_curves}
\end{figure}

\clearpage

\section{Minimum-average Margin Trade-off for MNIST Models}
\Cref{fig:mnist_avg_min_dist} shows the minimum and average margin trade-off for MNIST models \footnote{In the figure of average margin of MNIST-LR (top left), we shift the curve of average margin on training set by a small constant $0.001$, to avoid overlapping of two curves. This is only for illustration purpose.}. A similar trade-off between minimum and average margin can be observed as discussed in \Cref{sec:study,sec:exp}. The minimum margin keeps increasing while the average margin keeps decreasing. The only exception is the average margin of MNIST-CNN. But the range of margin in the figure is very small (from $0.94$ to $1.08$), in which case the Lipschitz constant estimation in \Cref{thm:approx_margin} could be inaccurate.

\begin{figure}[h]
\centering
\includegraphics[width = 0.8\textwidth, height = 0.20\textheight]{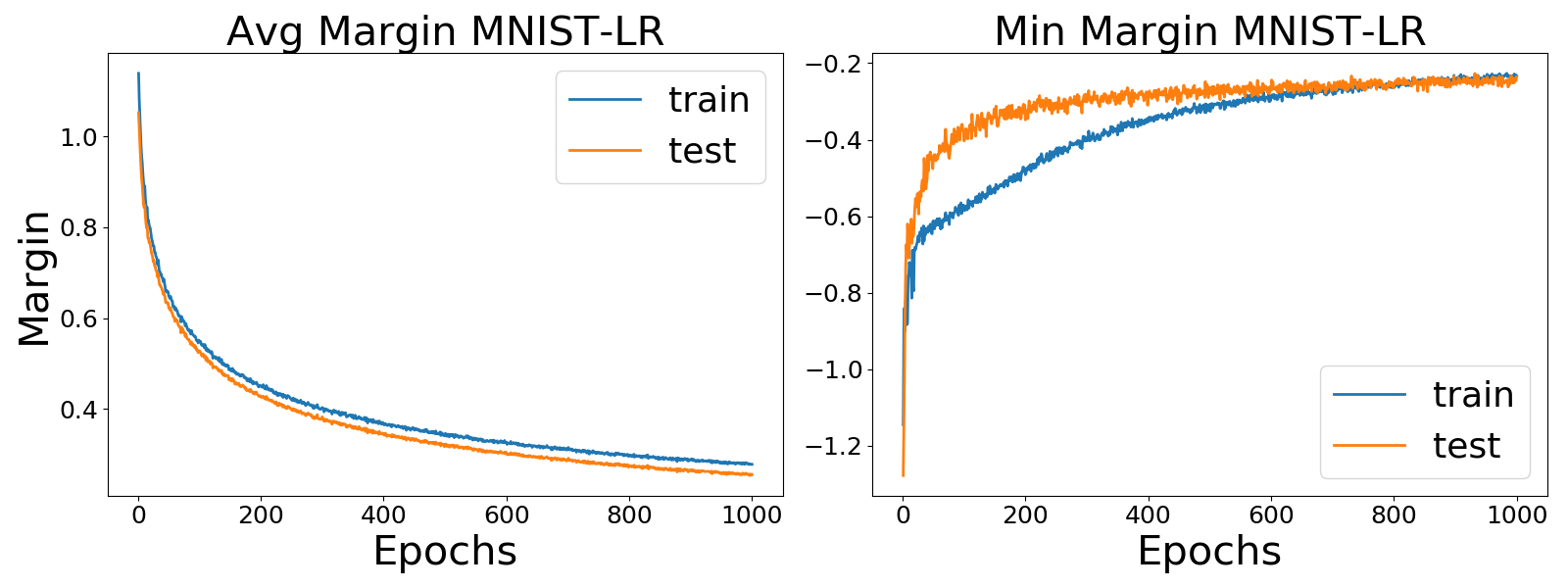}

\includegraphics[width = 0.8\textwidth, height = 0.20\textheight]{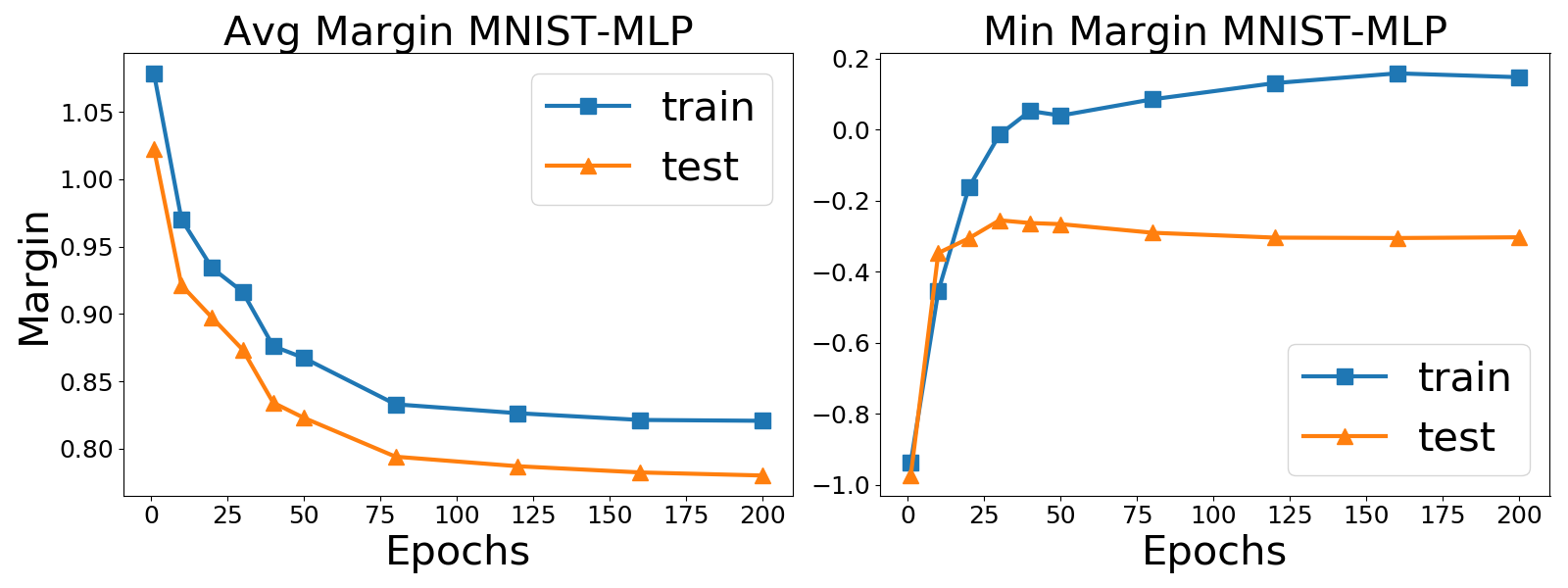}

\includegraphics[width = 0.8\textwidth, height = 0.20\textheight]{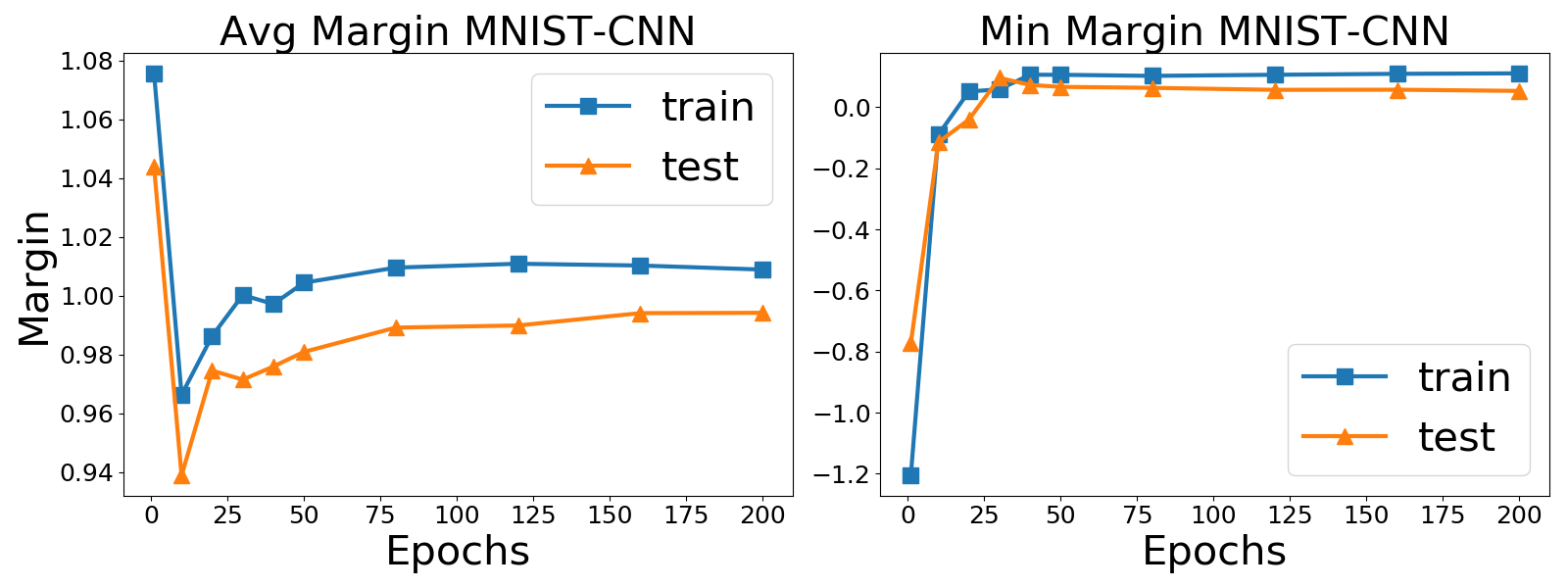}
\caption{Average margin and minimum margin during training of 3 MNIST models.}
\label{fig:mnist_avg_min_dist}
\end{figure}

\clearpage

\section{Margin Histograms of MNIST Models during Training}

We plot margin histograms for three MNIST models in \Cref{fig:mnist_hist}.


\begin{figure}[h]
\begin{subfigure}{0.48\textwidth}
\centering
\includegraphics[width = \textwidth, height = 0.20\textheight]{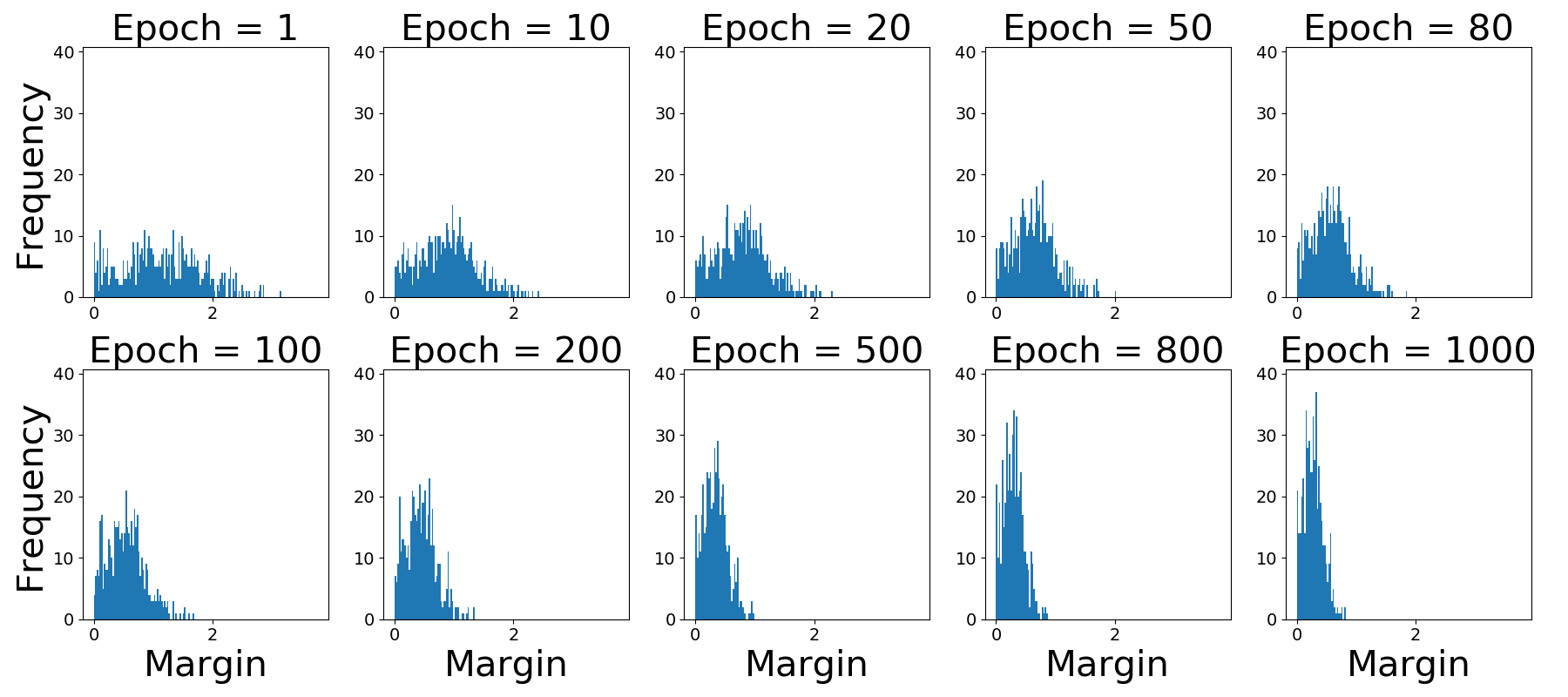}
\caption{Margin histograms of MNIST-LR on training set}
\label{fig:mnistlr_train_hist}
\end{subfigure}
~
\begin{subfigure}{0.48\textwidth}
\centering
\includegraphics[width = \textwidth, height = 0.20\textheight]{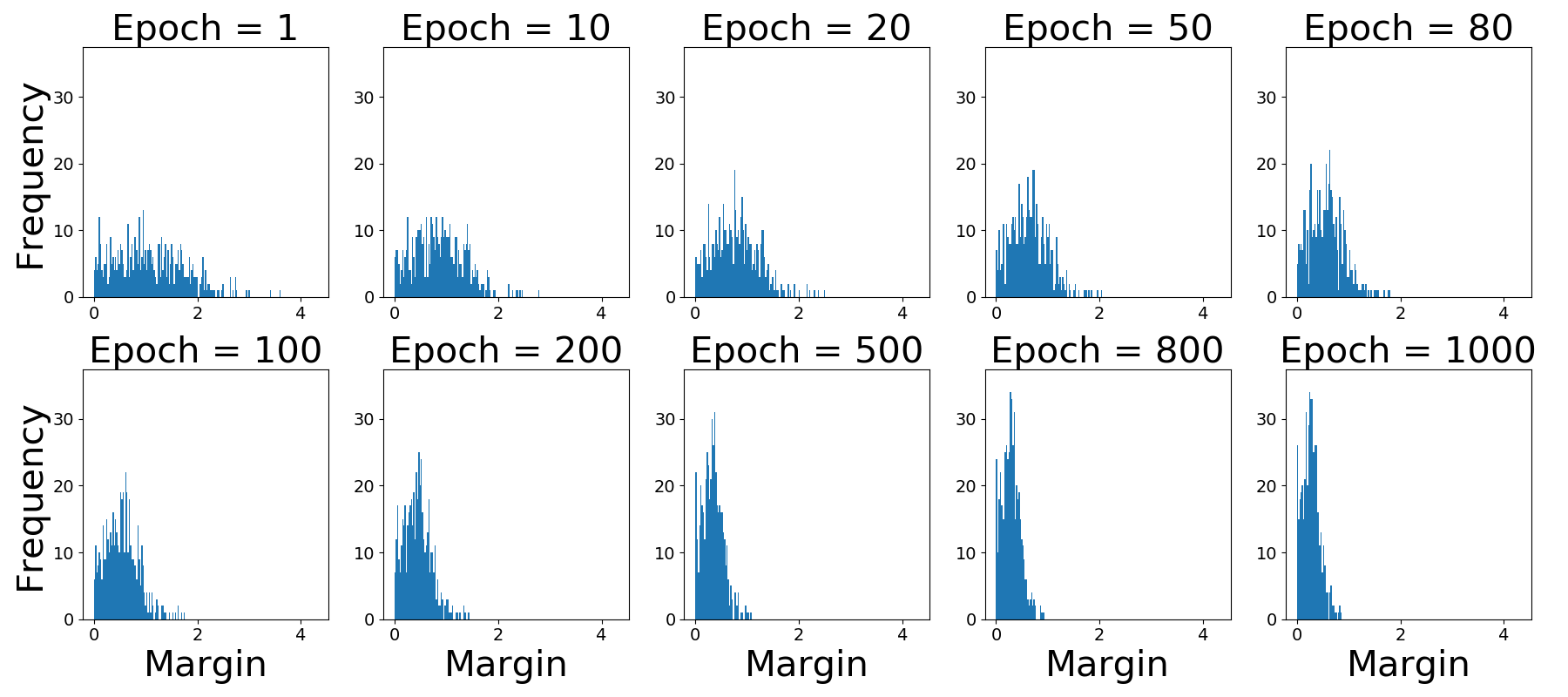}
\caption{Margin histograms of MNIST-LR on test set}
\label{fig:mnistlr_test_hist}
\end{subfigure}

\begin{subfigure}{0.48\textwidth}
\centering
\includegraphics[width = \textwidth, height = 0.20\textheight]{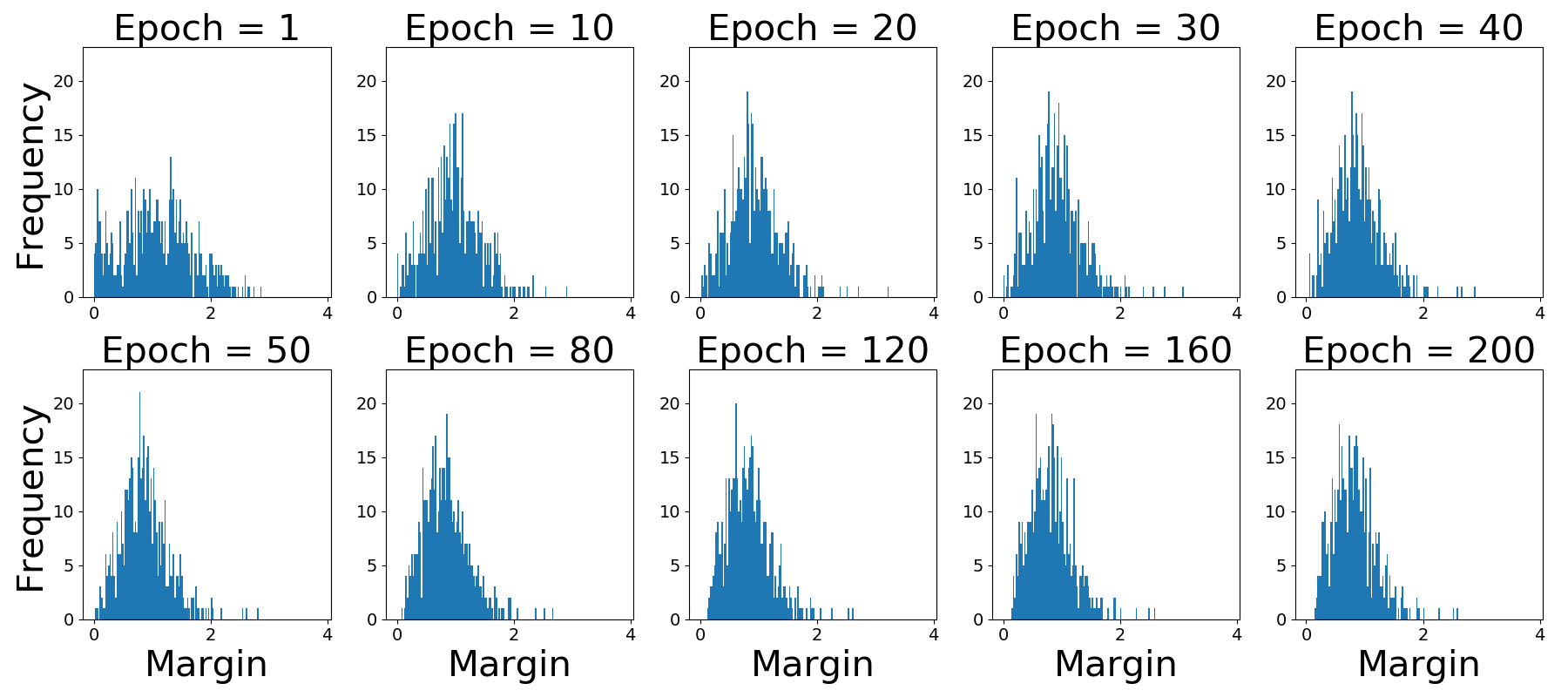}
\caption{Margin histograms of MNIST-MLP on training set}
\label{fig:mnistmlp_train_hist}
\end{subfigure}
\begin{subfigure}{0.48\textwidth}
\centering
\includegraphics[width = \textwidth, height = 0.20\textheight]{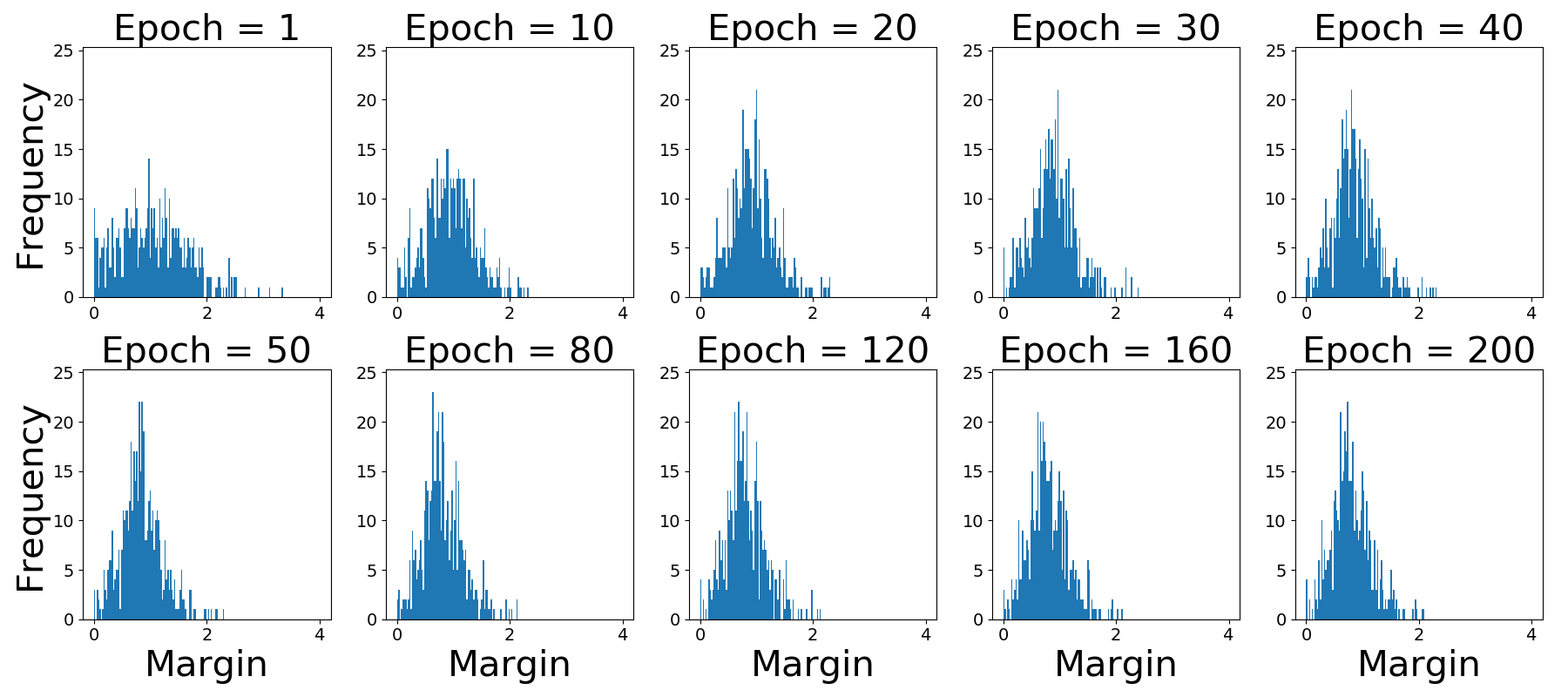}
\caption{Margin histograms of MNIST-MLP on test set}
\label{fig:mnistmlp_test_hist}
\end{subfigure}

\begin{subfigure}{0.48\textwidth}
\centering
\includegraphics[width = \textwidth, height = 0.20\textheight]{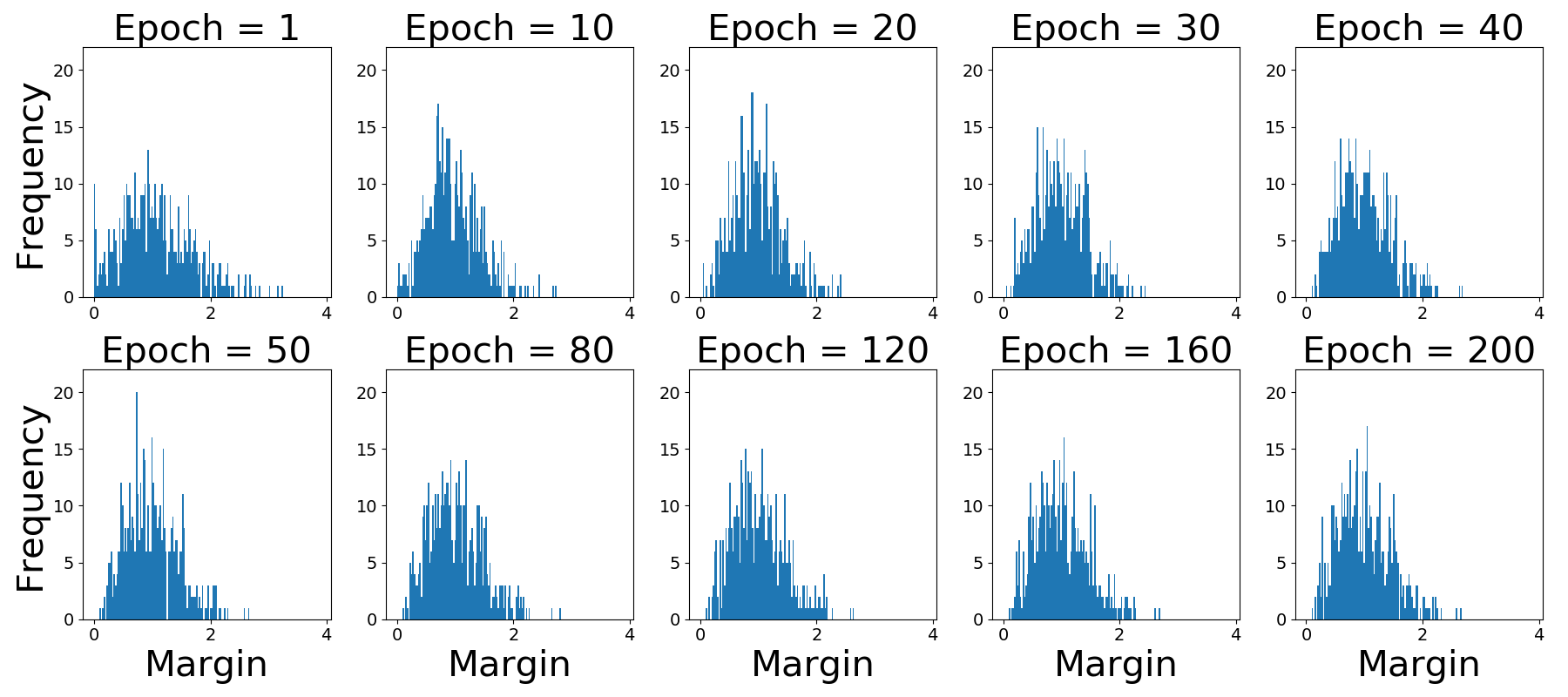}
\caption{Margin histograms of MNIST-CNN on training set}
\label{fig:mnistcnn_train_hist}
\end{subfigure}
\begin{subfigure}{0.48\textwidth}
\centering
\includegraphics[width = \textwidth, height = 0.20\textheight]{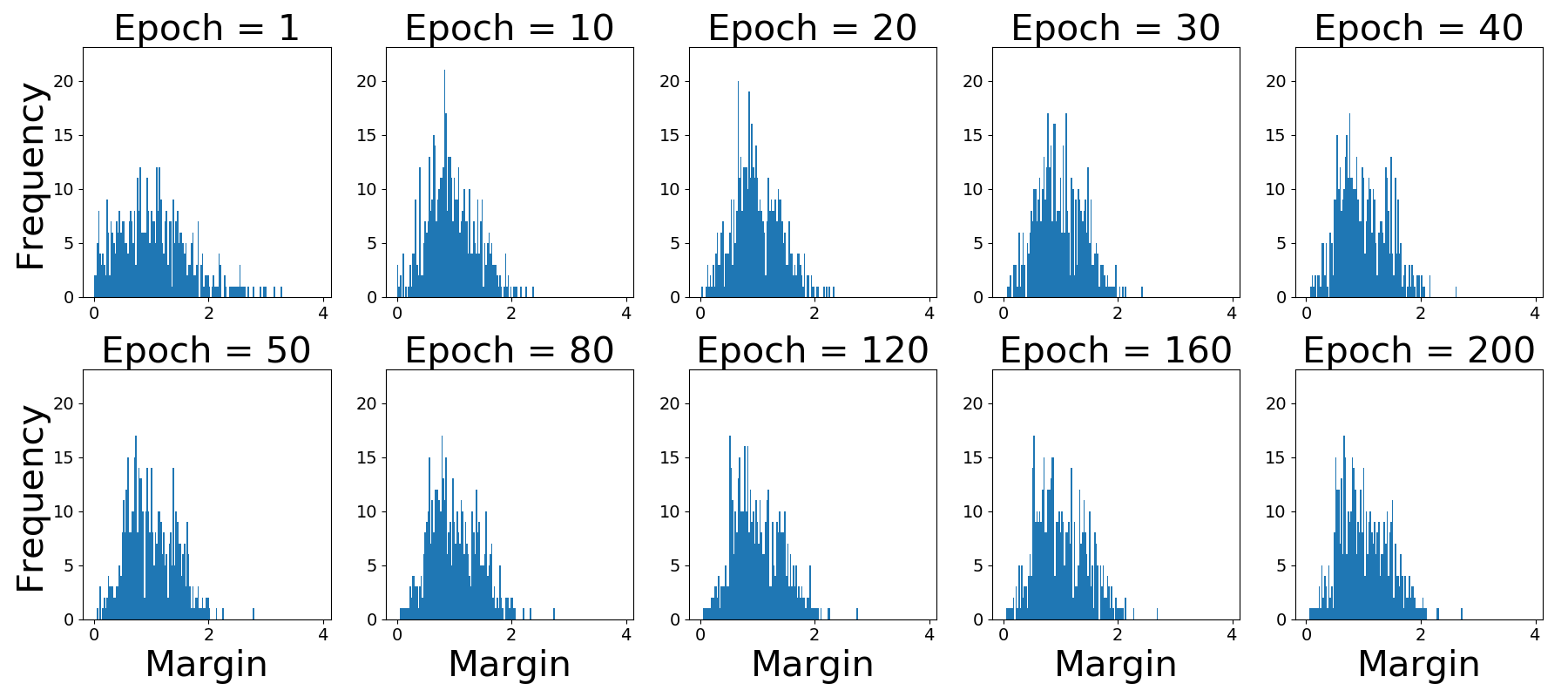}
\caption{Margin histograms of MNIST-CNN on test set}
\label{fig:mnistcnn_test_hist}
\end{subfigure}
\caption{Margin histograms of MNIST models at different epochs during training. \textbf{Top:} Histograms of MNIST-LR. \textbf{Mid:} Histograms of MNIST-MLP. \textbf{Bottom:} Hostograms of MNIST-CNN.}
\label{fig:mnist_hist}
\end{figure}

\clearpage

\section{Margin Histograms of CIFAR Models during Training}
We plot margin histograms for three CIFAR models in \Cref{fig:cifar_hist}.

\begin{figure}[h]
\centering
\begin{subfigure}{0.48\textwidth}
\centering
\includegraphics[width = \textwidth, height = 0.20\textheight]{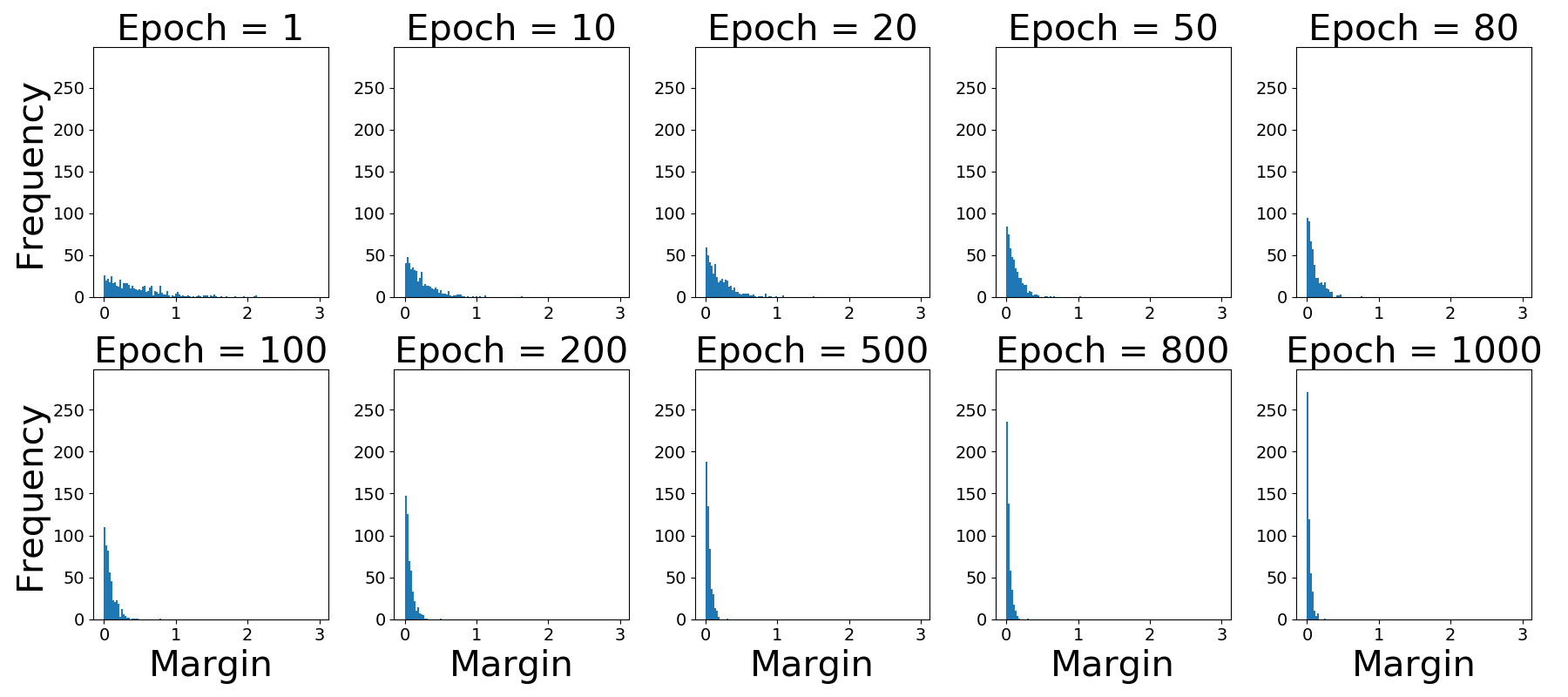}
\caption{Margin histograms of CIFAR-LR on training set}
\label{fig:cifarlr_train_hist}
\end{subfigure}
\begin{subfigure}{0.48\textwidth}
\centering
\includegraphics[width = \textwidth, height = 0.20\textheight]{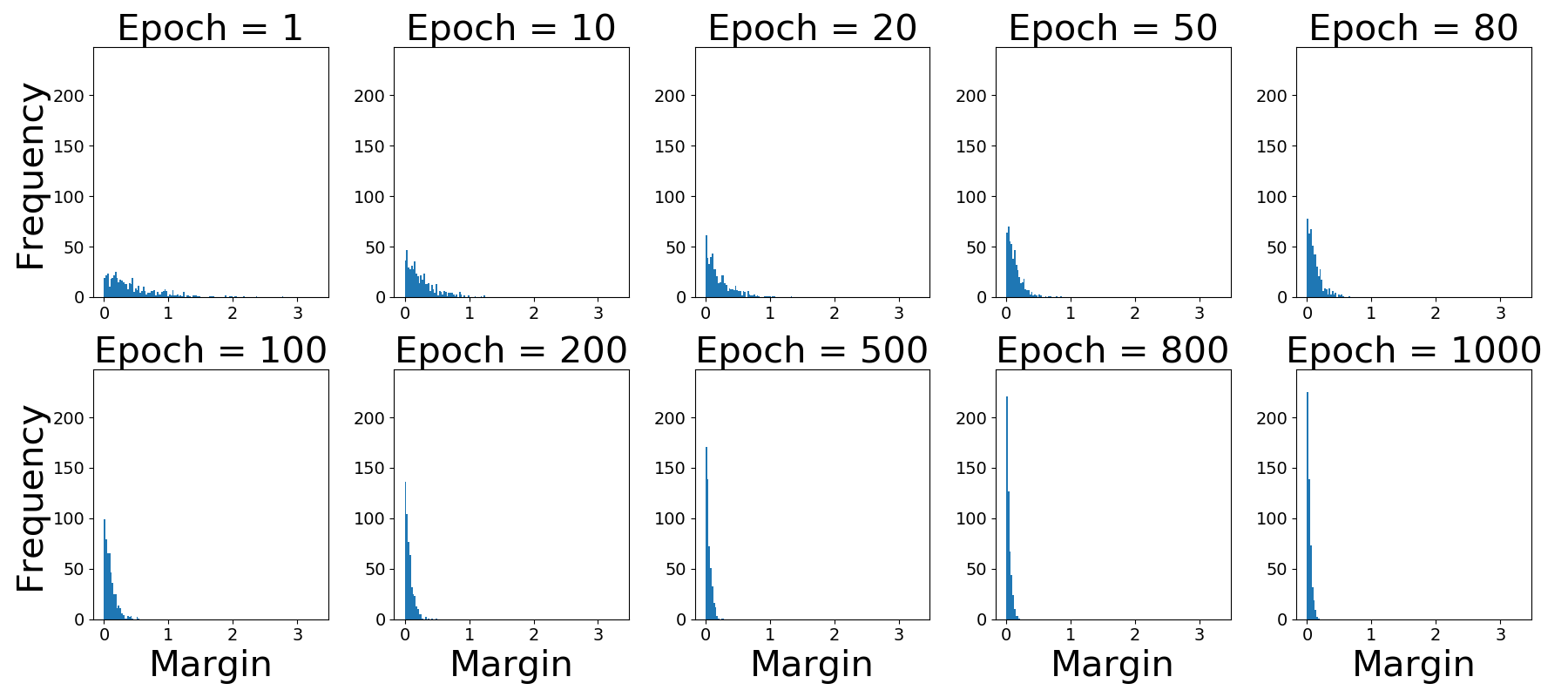}
\caption{Margin histograms of CIFAR-LR on test set}
\label{fig:cifarlr_test_hist}
\end{subfigure}

\begin{subfigure}{0.48\textwidth}
\centering
\includegraphics[width = \textwidth, height = 0.20\textheight]{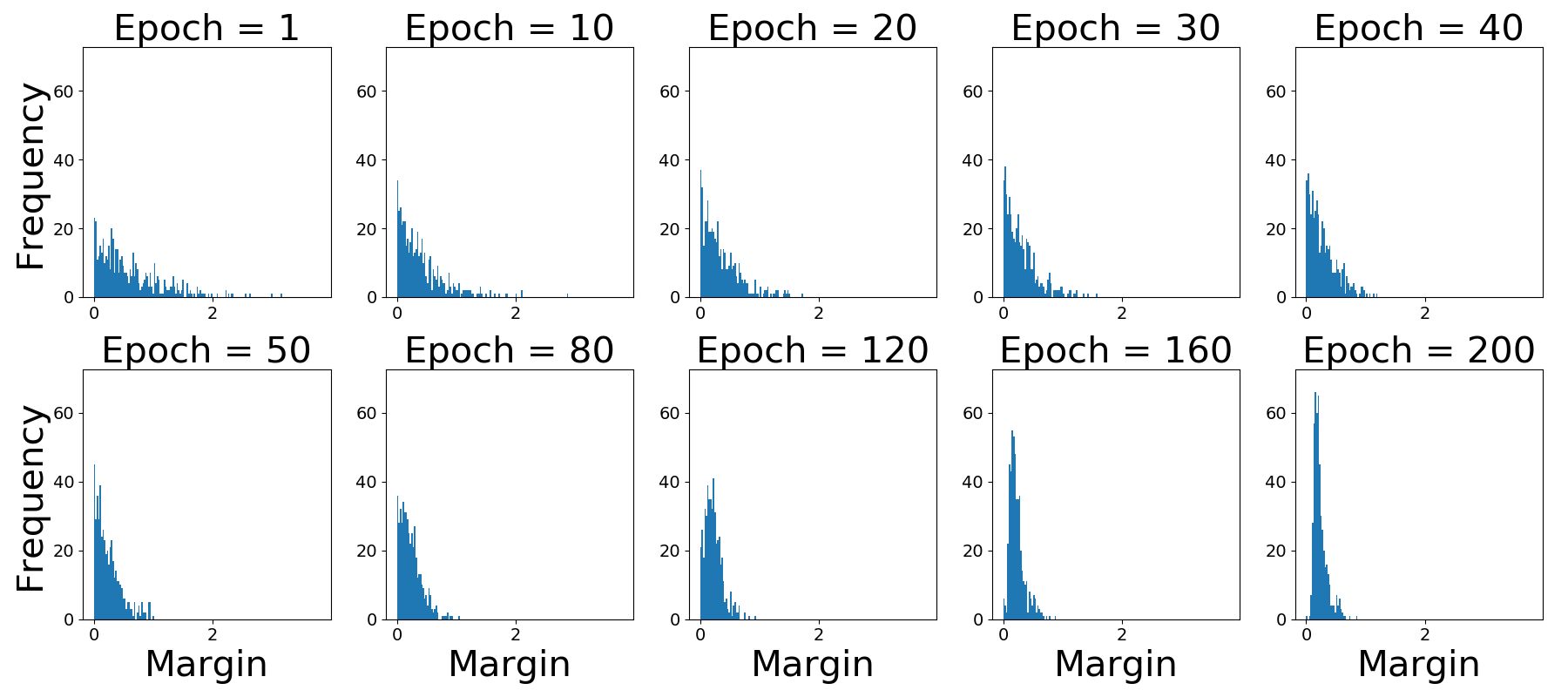}
\caption{Margin histograms of CIFAR-MLP on training set }
\label{fig:cifarmlp_train_hist}
\end{subfigure}
\begin{subfigure}{0.48\textwidth}
\centering
\includegraphics[width = \textwidth, height = 0.20\textheight]{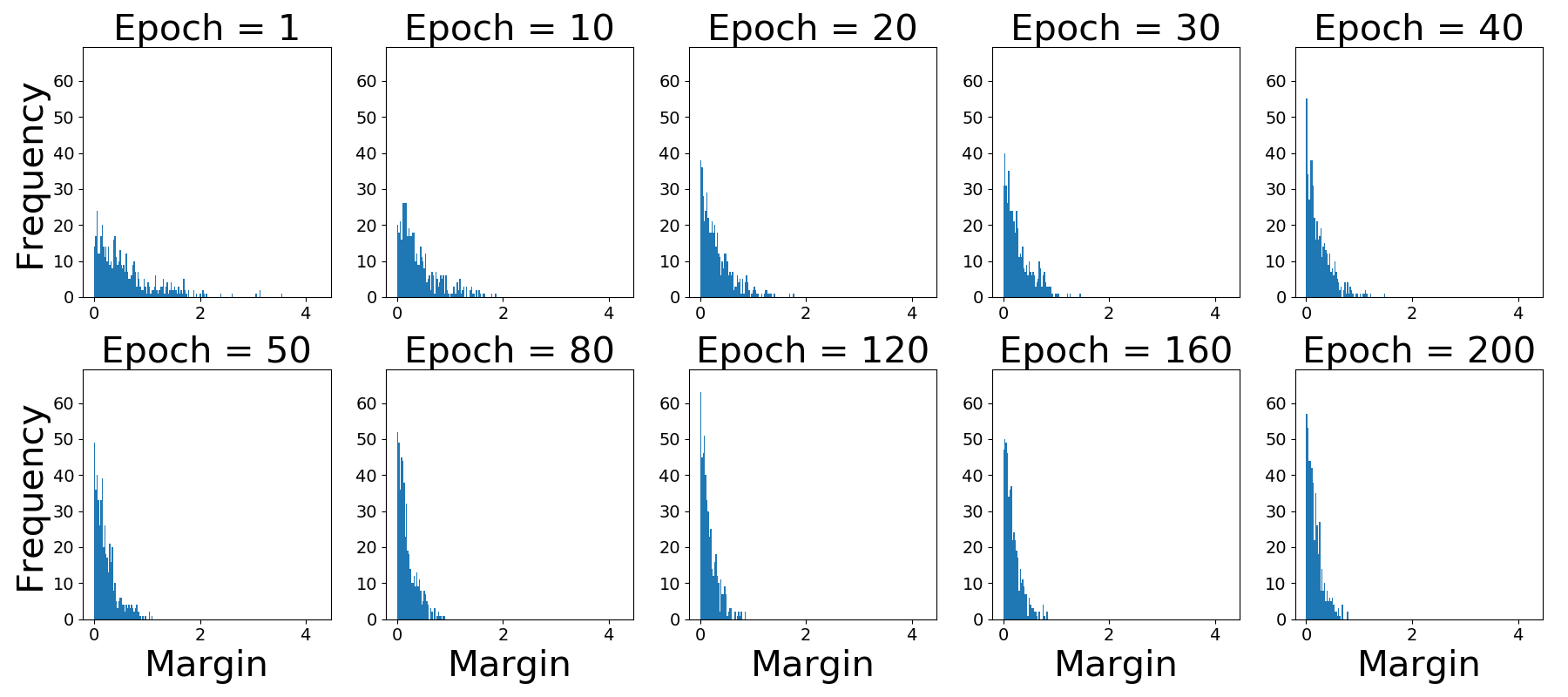}
\caption{Margin histograms of CIFAR-MLP on test set }
\label{fig:cifarmlp_test_hist}
\end{subfigure}

\begin{subfigure}{0.48\textwidth}
\centering
\includegraphics[width = \textwidth, height = 0.20\textheight]{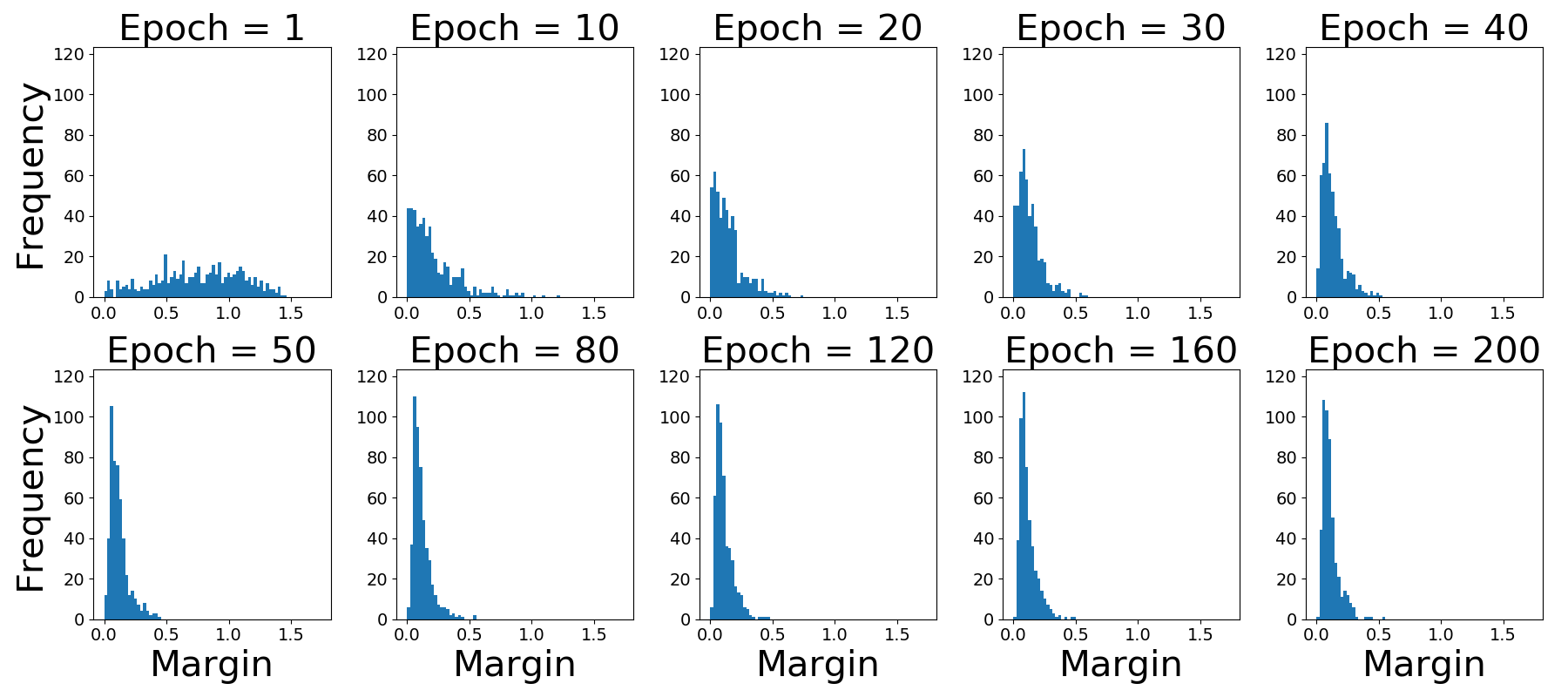}
\caption{Margin histograms of CIFAR-CNN on training set}
\label{fig:cifarcnn_train_hist}
\end{subfigure}
\begin{subfigure}{0.48\textwidth}
\centering
\includegraphics[width = \textwidth, height = 0.20\textheight]{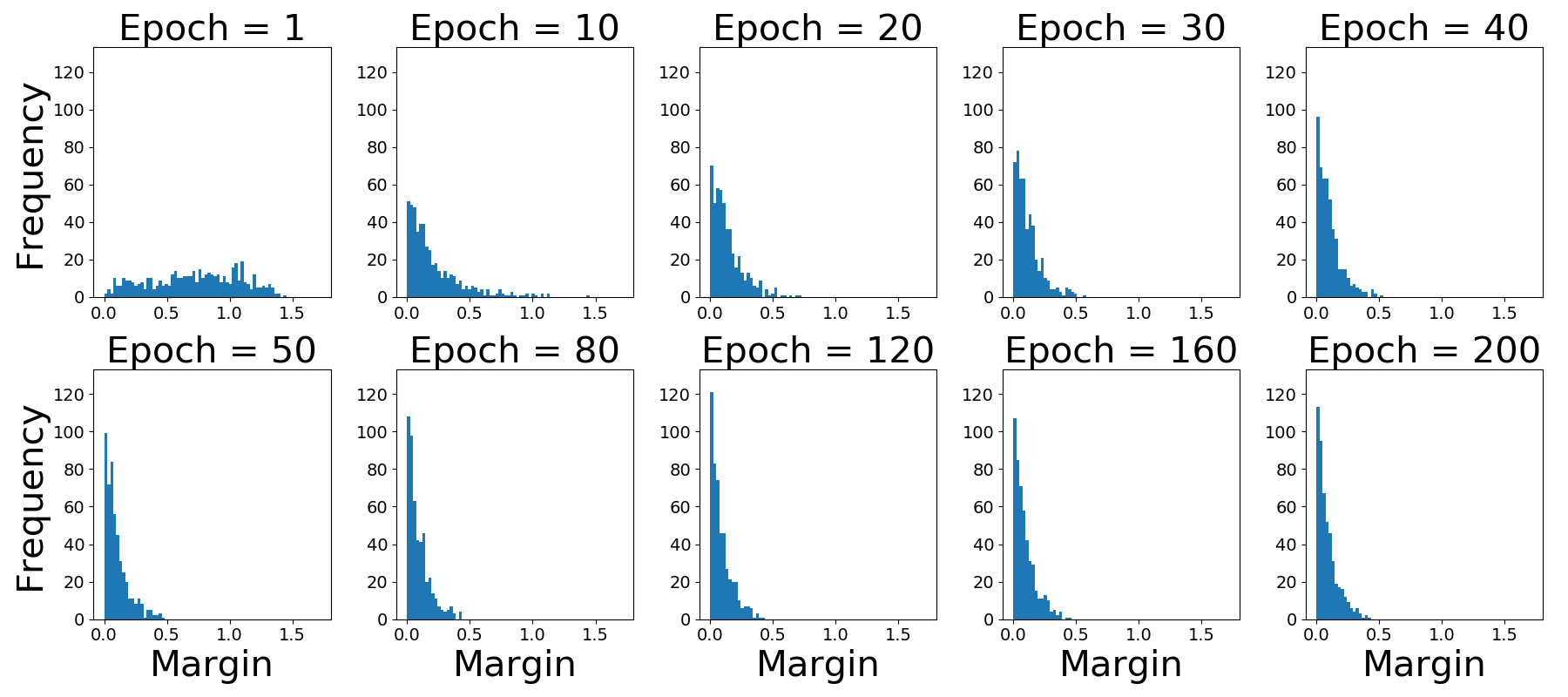}
\caption{Margin histograms of CIFAR-CNN on test set}
\label{fig:cifarcnn_test_hist}
\end{subfigure}
\caption{Margin histograms of CIFAR models at different epochs during training. \textbf{Top:} Histograms of CIFAR-LR. \textbf{Mid:} Histograms of CIFAR-MLP. \textbf{Bottom:} Histograms of CIFAR-CNN.}
\label{fig:cifar_hist}
\end{figure}

\clearpage

\section{Detailed Experiment Setting}


\textbf{Model Architecture:}
\begin{itemize}
\item \textbf{MNIST-LR and CIFAR-LR:} Multiclass logistic regression.
\item \textbf{MNIST-MLP and CIFAR-MLP:} Neural network with one hidden layer. The hidden layer has $1024$ neurons and relu activation.
\item \textbf{MNIST-CNN and MNIST-CNN:} Two AlexNet-Like convolutional neural networks with a slight difference. They are exactly the same as the models used by \textcite{WengZCYSGHD18, CarliniWagner17a}. Dropout is used to alleviate overfitting on CIFAR10. No data augmentation is used.
\item \textbf{ConvNet:} A convolutional neural network with $5$ convolutional layers, used in \citep{yan2018deep}.
\item \textbf{LeNet:} LeNet, consists of two convolutional layers and two fully connected layers.
\item \textbf{LeNetSmall:} A convolutional neural network with similar structure to LeNet, but with fewer filers. This model is same as the one used in \citep{croce2019provable}. On CIFAR10, data augmentation (random crop and random flip) is used.
\end{itemize}

\textbf{Standard Training:} We use SGD (learning rate $0.01$) with momentum ($0.9$) and nestorv to train all six models. Batch size is set to $128$. Linear models (LR) are trained for $1000$ epochs. Nonlinear models (MLP, CNN) are trained for $200$ epochs.

\textbf{Average Margin Regularizer:} $\lambda = 0.1$ and $\beta = 10^{-3}$ are used for all models, while different $\tau$'s are used for different models. $\tau$ is tuned such that it is on the same order as the Lipschitz constant of the model. MNIST-LR and MNIST-MLP use $\tau = 5$. MNIST-CNN use $\tau = 10$. CIFAR-LR use $\tau = 5$. CIFAR-MLP and CIFAR-CNN use $\tau = 10$. The idea is using larger $\tau$ for deeper models. The spectral norm of weights across layers may not be exactly $1$, even after adding orthogonal constraint. In fact, the product of spectral norm will become larger as the model becomes deeper. Thus, one should use larger truncation parameter, since the range of margin in feature space may be larger. Following same idea, LeNet and LeNetSmall use $\tau = 40$; ConvNet uses $\tau = 50$.


In addition, for fair comparison, we set the same $\beta = 10^{-3}$ to train models with Lipschitz constant regularization.

\textbf{Adversarial Training:} We use PGD attack to perform adversarial training. We set the number of iterations $40$ and step size $0.01$. On MNIST, the perturbation budget $\epsilon$ is set to $2.0$; on CIFAR, $\epsilon$ is set to $0.4$.

\textbf{Attack Method:} Through out the experiment, we use projected gradient descent (PGD) attack to evaluate the robust accuracy. We set step size $0.01$ and number of iterations $1000$. Increasing the number of iterations does not change the robust accuracy much, thus $1000$ iterations is sufficient to generate strong adversarial examples.

\textbf{Approximating Distance}
When using \Cref{thm:approx_margin} to approximate the distance to decision boundary, we need to estimation the Lipchitz constant of the network in a small neighbourhood around the input. Following \citep{tjeng2018evaluating}, we simply take the maximum norm of gradient in that neighbourhood. Throughout the experiments, we set $r = 5$ and the sampling size equal to $1024\times 50$ (except for linear classifiers, whose sampling size is $1$). For linear logistic regression, we perform estimation in every epoch; for nonlinear classifiers, we only perform estimation at certain epochs: $1$, $10$, $20$, $30$, $40$, $50$, $80$, $120$, $160$ and $200$, where we perform more estimations at the beginning, as distances may change drastically in initial stages. Due to efficiency concerns, the estimation is performed on a subset of size $500$, which is randomly chosen (with a fixed random seed) from the original training and test sets.

\if01
\section{MNIST-LR}
\begin{figure*}[h!]
\centering
\includegraphics[width = 0.8\textwidth, height = 0.12\textheight]{MNISTLR_loss_acc}
\caption{Training loss and error rate of MNISTLR}
\end{figure*}

\begin{figure*}[h!]
\centering
\includegraphics[width = 0.8\textwidth, height = 0.12\textheight]{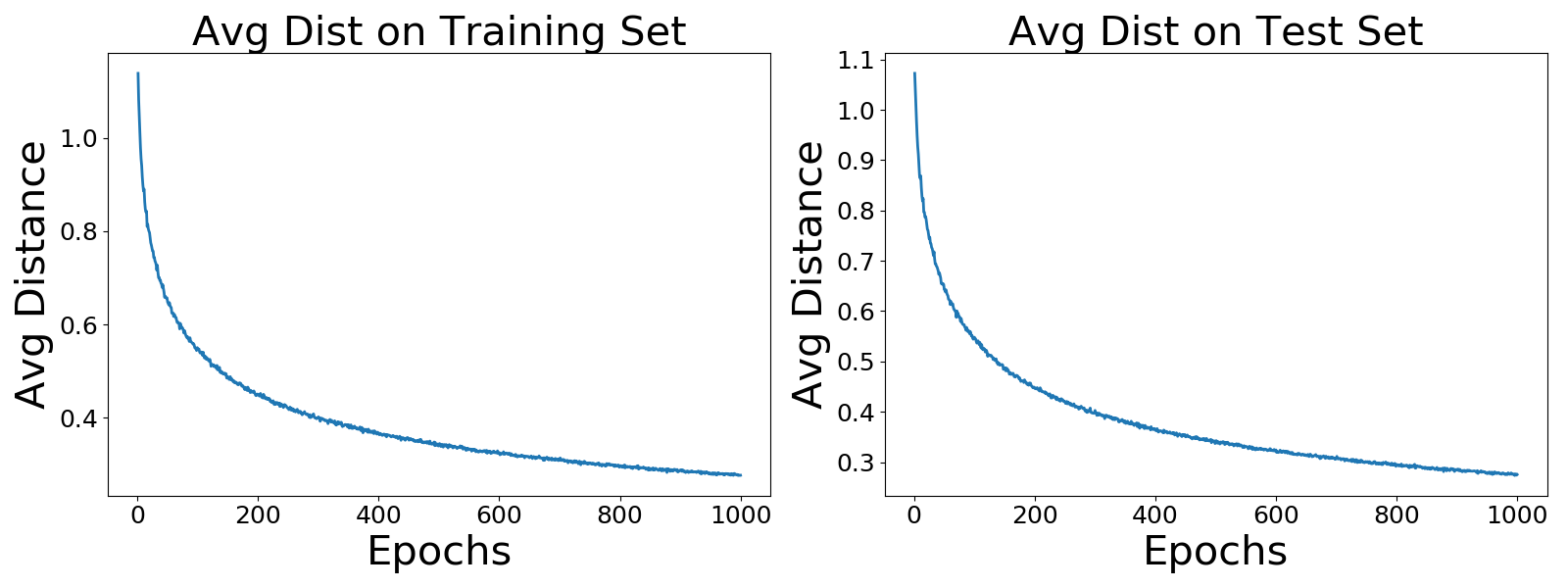}
\caption{Average distance during training of MNISTLR}
\end{figure*}

\begin{figure*}[h!]
\centering
\includegraphics[width = 0.8\textwidth, height = 0.12\textheight]{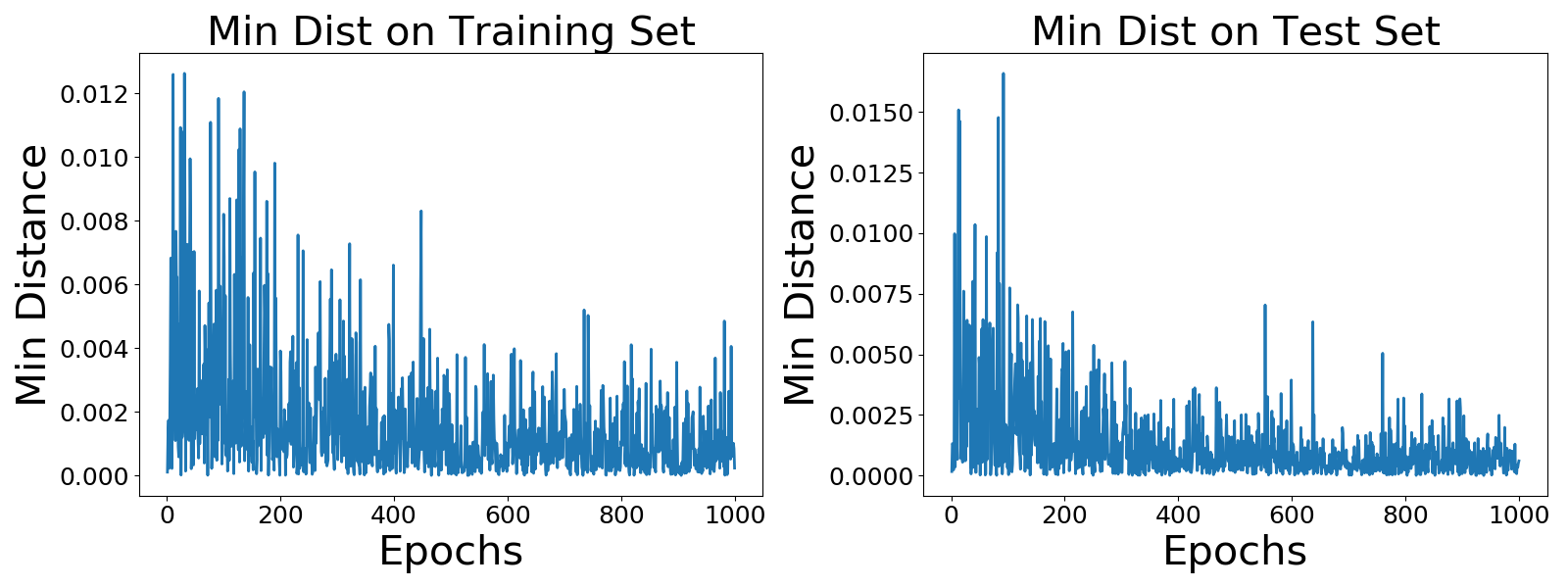}
\caption{Minimum distance during training of MNISTLR}
\end{figure*}

\begin{figure*}[h!]
\centering
\includegraphics[width = 0.8\textwidth, height = 0.20\textheight]{MNISTLR_train_hist}
\caption{Histogram of distance on training set during training of MNISTLR}
\end{figure*}

\begin{figure*}[h!]
\centering
\includegraphics[width = 0.8\textwidth, height = 0.20\textheight]{MNISTLR_test_hist}
\caption{Histogram of distance on test set during training of MNISTLR}
\end{figure*}

In addition, we visualize a set of adversarial examples.
\begin{figure*}[h!]
\centering
\includegraphics[width = 0.8\textwidth, height = 0.4\textheight]{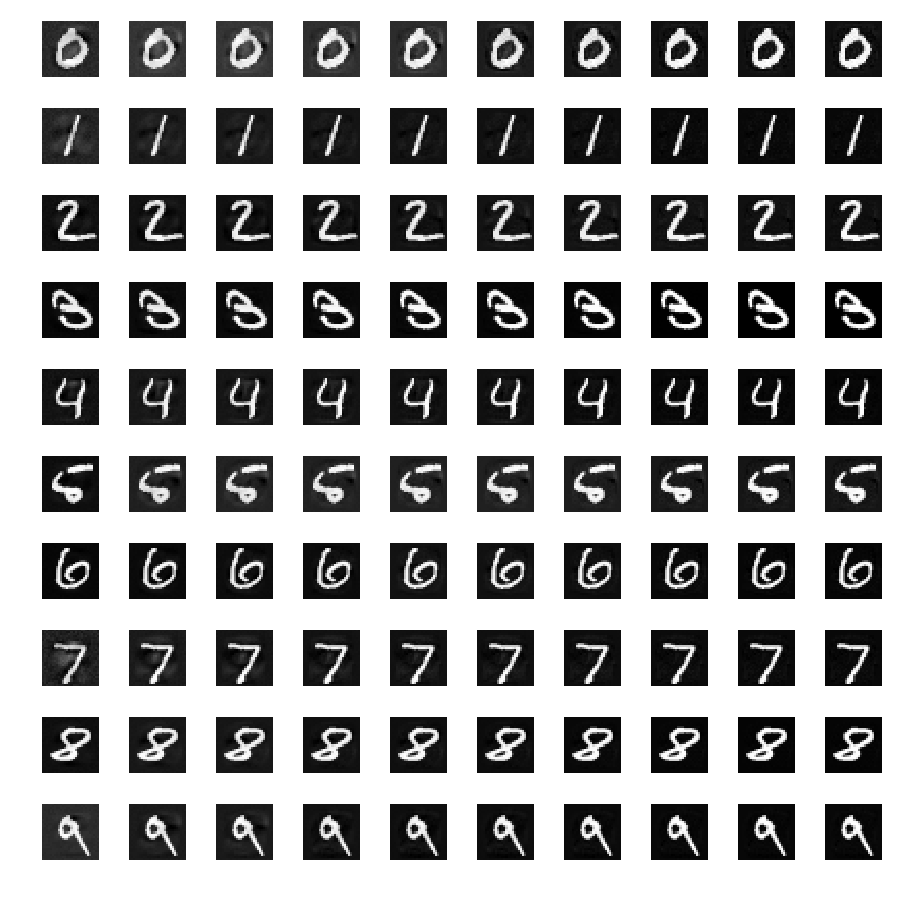}
\caption{Adversarial examples for MNISTLR during different epochs of training}
\end{figure*}

\section{MNIST-MLP}
\begin{figure*}[h!]
\centering
\includegraphics[width = 0.8\textwidth, height = 0.12\textheight]{MNISTMLP_loss_acc}
\caption{Training loss and error rate of MNISTMLP}
\end{figure*}

\begin{figure*}[h!]
\centering
\includegraphics[width = 0.8\textwidth, height = 0.12\textheight]{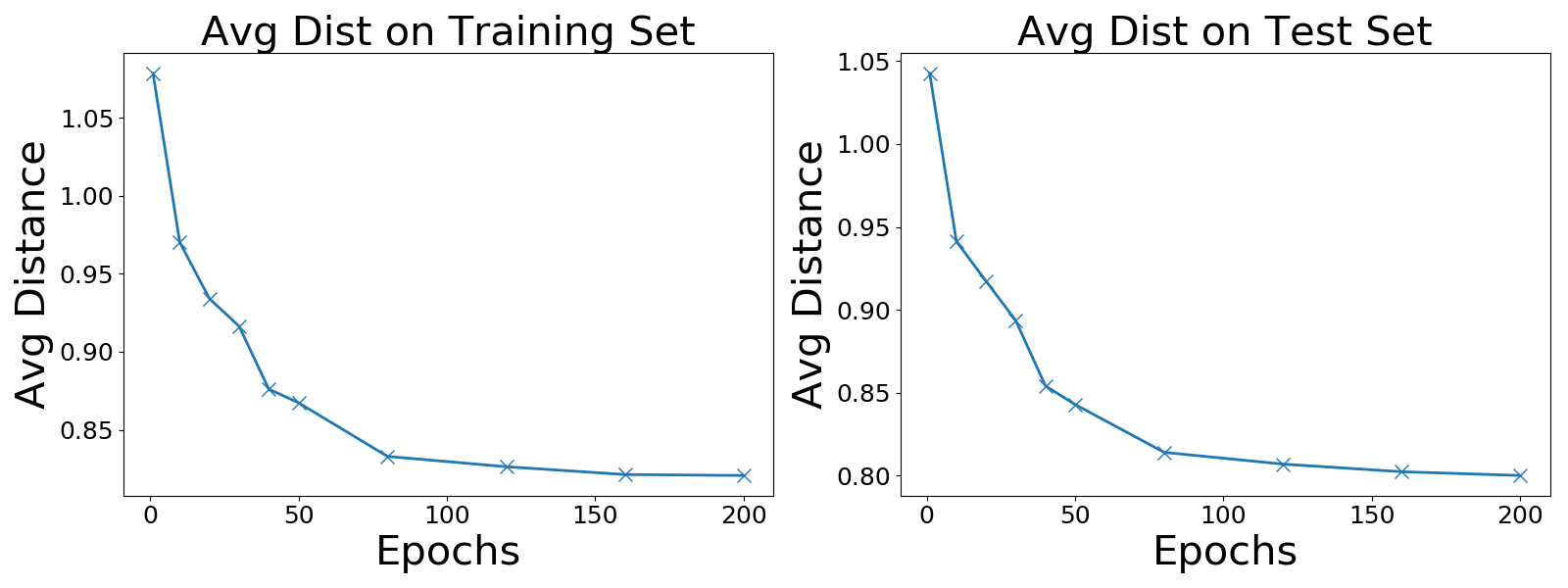}
\caption{Average distance during training of MNISTMLP}
\end{figure*}

\begin{figure*}[h!]
\centering
\includegraphics[width = 0.8\textwidth, height = 0.12\textheight]{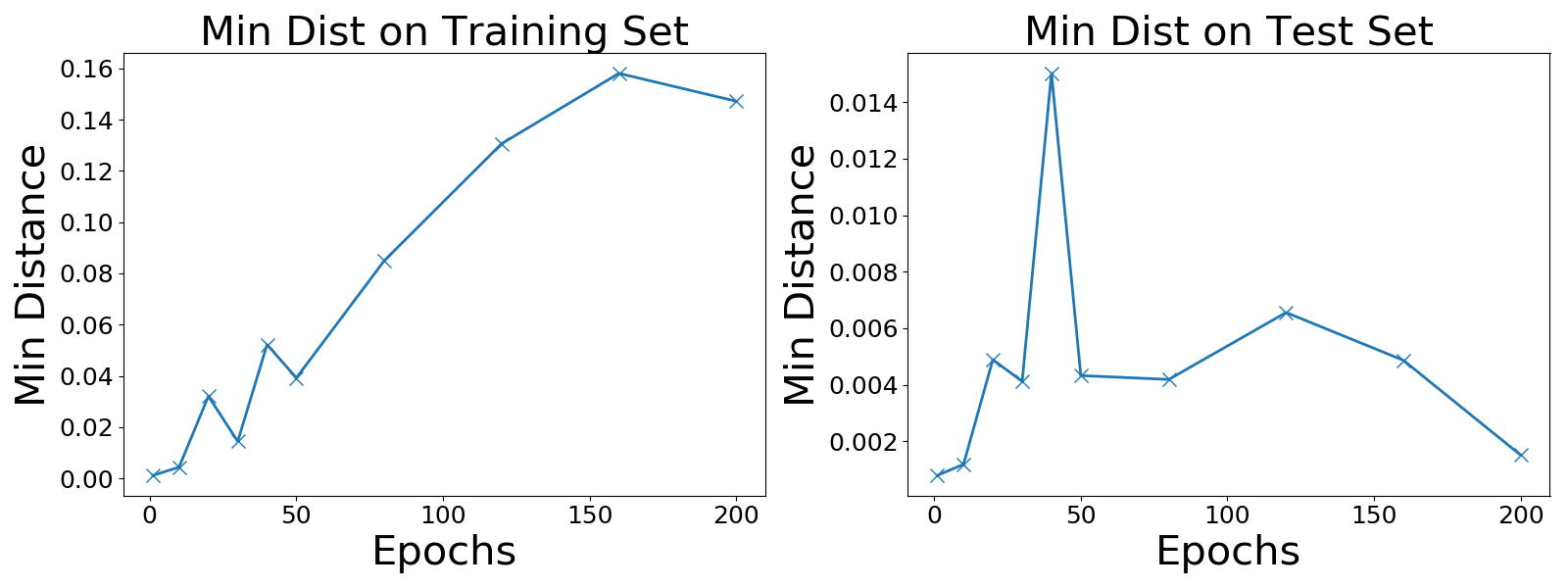}
\caption{Minimum distance during training of MNISTMLP}
\end{figure*}

\begin{figure*}[h!]
\centering
\includegraphics[width = 0.8\textwidth, height = 0.20\textheight]{MNISTMLP_train_hist}
\caption{Histogram of distance on training set during training of MNISTMLP}
\end{figure*}

\begin{figure*}[h!]
\centering
\includegraphics[width = 0.8\textwidth, height = 0.20\textheight]{MNISTMLP_test_hist}
\caption{Histogram of distance on test set during training of MNISTMLP}
\end{figure*}

\section{MNIST-CNN}
\begin{figure*}[h!]
\centering
\includegraphics[width = 0.8\textwidth, height = 0.12\textheight]{MNISTCNN_loss_acc}
\caption{Training loss and error rate of MNISTCNN}
\end{figure*}

\begin{figure*}[h!]
\centering
\includegraphics[width = 0.8\textwidth, height = 0.12\textheight]{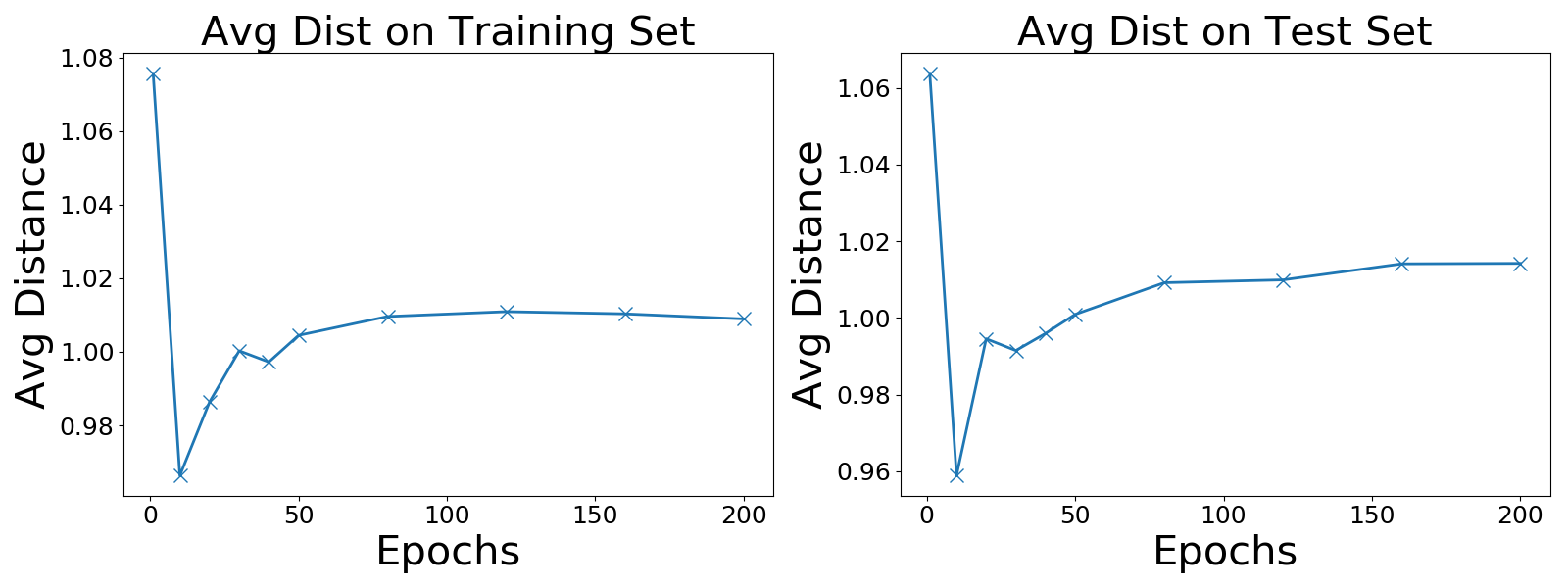}
\caption{Average distance during training of MNISTCNN}
\end{figure*}

\begin{figure*}[h!]
\centering
\includegraphics[width = 0.8\textwidth, height = 0.12\textheight]{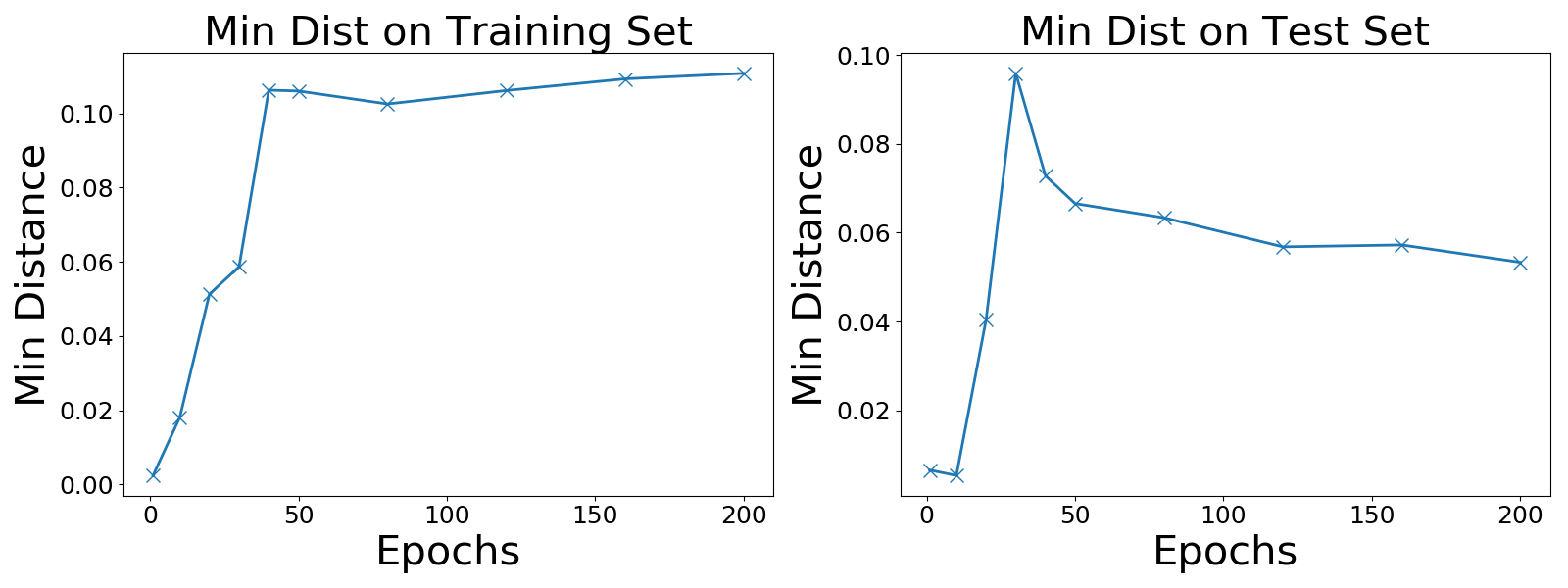}
\caption{Minimum distance during training of MNISTCNN}
\end{figure*}

\begin{figure*}[h!]
\centering
\includegraphics[width = 0.8\textwidth, height = 0.20\textheight]{MNISTCNN_train_hist}
\caption{Histogram of distance on training set during training of MNISTCNN}
\end{figure*}

\begin{figure*}[h!]
\centering
\includegraphics[width = 0.8\textwidth, height = 0.20\textheight]{MNISTCNN_test_hist}
\caption{Histogram of distance on test set during training of MNISTCNN}
\end{figure*}

\section{CIFAR-LR}
\begin{figure*}[h!]
\centering
\includegraphics[width = 0.8\textwidth, height = 0.12\textheight]{CIFARLR_loss_acc}
\caption{Training loss and error rate of CIFARLR}
\end{figure*}

\begin{figure*}[h!]
\centering
\includegraphics[width = 0.8\textwidth, height = 0.12\textheight]{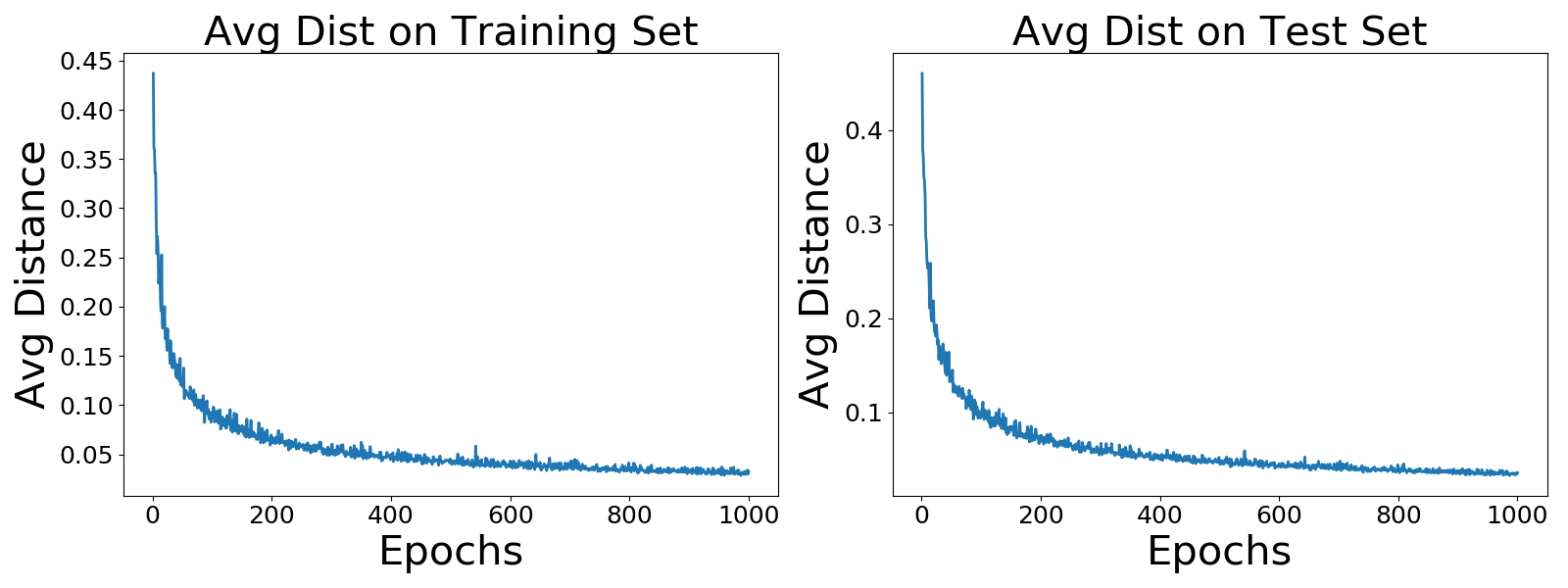}
\caption{Average distance during training of CIFARLR}
\end{figure*}

\begin{figure*}[h!]
\centering
\includegraphics[width = 0.8\textwidth, height = 0.12\textheight]{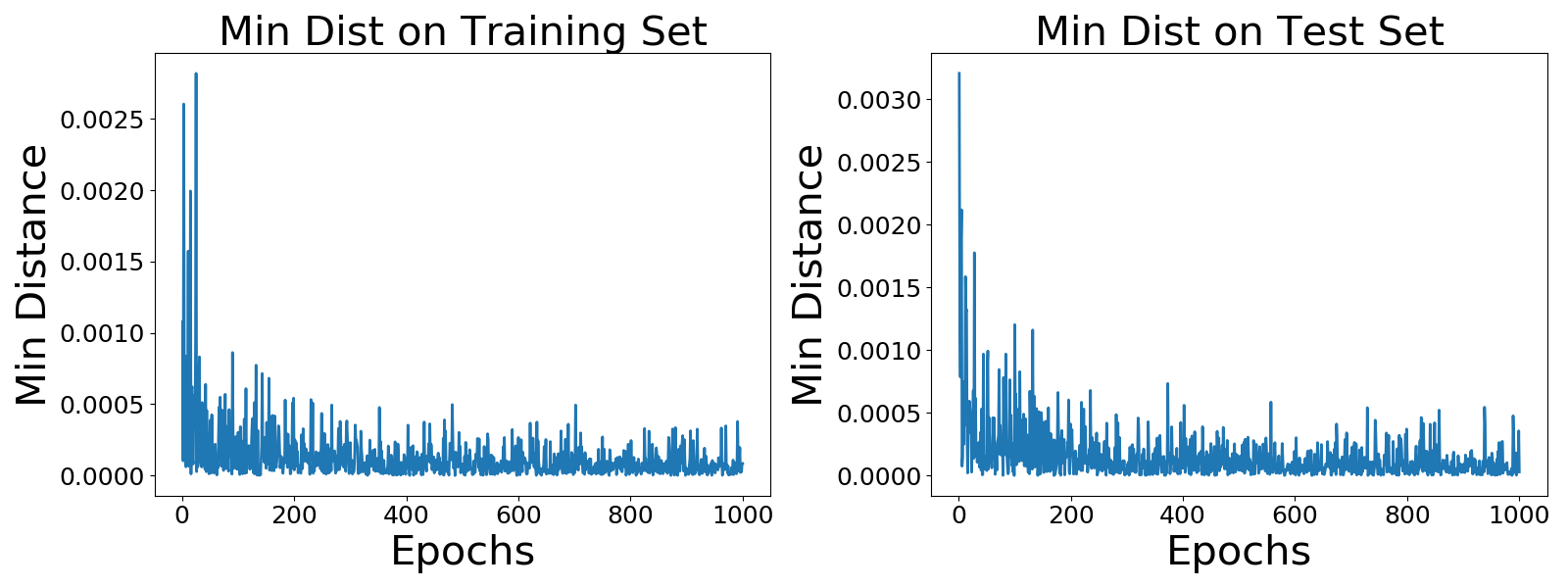}
\caption{Minimum distance during training of CIFARLR}
\end{figure*}

\begin{figure*}[h!]
\centering
\includegraphics[width = 0.8\textwidth, height = 0.20\textheight]{CIFARLR_train_hist}
\caption{Histogram of distance on training set during training of CIFARLR}
\end{figure*}

\begin{figure*}[h!]
\centering
\includegraphics[width = 0.8\textwidth, height = 0.20\textheight]{CIFARLR_test_hist}
\caption{Histogram of distance on test set during training of CIFARLR}
\end{figure*}

\section{CIFAR-MLP}
\begin{figure*}[h!]
\centering
\includegraphics[width = 0.8\textwidth, height = 0.12\textheight]{CIFARMLP_loss_acc}
\caption{Training loss and error rate of CIFARMLP}
\end{figure*}

\begin{figure*}[h!]
\centering
\includegraphics[width = 0.8\textwidth, height = 0.12\textheight]{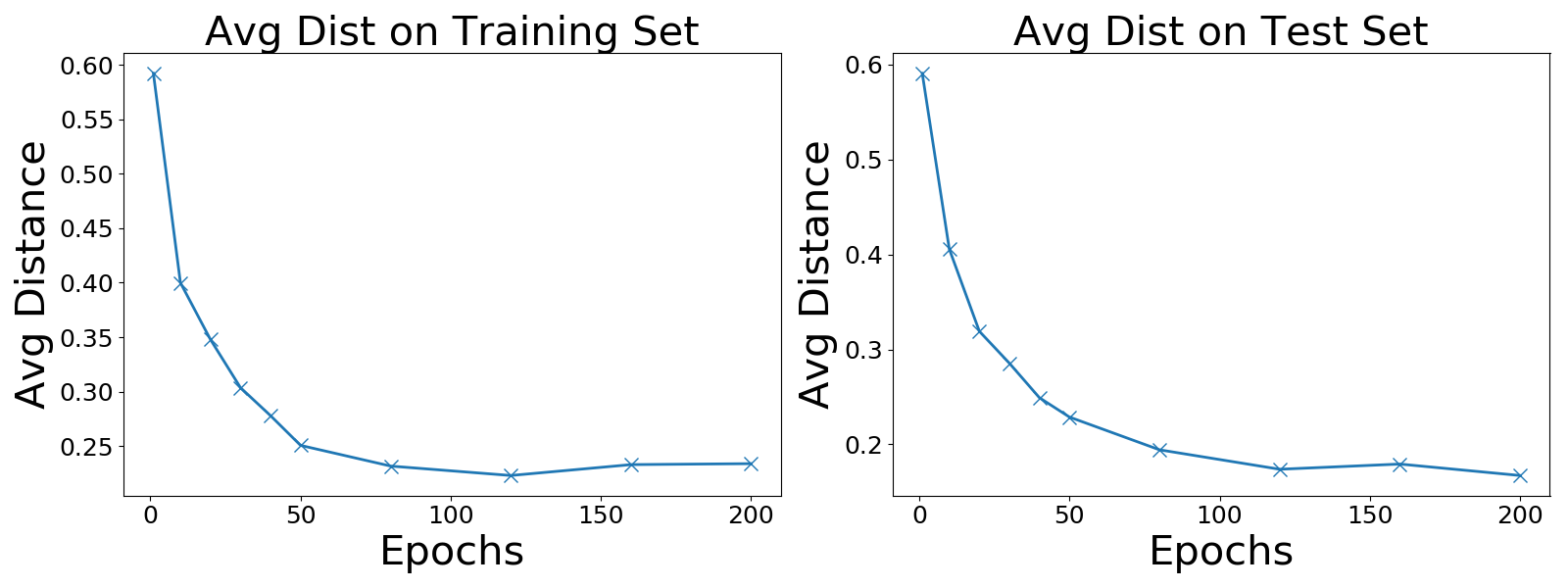}
\caption{Average distance during training of CIFARMLP}
\end{figure*}

\begin{figure*}[h!]
\centering
\includegraphics[width = 0.8\textwidth, height = 0.12\textheight]{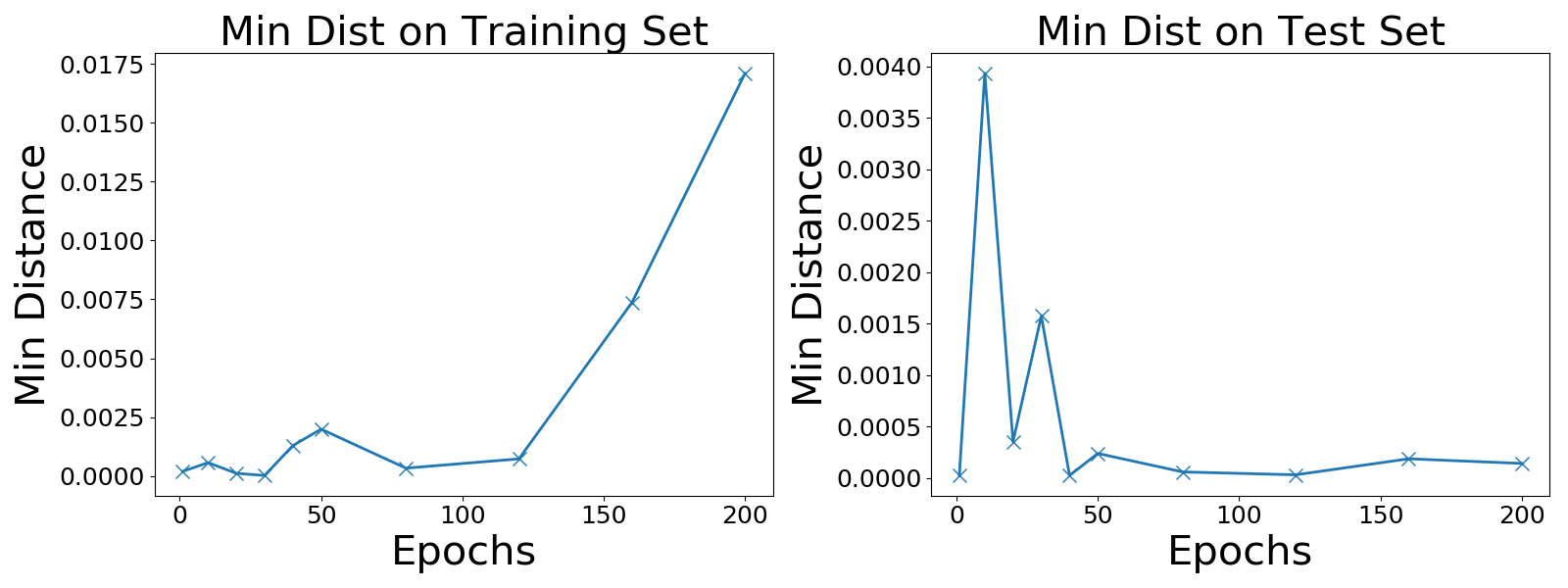}
\caption{Minimum distance during training of CIFARMLP}
\end{figure*}

\begin{figure*}[h!]
\centering
\includegraphics[width = 0.8\textwidth, height = 0.20\textheight]{CIFARMLP_train_hist}
\caption{Histogram of distance on training set during training of CIFARMLP}
\end{figure*}

\begin{figure*}[h!]
\centering
\includegraphics[width = 0.8\textwidth, height = 0.20\textheight]{CIFARMLP_test_hist}
\caption{Histogram of distance on test set during training of CIFARMLP}
\end{figure*}

\section{CIFAR-CNN}
\begin{figure*}[h!]
\centering
\includegraphics[width = 0.8\textwidth, height = 0.12\textheight]{CIFARCNN_loss_acc}
\caption{Training loss and error rate of CIFARCNN}
\end{figure*}

\begin{figure*}[h!]
\centering
\includegraphics[width = 0.8\textwidth, height = 0.12\textheight]{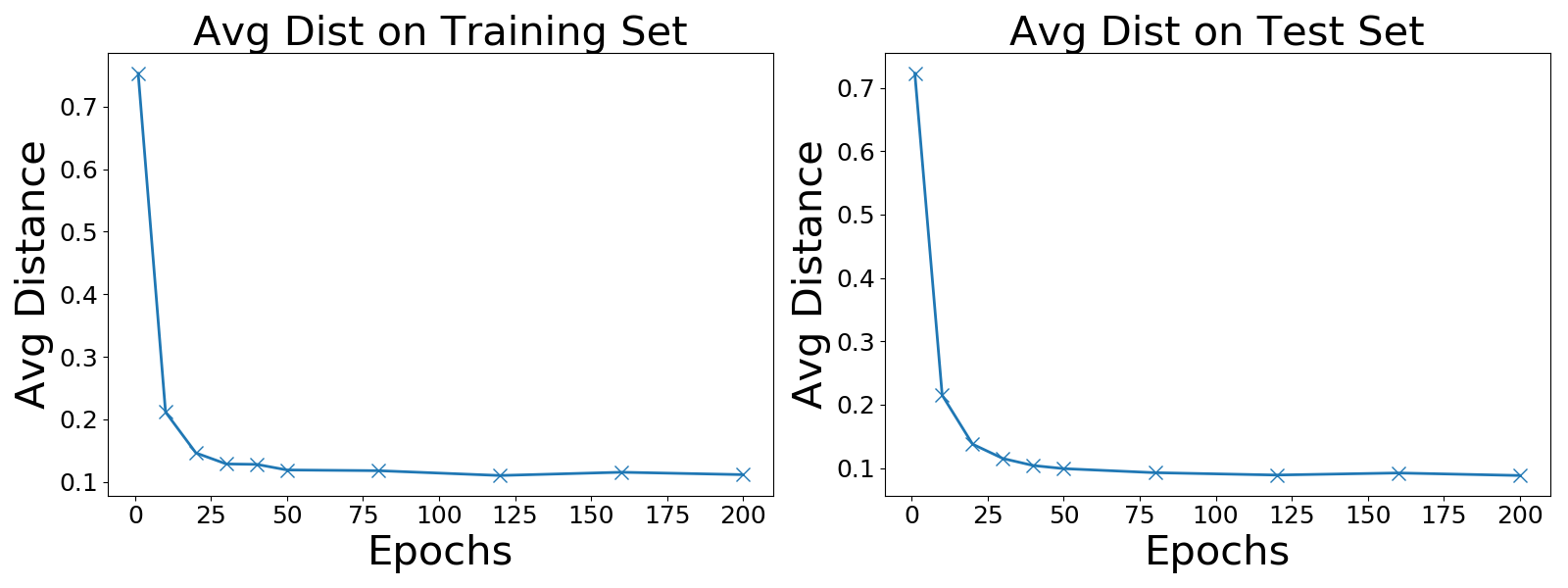}
\caption{Average distance during training of CIFARCNN}
\end{figure*}

\begin{figure*}[h!]
\centering
\includegraphics[width = 0.8\textwidth, height = 0.12\textheight]{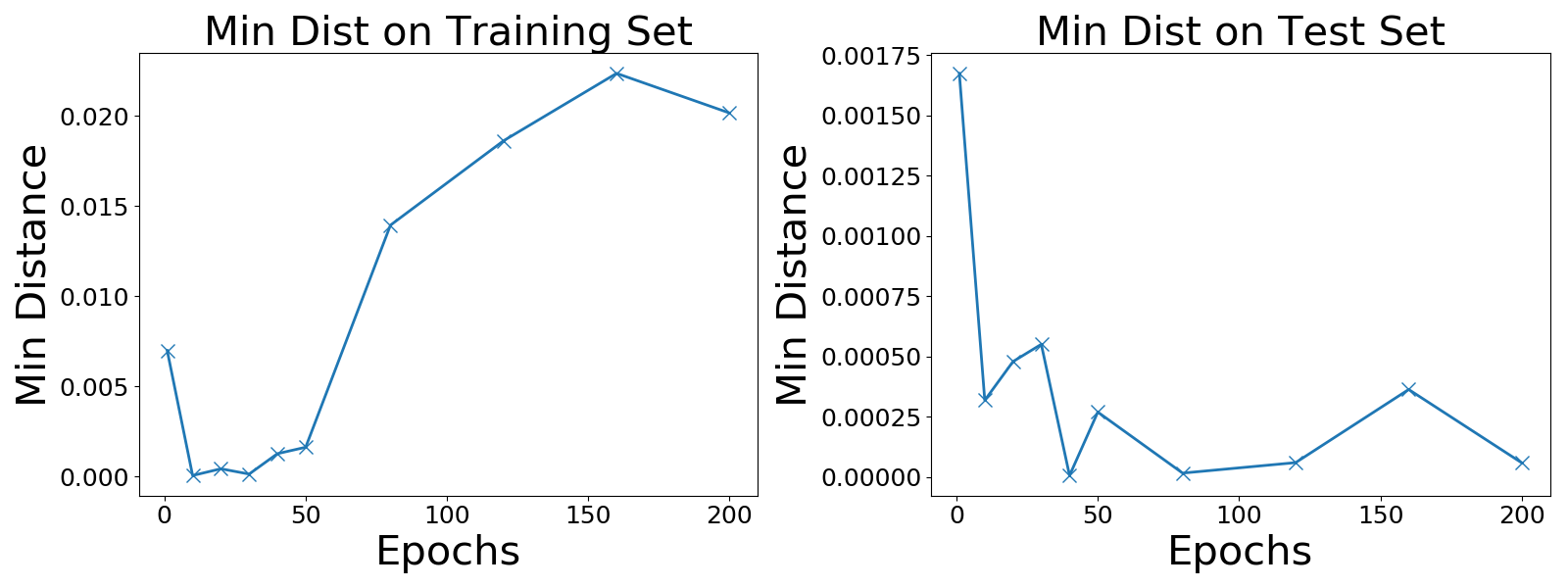}
\caption{Minimum distance during training of CIFARCNN}
\end{figure*}

\begin{figure*}[h!]
\centering
\includegraphics[width = 0.8\textwidth, height = 0.20\textheight]{CIFARCNN_train_hist}
\caption{Histogram of distance on training set during training of CIFARCNN}
\end{figure*}

\begin{figure*}[h!]
\centering
\includegraphics[width = 0.8\textwidth, height = 0.20\textheight]{CIFARCNN_test_hist}
\caption{Histogram of distance on test set during training of CIFARCNN}
\end{figure*}

\section{Visualization of Adversarial Examples}

\section{Discussion}
So far, the experiment result is promising. We can observe that average distance keeps decreasing during training for all models (except for MNIST-CNN). Plots of histogram of distance confirm that the majority of data is pushed towards desicion boundary.

However, unlike binary linear case, there are two points that are different:
\begin{itemize}
\item We did not oberserve the increase of robustness at the very beginning. Instead, average distance decrease monotonically. This could be a result of insufficient numeber of estimations.
\item The minimum distance keeps oscillating, unlike increasing minimum distance in binary logistic regression. This may be a result of nonseparable dataset. However, several nonlinear models indeed reach zero training error after certain number of epochs. This may also be a result of insufficient number of estimations, i.e. the minimum distance may continue to increase after $200$ epochs.
\end{itemize}

Again, for nonlinear classifiers, we only use the Lipschitz lower bound as a approximation of distance to decision boundary. This may be the reason for the unsatisfactory result for MNIST-CNN.

\fi

\end{document}